\newcommand{\X}{\mathcal{X}}
\newcommand{\M}{\mathcal{M}}
\newcommand{\x}{\mathbf{x}}
\newcommand{\y}{\mathbf{y}}
\newcommand{\yp}{\mathbf{y'}}
\newcommand{\yt}{\tilde{\mathbf{y}}}
\newcommand{\xp}{\mathbf{x'}}
\newcommand{\xpp}{\mathbf{x''}}
\newcommand{\ypp}{\mathbf{y''}}
\newcommand{\Layer}{\mathcal{L}}
\newcommand{\hL}{\hat{\mathcal{L}}}
\DeclareMathOperator{\dis}{d}
\newtheorem{ques}{Question}
\newtheorem{Prop}{Proposition}
\newtheorem{assumption}{Assumption}
\newtheorem{definition}{Definition}
\newtheorem{theorem}{Theorem}
\newtheorem{lemma}{Lemma}
\newtheorem*{prop*}{Proposition}
\newtheorem*{theorem*}{Theorem}
\newtheorem*{lemma*}{Lemma}
\newtheorem*{obs*}{Observation}
\newtheorem*{Model*}{Model assumption}
\title{CoreSPECT: Enhancing Clustering Algorithms via an Interplay of Density and Geometry}
\title{CoreSPECT: Enhancing Clustering Algorithms via an Interplay of Density and Geometry}
\author{
  Chandra Sekhar Mukherjee\footnotemark[1]~~\footnotemark[2]~,
  Joonyoung Bae\footnotemark[1]~~\footnotemark[3]~,
  Jiapeng Zhang\footnotemark[4]
  \ \\
  \\ 
  Thomas Lord Department of Computer Science 
  \\
  University of Southern California
}
\begin{document}

\renewcommand{\thefootnote}{\fnsymbol{footnote}}

\maketitle

\footnotetext[1]{Equal contribution.}
\footnotetext[2]{chandrasekhar.mukherjee07@gmail.com}
\footnotetext[3]{joonyoungbae.aaron@gmail.com}
\footnotetext[4]{jiapengz@usc.edu}

% Restore normal numbering (optional)
\renewcommand{\thefootnote}{\arabic{footnote}}

%\maketitle

\begin{abstract}
In this paper, we provide a novel perspective on the underlying structure of real-world data with ground-truth clusters via characterization of an abundantly observed yet often overlooked \emph{density–geometry} correlation, that  manifests itself as a multi-layered manifold structure.

We leverage this correlation to design CoreSPECT (Core Space Projection based Enhancement of Clustering Techniques), a general framework that improves the performance of generic clustering algorithms. Our framework boosts the performance of clustering algorithms by applying them to strategically selected regions, then extending the partial partition to a complete partition for the dataset using a novel neighborhood graph based multi-layer propagation procedure. 

We provide initial theoretical support of the functionality of our framework under the assumption of our model, and then provide large-scale real-world experiments on 19 datasets that include standard image datasets as well as genomics datasets. 

We observe two notable improvements. First, CoreSPECT improves the NMI of K-Means by $20\%$ on average, making it competitive to (and in some cases surpassing) the state-of-the-art manifold-based clustering algorithms, while being orders of magnitude faster.

Secondly, our framework boosts the NMI of HDBSCAN by more than $100\%$ on average, making it competitive to the state-of-the-art in several cases \emph{without requiring the true number of clusters and hyper-parameter tuning}. The overall ARI improvements are higher.

\end{abstract}

\section{Introduction}
Density and geometry have long served as two of the fundamental guiding principles in clustering algorithm design, with algorithms usually focusing either on the density structure of the data (e.g., HDBSCAN~\cite{hdbscanLibrary} and Density Peak Clustering~\cite{DensityPeak,ADPClust,PECANN}) or the complexity of underlying geometry (e.g., manifold clustering algorithms). These paradigms each have some benefits and disadvantages. 

While simple-geometry-based algorithms (such as K-Means~\cite{LLyod-Kmeans}) can be extremely fast, they can have suboptimal performance if the clusters have non-spherical shape. On the other hand, manifold clustering algorithms enjoy stronger theoretical guarantees~\cite{annalittle,trillos} and empirical performance~\cite{von2007tutorial,shinde2024geometric} in the presence of complex geometry, but can be prohibitively slow for large datasets. 

On the other hand, while density based clustering algorithms can handle more nonlinearity than K-Means while being faster than manifold clustering algorithms and can work without needing the true number of clusters in some case, they often suffer from sensitivity to noise, as well as choice of hyperparameters.

Contrasting with the typical focus on either the density or geometry of data, we observe that in many datasets, the geometric notion of cluster separability is closely correlated with the underlying density. This insight motivates the following contributions.

\begin{figure}
    \centering
    \includegraphics[width=0.9\linewidth]{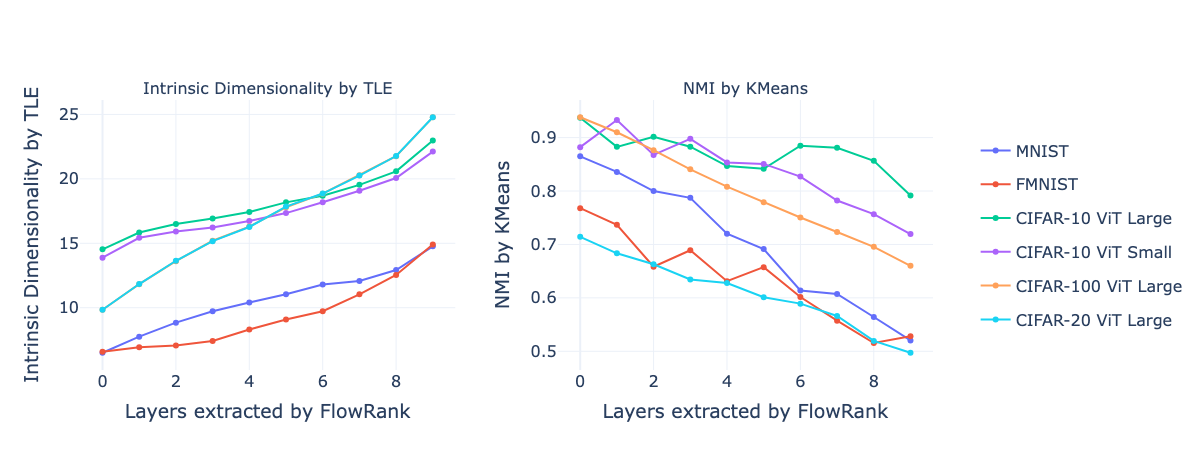}
    \caption{{\bf Increasing dimensionality} and {\bf Degrading K-Means performance} from inner to outer layers in image datasets. The layers are defined as deciles of points based on FlowRank score presented in Algorithm~\ref{alg: FlowRank}.}
    \label{fig:kmeans_evidence}
\end{figure}

\subsection{Contributions:}
\vspace{-0.5em}

\begin{enumerate}
    \item We observe a novel density-geometry correlation where the {\bf central, dense parts of the ground truth clusters are easily separable}, surrounded by lower-density regions that are both harder to separate and have more {\bf nonlinear manifold-like geometry}. 

    \item Using this observation we design a 4-step framework that is able to significantly boost the performance of clustering algorithms such as K-Means and HDBSCAN. 

    \item We test the performance of our algorithm on 19 large datasets, and observe that we make K-Means competitive with the state-of-the-art clustering algorithm, while being 50x faster, and also achieve very strong improvement in HDBSCAN in a \textbf{parameter-free} manner, both in performance as well as efficiency. %Notably, the performance of CS-HDBSCAN in the default settings significantly outperforms the tuned HDBSCAN performance both using the true number of clusters, and taking the best outcome across parameter search in a \emph{supervised} manner. We provide a result of this in Figure~\ref{}.
    
\end{enumerate}

To the best of our knowledge, an efficient clustering framework using aforementioned density-geometry correlations has not been studied in the literature before. Next, we proceed with a detailed description of our contributions.

\begin{enumerate}[wide, labelindent=3pt]
    \item {\bf A novel density-geometry correlation: density-driven multi-layered geometry} 
    
    We observe that in many datasets with ground truth clusters that are known to have non-linear, manifold like geometry, two complementary phenomena occur.

    \begin{itemize}[wide, labelindent=20pt]
        \item[a)] It is well known that standard image datasets like MNIST, Fashion-MNIST, and CIFAR-10/100 have a \emph{manifold geometry} with  \emph{low-intrinsic dimensionality}~\cite{manifold-hypothesis}. We observe that for \emph{all} of these datasets, the relatively denser region of the data have \emph{lower dimensionality compared to the whole data}, and as we consider the regions of lower density, the dimensionality goes up. We use the TLE algorithm~\cite{TLE_dim} that is known to capture intrinsic dimensionality under mild flatness conditions~\cite{dimensionality-survey2025} that have been also used in recent manifold clustering algorithm~\cite{shinde2024geometric}.

        \item[b)] The manifold-like geometry has motivated the design of several manifold-based clustering algorithms~\cite{von2007tutorial,trillos,annalittle,shinde2024geometric}. We observe that for \emph{all} of these datasets, the performance of K-Means on the relatively denser parts of the data (which we identify with our FlowRank algorithm) is significantly higher.
        On the other hand, as we include the sparser parts of the data, the performance of K-Means consistently degrades. 
    \end{itemize}

    These two phenomena are shown in Figure~\ref{fig:kmeans_evidence}. This presents an interplay between density, geometry, and separability that to the best of our knowledge has not been observed before. We obtain the same observation for the genomics dataset in the Supplementary Material. We call the dense regions core, and the sparser regions peripheries, and quantify this as the Layered Core Periphery based Density model,  $\sf{LCPDM}$ primarily focusing on the density-separability axis. The model is described in detail in Section~\ref{sec: model}, with more statistical evidence present in Appendix~\ref{app:stat}

\begin{figure}
    \centering
    \includegraphics[width=0.95\linewidth]{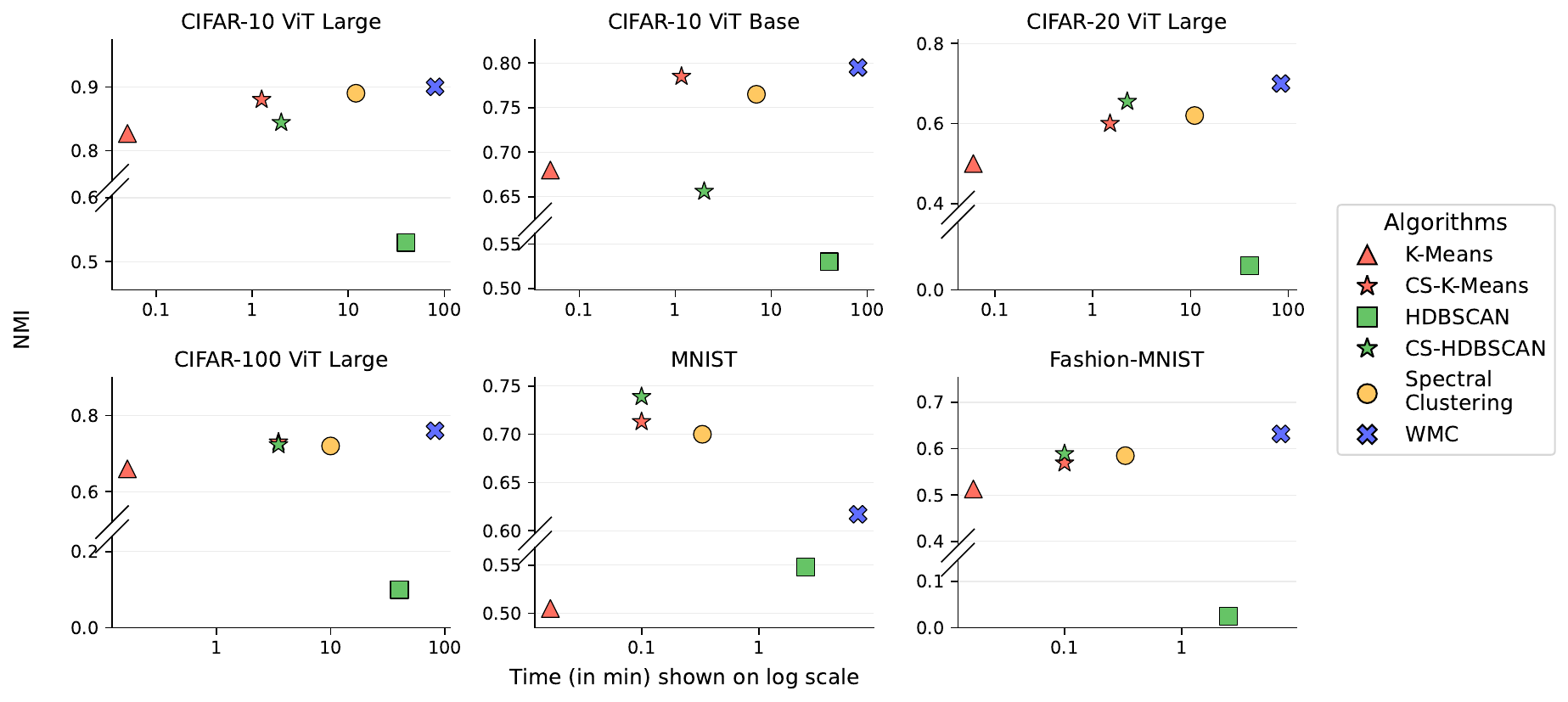}
    \caption{\small Performance vs. time plots of manifold clustering algorithms and CoreSPECT on image datasets.    
    We observe that both CS-K-Means and CS-HDBSCAN obtain similar performance to the highly efficient implementation of K-NN-based spectral clustering as well as the SOTA manifold clustering algorithm for CIFAR (WMC~\cite{shinde2024geometric}), while being significantly faster. Impressively, CoreSPECT boosts HDBSCAN to be competitive with the manifold clustering algorithms in several cases, without requiring knowledge of true number of clusters.
    }
    \label{fig:image-cifar}
\end{figure}

\begin{figure*}[t]
    \centering
    \includegraphics[width=0.8\linewidth]{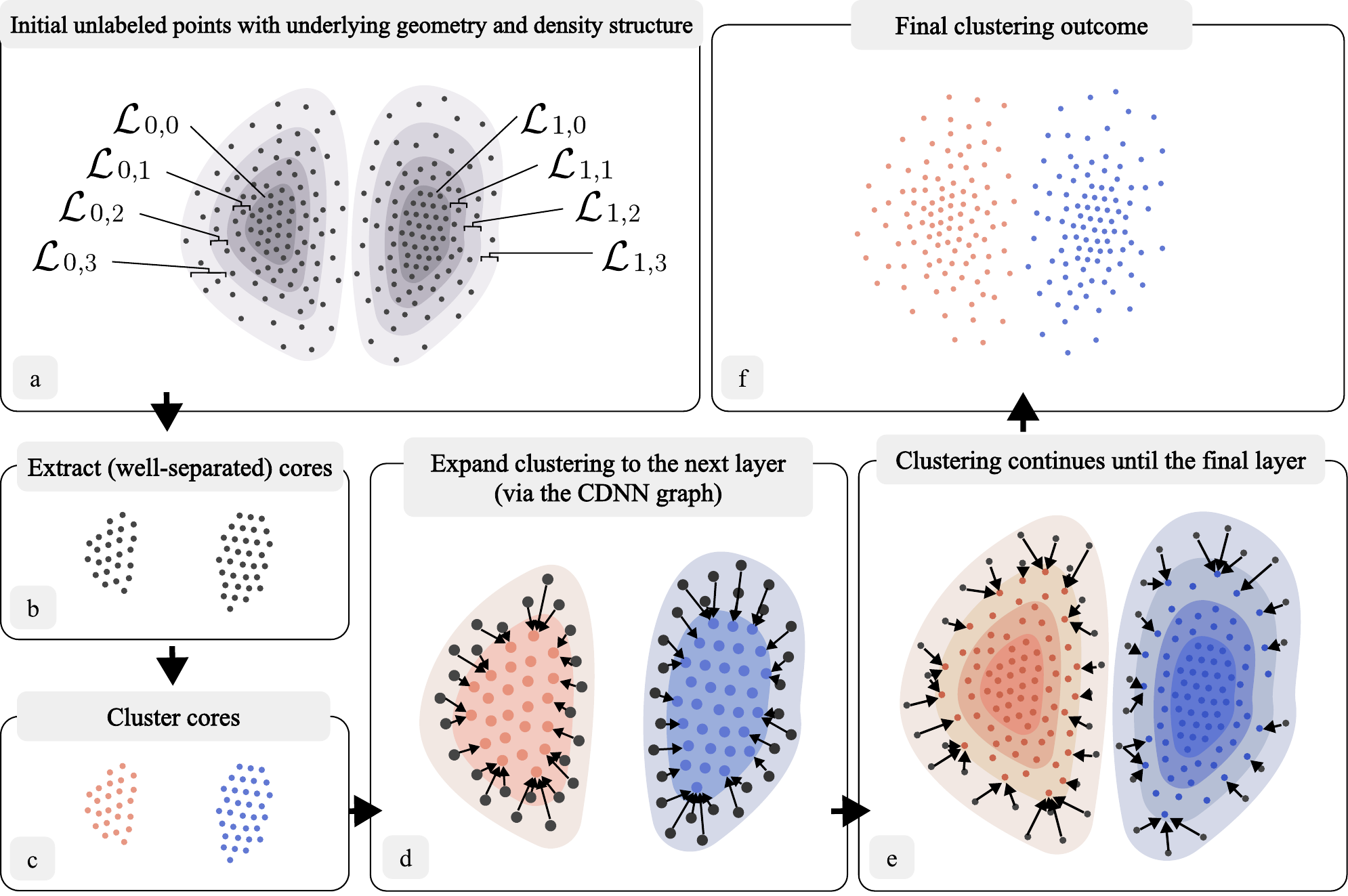}
    \caption{\small The underlying geometry-density structure in the data (a), and the step-by-step execution of the CoreSPECT framework. In step (b) we extract the cores. Then, in step (c) we cluster the cores with a simple algorithm. Steps (d) $\to$  (f) exhibit the layer-wise expansion of clustering using the CDNN graph.}
    \label{fig:expo}
\end{figure*}

    \item {\bf Generic clustering enhancement framework:} Using these insights, we design a generic \emph{$4$-step clustering enhancement framework (Figure~\ref{fig:CoreSPECT}}) targeted at datasets with underlying $\sf{LCPDM}$ model. In short,

    \subitem (a) We approximate the layers across the clusters with a notion of \emph{relative centrality} ~\cite{mcpc}, which we refine on to design a new algorithm FlowRank (Algorithm ~\ref{alg: FlowRank}).
    
    \subitem (b) We select the most central layers, which corresponds to the \emph{relatively highest density} regions of the data (which are easily separable as per our model). 

    \subitem (c) Cluster those parts of the data with a generic algorithm (such as K-Means or HDBSCAN).

    \subitem (d) Expand the clustering layer-by-layer using a novel layer-wise nearest-neighbor graph construction.  

    We provide a schematic run-through of the framework in Figure~\ref{fig:expo}.

    \item {\bf Experiments on large scale genomics and image datasets:}  

    Importantly, we observe that our observation applies to a large variety of important datasets. As evidence, we test our framework on {\it 19 datasets from three domains}  (single-cell RNA-seq, bulk-RNA seq, and popular image datasets), of {\it size ranging from $2,000$ to $50,000$ and dimension $50$ to $768$}, on both {\it geometry based algorithms such as K-Means} (that requires the true number of clusters) as well as {\it density based algorithms such as HDBSCAN} (that does not require the true number of clusters). We refer the algorithms as CoreSPECTED-K-Means (CS-K-Means) and CS-HDBSCAN.

    \begin{itemize}
        \item {\it Improving K-means:} Across the datasets, we improve the NMI of K-Means by $18\%$, with the median improvement being $25\%$. 

        \item {\it Improving HDBSCAN:}
        We improve the NMI of HDBSCAN with default parameters by {\bf over 100\%}, with the median improvement of {\bf 200\%}.

        \item {\it Improving run-time of HDBSCAN:} For large datasets ($\ge$ 50,000 points), our framework speeds up HDBSCAN by a factor of 5-10. 

        \item {\it Matching SOTA manifold clustering algorithms efficiently:} We make K-Means competitive to the performance of the SOTA manifold clustering algorithm ~\cite{shinde2024geometric} in NMI on the popular CIFAR datasets, while using default parameters. Our framework achieves this with 50x faster runtime.

    \end{itemize}

\end{enumerate}
We present the results on 6 image datasets in Figure~\ref{fig:image-cifar} and 8 genomics dataset in Figure~\ref{fig:finla-ari}. The rest of the results are present in the Appendix. In the rest of the paper, we expand on the  aforementioned contributions.

\begin{figure*}
\centering
\noindent\fbox{\begin{minipage}{\textwidth}
\begin{enumerate}
    \item {\bf Step 1: Core (and subsequent density layers) extraction (Figure~\ref{fig:expo} (a), (b)).}  We have described in our model that each cluster has some $\ell$ layers $\Layer_{i,j}, 0 \le j \le \ell-1$. We first want to obtain $\hL_{0}:= \cup_{i=1}^k \hat{\Layer}_{i,0}$, that is, approximately the union of cores of all the clusters. Crucially, we do this \emph{without} any clustering, but rather exploiting the density structure in the hierarchies. We describe this step in  Section~\ref{sec:framework-step-1}. We extend this to get an estimate of union of the outer density layers $\hL_{j},1 \le j <\ell$. \textbf{We use $S_j$ to denote our estimate of $\hL_j$}.

    \item {\bf Step 2: Clustering the core (Figure~\ref{fig:expo} (c)).} In this step, the cores ($S_0$) are clustered with a simple algorithm of user's choice. In our explanations and dry run, we use K-Means. We describe the rationale behind this in Section~\ref{sec:framework-step-2}.

    \item {\bf Step 3: Core directed nearest neighbor (CDNN) graph construction (Figure~\ref{fig:expo} (d)).} Next, we design a  novel layer-by-layer nearest neighbor graph structure using the estimated density layers $S_j$ and show that it captures the underlying cluster structure of the data better than Euclidean distance. This is described in Section~\ref{sec:framework-step-3}.

    \item {\bf Step 4: Layer-wise expansion of clustering (Figure~\ref{fig:expo} (d)$\to$(f)).} Finally, we use the clustering of the core obtained in Step 2 and the graph in Step 3 to cluster each of the layers in the hierarchy in time \emph{Linear in the number of points, clusters, and nearest neighbors to check}. This is described in Section~\ref{sec:framework-step-4}.

\end{enumerate}
\end{minipage}}
%\begin{center}
   \caption{ The CoreSPECT Framework (See Figure~\ref{fig:expo} for a schematic representation and Algorithm~\ref{alg:CS-Kmeans} for its application to K-Means)}
   \label{fig:CoreSPECT}
%\end{center} 
\end{figure*}

\section{Model formulation and the CoreSPECT framework}
\label{sec: model}

In this Section we define our model assumptions on a high level. We require certain technical assumptions to prove our initial theoretical guarantees, which are presented in detail in Section~\ref{app:theory}, along with all the proofs. 

We assume points of each ground-truth cluster $i$ is being generated from some $\X_i \subseteq \mathbb{R}^d$, where all the points lie on some $m$-dimensional smooth manifold $\M$. We make the following assumptions on the geometry-density interaction, motivated by our observations in Figure~\ref{fig:kmeans_evidence}.

{\bf Concentric subspaces} The fundamental assumption we make is that $\X_i$ can be expressed as a hierarchy of concentric subspaces $\X_{i,0} \subset \X_{i,1} \subset \cdots \X_{i,\ell-2} \subset \X_{i,\ell-1} =\X_i$ We call the difference between the $j$ and $j+1$-th subspace as the $j$-th layer $\Layer_{i,j}$, defined as
\[
\Layer_{i,0}:=\X_{i,0}, \quad\quad     \Layer_{i,j}:= \X_{i,j} \setminus \X_{i,j-1}, \ 1 \le j <\ell
\]
Here we assume that each $\Layer_{i,j}$ are smooth and have a minimum and maximum depth in any direction. We call the $\Layer_{i,0}$ as \emph{cores} and the outer layers as more peripheral. We assume each layer is connected and has finite volume in a well-defined measure. We make the following assumptions, that we collectively term as the Layered-core-periphery-density-model ($\sf LCPDM$). 

{\bf The Layered-core-periphery-density-model ${\sf LCPDM}(k,\ell)$:}

\begin{enumerate}[wide, labelindent=5pt]
    \item {\it Cores are dense and peripheries are sparse:} The data is generated by sampling $n_{i,j}$ points from each layer $\Layer_{i,j}$ uniformly at random, such that there exists a  constant $C>1$ satisfying 
    \[\frac{n_{i,j}}{Vol(\Layer_{i,j})} > C \cdot \frac{n_{i,j+1}}{Vol(\Layer_{i,j+1})}
    \] 
    That is, for each cluster, the inner-most layer (the core) is the densest, and the density of the outer layer monotonically go down. We present this schematically in Figure~\ref{fig:expo} (a). We denote the points generated from $\Layer_{i,j}$ as $\hL_{i,j}$.

    \item {\it Cores are well-separated.} The first assumption on the geometry dictates that the core-layers of the clusters are well separated in an Euclidean sense. For any clusters $i,i'$
    \[
    \exists \mu_{i,i'}<0.5 \;\; \text{s.t.}\;\;
\max_{\substack{\x \in \hL_{i,0} \\ \xp \in \hL_{i,0}}} \|\x-\xp\|
\;\le\;
\mu_{i,i'} \cdot
\min_{\substack{\x \in \hL_{i,0} \\ \xp \in \hL_{i',0}}} \|\x-\xp\|
    \]

    When there are only two clusters in the data, we define $\mu := \mu_{0,1}$.

    \item {\it Layer-wise clustering membership alignment.}
    There exists $\delta<1$ such that for any $\mathbf{x} \in \Layer_{i,j}$ and any $i' \ne i$,
    $
    \min_{\mathbf{x'} \in \Layer_{i,j-1}} \|\mathbf{x}-\mathbf{x'}\| \le \delta \cdot \min_{\mathbf{x''} \in \Layer_{i',j-1} } \|\mathbf{x}-\mathbf{x''}\|
    $. 
    
\end{enumerate}

That is, any point in a cluster is closer to the boundary of the inner-layer of the same cluster than points in inner-layer of another cluster. Additionally, we do not make any assumptions on the geometry and proximity of points in the outermost layers.

%Furthermore, we assume that once these points are sampled following the aforementioned assumptions, they may also be perturbed by zero mean $\epsilon$-norm Gaussian noise. This results in a structure where geometrically, the outer layers may have increasingly complex shape, of which we make no assumption, beyond the interaction of the points with neighboring layers. Now, we move towards algorithm design with the aforementioned assumptions in mind.

\subsection{The CoreSPECT framework}
\label{sec: core-spect}
We first give a brief outline of our framework in Figure~\ref{fig:CoreSPECT}.  

%We stress that the specific algorithms for each of the step in our framework can be further improved as long as they satisfy certain properties that we describe. For demonstration of the framework, we will use two hard-to-separate clusters (0,2) of the popular {\sf Fashion-MNIST} dataset for which K-Means has a poor performance. We make certain curvature assumptions on the inner layer (but not the outer-most layer) for proving our theorems that we explain in detail in the supplementary material. 

%\begin{obs*}[Fashion-MNIST$\{0,2\}$: K-Means]
%The K-Means classification accuracy on the clusters $0$ and $2$ is $\approx0.51$, where an accuracy of $0.5$ indicates a random partition.
%\end{obs*}

\subsubsection{\bf Step 1: Extracting the cores (and the subsequent layers)}
\label{sec:framework-step-1}

We first recover the density layers, estimating the core $\hL_0$ with $S_0$ that is well-separated in an Euclidean sense, and also the subsequent layers.

%Our definition dictates that the densest parts of the communities are the most well-separated. Then, if we are able to (approximately) identify the cores , i.e., $\cup_{i=1}^k \X_{i,0}$, these parts would be well clusterable. We aim to do something more extensive. We aim to get a data points of the vertices with the following properties. 

\begin{definition}[Layer-preserving ranking]
\label{def: layer-preserving}
Given a dataset $X$ generated from the ${\sf LCPDM}(k,\ell)$ model, we say a ranking of the points in $X$ is layer-preserving if for any cluster $\X_{i}$, 
if $\x \in \hat{\Layer}_{i,j}$ and $\xp \in \hat{\Layer}_{i,j'}$ where $j<j'$, then $\x$ is ranked above $x'$. 
\end{definition}

Our ranking algorithm is motivated by the \emph{relative centrality framework} of \cite{mcpc}, designed to obtain the central parts of \emph{each community}. In contrast, we aim to find the \emph{densest regions of each cluster}. In this direction, we obtain a q-NN graph $G_{q,X}$ and obtain the distribution of $\log (n)$ step random walk, denoted as $\Pi(G_{q,X})$, mimicking \emph{initial centrality} of \cite{mcpc}. However, we note (and explore in the appendix) that their notion of relative centrality does not capture the density hierarchies in the underlying data. To circumvent this, we define the concept of \emph{relative density}.

{\bf Relative density estimation.} \quad Given a dataset $X$ with density estimation $\Pi: V \rightarrow [0,1]$ for each data point, we look at its $r$-nearest neighbors, and randomly move to one of the neighbors that have a higher $\Pi$  value, continuing the process until we reach a maxima, as described in Algorithm~\ref{alg:RAWR}.

Then, our core-ranking algorithm,  \emph{FlowRank} (Algorithm~\ref{alg: FlowRank}) calculates the average between the density of a point, and that of the average maxima reached by randomly ascending random walks starting from that point. We provide the following guarantee for the ranking, with the proof (and the proofs of the following theorems) present in the Appendix.

\vspace{1em}
\begin{minipage}{0.48\linewidth}%[width=0.5\linewidth]
\begin{algorithm}[H]
   \caption{${\sf RARW}(X,\Pi,r,i)$ \\(Randomly Ascending Random Walk)}
   
   \label{alg:RAWR}
\begin{algorithmic}
   \STATE {\bfseries Input:} $X$, $\Pi: X \rightarrow [0,1]$ and index $i$.

    \STATE Let $N_r(\mathbf{x})$ be the $r$ closest points to $\mathbf{x}$.

    \STATE Start: $\mathbf{x}^+ \gets \mathbf{x_i}$
    \WHILE{ $\Pi(\mathbf{x}^+) < \underset{\mathbf{x'} \in N_r(\mathbf{x}^+)}{\max} \Pi(\mathbf{x'})$}
    \STATE $N^+ \gets \mathbf{x'} \in N_r(\mathbf{x}^+) : \Pi(\mathbf{x'})>\Pi(\mathbf{x}^+)$ 
    \STATE Randomly select a point $\mathbf{x''}$ from $N^+$
    \STATE $\mathbf{x}^+ \gets \mathbf{x''}$
    
    \ENDWHILE

    \RETURN $\Pi(\mathbf{x}^+)$
\end{algorithmic}
\end{algorithm}
\end{minipage}
\begin{minipage}{0.48\linewidth}
\vspace{-0.8em}
\begin{algorithm}[H]
   \caption{FlowRank: ${\sf FR}(X,\Pi,q,r)$}
   \label{alg: FlowRank}
\begin{algorithmic}
   \STATE {\bfseries Input:} $X$, a $n \times d$ dataset, neighborhood parameter $r$,
   density vector $\Pi$. 

    \STATE

    \STATE (We obtain the density estimation through a $\log n $-step random walk simulation on $G_{q,X}$.)

    \STATE
    
    \FOR{i in 1:n}

    \STATE $z_i \gets  \mathbb{E} [{\sf RARW}(X,\Pi,r,i)] $ \COMMENT{Algorithm~\ref{alg:RAWR}}

    \STATE $score[i] \gets  \frac{\Pi_i}{z_i}$

    \ENDFOR
    
\end{algorithmic}
\end{algorithm}
\end{minipage}

%Then, our cor`f{alg: FlowRank} is defined as follows. We start by obtaining an estimate of the density from the random walk distribution. Then, from each node $i$, we obtain an estimate of the expected score achieved by running Algorithm~\ref{alg:RAWR} several times. 

\begin{theorem}[Core-detection by FlowRank]
\label{thm: FlowRank}
Let data be generated from the ${\sf LCPDM}(2,\ell)$ model. Let $\Pi$ be the density of the space. Then, for some $r=\mathcal{O}(1)$, on expectation, all the core points ($\hL_0$) get a score of $1$. Additionally, all non-core points get a score $<1$.
\end{theorem}

The proof can be found in Appendix ~\ref{sec:proof-FlowRank}. Here we note that we make certain extra assumptions on the curvature of the manifold for theoretical completeness, which we note down in Appendix~\ref{app:detailed-model}.

In practice, 
%we do not have an exact density estimation, and real-data will not follow the exact structure proposed by us. Nonetheless, 
we observe selecting top $10\%$ ranked points as the core works well, as it consists of points from each underlying cluster and we will observe the top-points are also very separable. Additionally, we define the next 10\% blocks as our proxy for the layers $\hL_j$. Here we note that in different datasets, the number of layers may vary, however, if the ranking is layer preserving, our partition either breaks a single layer into multiple ones, or only merges multiple \emph{consecutive} layers into one. Obtaining a better approximation of $\hL_j$ is an interesting direction towards further improving our framework.

%This is motivated by the layer-preserving structure of Definition~\ref{def: layer-preserving}. W, which we shall use in the next step.

%\begin{obs*}[Fashion-MNIST$\{0,2\}$: Core extraction]
%The top $10\%$ ranked points obtained by FlowRank contain an almost equal number of points from each cluster.
%\end{obs*}

\subsubsection{Step 2: Applying the clustering algorithm to the core}
\label{sec:framework-step-2}

Here, we formally (as well as experimentally) capture the separability of the cores by K-Means.

\begin{Prop}
\label{prop:k-means-separate}
Let $n_0$ and $n_1$ points ($n_0>n_1$ WLOG) be sampled from  $\X_{0,0}$ and $\X_{1,0}$ respectively such that $\mu \cdot n_0 \le n_1$. These points are denoted $\hL_{0,0}$ and $\hL_{1,0}$. Consider the slightly modified K-Means algorithm. Obtain two centers using the K-Means++ method and run one-step K-Means. Repeat this process some $2 \log n$ times and accept the result with minimum K-Means objective value. This algorithm separates the two cores correctly with probability $1-o(1)$.
\end{Prop}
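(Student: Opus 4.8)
The plan is to split the claim into three parts: (i) with probability bounded below by a constant, the two K-Means++ centres land in different cores; (ii) from such a seeding one Lloyd iteration outputs exactly the partition $(\hL_{0,0},\hL_{1,0})$; and (iii) this partition is the strictly optimal $2$-means clustering of $\hL_{0,0}\cup\hL_{1,0}$, so the minimum-objective rule returns it as soon as it appears among the $2\log n$ restarts. Write $C_0=\hL_{0,0}$, $C_1=\hL_{1,0}$, $n=n_0+n_1$, let $d=\min_{\x\in C_0,\xp\in C_1}\|\x-\xp\|$, and let $D_0,D_1$ be the diameters of $C_0,C_1$; the well-separation axiom of ${\sf LCPDM}$ gives $D_0,D_1\le\mu d$ with $\mu<\tfrac12$, and by hypothesis $\mu n_0\le n_1\le n_0$. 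Everything below is deterministic in $C_0,C_1$; the only randomness is in the seeding.

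Part (ii) is immediate: if the centres are $c\in C_0$, $c'\in C_1$, then every $\x\in C_0$ satisfies $\|\x-c\|\le D_0\le\mu d<d\le\|\x-c'\|$ and is assigned to $c$, and symmetrically every point of $C_1$ goes to $c'$; recomputing centroids leaves the partition unchanged, since a point of $C_i$ is within $D_i\le\mu d$ of $\overline{C_i}$ but at distance $\ge d-D_{1-i}\ge d(1-\mu)>\mu d$ from $\overline{C_{1-i}}$ --- this last inequality is exactly where $\mu<\tfrac12$ enters. For part (i): the first centre lies in $C_0$ (resp.\ $C_1$) with probability $n_0/n$ (resp.\ $n_1/n$). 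Conditioning on $c\in C_0$, in the squared-distance sampling every point of $C_1$ has weight $\ge d^2$, every point of $C_0$ has weight $\le D_0^2\le\mu^2 d^2$, and every point of $C_1$ has weight $\le(d+D_0+D_1)^2\le(1+2\mu)^2 d^2$, so the second centre falls in $C_1$ with probability at least $\frac{n_1}{n_0\mu^2+n_1(1+2\mu)^2}$; using $n_1\ge\mu n_0$ and monotonicity in $n_1$, this is $\ge\bigl(\mu+(1+2\mu)^2\bigr)^{-1}\ge\tfrac29$. The case $c\in C_1$ is symmetric (using $n_1\le n_0$), so the two centres land in different cores with probability at least some constant $p_0>0$ depending only on $\mu$. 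Over the $2\log n$ independent restarts, the probability that none of them is such a seeding is $(1-p_0)^{2\log n}=n^{-\Omega(1)}=o(1)$, and by part (ii) every such restart outputs $(C_0,C_1)$.

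Part (iii) is the heart of the argument, and I would establish it with the parallel-axis identity $\mathrm{cost}(S)=\mathrm{cost}(S')+\mathrm{cost}(S'')+\tfrac{|S'|\,|S''|}{|S|}\|\overline{S'}-\overline{S''}\|^2$ (for a partition $S=S'\sqcup S''$), applied to each candidate cluster split along core membership and to each core split along the candidate partition. Since the optimal $2$-means partition is induced by a hyperplane, it is enough to beat every partition that cuts one or both cores; any such partition has a cluster containing points of both cores, hence contributes a strictly positive ``between-core'' term $\tfrac{st}{s+t}\|\overline{C_0^{+}}-\overline{C_1^{+}}\|^2$ with centroid distance of order $d$, whereas the only quantities it can save are ``within-core split'' terms $\tfrac{s(n_0-s)}{n_0}\|\overline{C_0^{+}}-\overline{C_0^{-}}\|^2\le\mu^2 d^2\cdot(\text{count})$ and their $C_1$ analogue. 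Because a hyperplane cannot carve a proper piece off one core while leaving the other intact unless it also keeps a comparable mass of the first core with it, the count multiplying the between-core term is at least a constant (depending on $\mu$ through $\mu n_0\le n_1\le n_0$) times the within-core counts, and for $\mu<\tfrac12$ the leading constants satisfy $(1-O(\mu))^2>\mu^2$; hence the net cost strictly exceeds that of $(C_0,C_1)$. Granting (iii), $(C_0,C_1)$ is the unique $2$-means optimum, it appears among the restarts with probability $1-o(1)$, and so it is accepted --- which proves the proposition.

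The step I expect to be the main obstacle is the quantitative form of part (iii): making the inter-core centroid distance bound tight enough to reach the stated threshold $\mu<\tfrac12$ --- the crude bound $\|\overline{C_0^{+}}-\overline{C_1}\|\ge d(1-2\mu)$ only suffices for $\mu$ well below $\tfrac12$ --- and checking the count comparison uniformly over all hyperplane cuts. This is precisely where the curvature/regularity assumptions on the cores are needed: they keep the centroid of each core away from its boundary, so that a subset-centroid of one core stays far from the centroid of the other. It is also the only place where the full strength of $\mu<\tfrac12$ together with the balance condition $\mu n_0\le n_1$ is used.
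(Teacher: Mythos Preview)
Parts (i) and (ii) are fine and essentially match the paper. The divergence is Part (iii): you try to show that $(C_0,C_1)$ is the \emph{global} $2$-means optimum, and you correctly flag that the parallel-axis bookkeeping does not close for $\mu$ near $\tfrac12$ with only the crude subset-centroid separation bound $\alpha(1-2\mu)$.

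The paper avoids this obstacle by not proving global optimality at all. Rather than comparing $(C_0,C_1)$ against every partition, it compares only against the outcomes the algorithm can actually produce, and those are indexed by the seeding. When both seeds lie in the same core, say $C_0$, every point of $C_1$ is at distance $\ge\alpha:=d$ from \emph{both} seeds, so the cost with respect to the seeded centers is at least $\alpha^2 n_1$ (and symmetrically $\ge\alpha^2 n_0\ge\alpha^2 n_1$ if both seeds land in $C_1$). When the seeds are split between the cores, every point is within $\mu\alpha$ of its own seed, so the cost is at most $(\mu\alpha)^2(n_0+n_1)$. Then $\mu n_0\le n_1$ and $\mu<\tfrac12$ give
\[
(\mu\alpha)^2(n_0+n_1)\;=\;\mu\alpha^2(\mu n_0)+\mu^2\alpha^2 n_1\;\le\;(\mu+\mu^2)\,\alpha^2 n_1\;<\;\alpha^2 n_1,
\]
so every good-seed run strictly beats every bad-seed run in objective value, and the minimum-objective selection rule returns a good one whenever one appears among the $2\log n$ restarts. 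No subset-centroid estimates, no curvature or regularity hypotheses on the cores --- the entire comparison is done with the seeded centers themselves.

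What your route would buy, if completed, is insensitivity to whether the reported objective is taken at the seeds or at the recomputed centroids after a full Lloyd step; the paper's comparison is written for the former. But as you note, pushing your argument to the full range $\mu<\tfrac12$ would require a sharper inter-centroid lower bound than you currently have, and the proposition as stated does not supply the extra structure you invoke to get it.
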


The proof can be found at the restated proposition 1 in Appendix ~\ref{sec:prop_one}
%\begin{obs*}[Fashion-MNIST$\{0,2\}$: K-Means on core]
%The clustering accuracy of K-Means on these points is $\mathbf{0.87}$, up from an initial $\mathbf{0.51}$.
%\end{obs*}

\begin{figure*}
\includegraphics[width=1.01\textwidth]{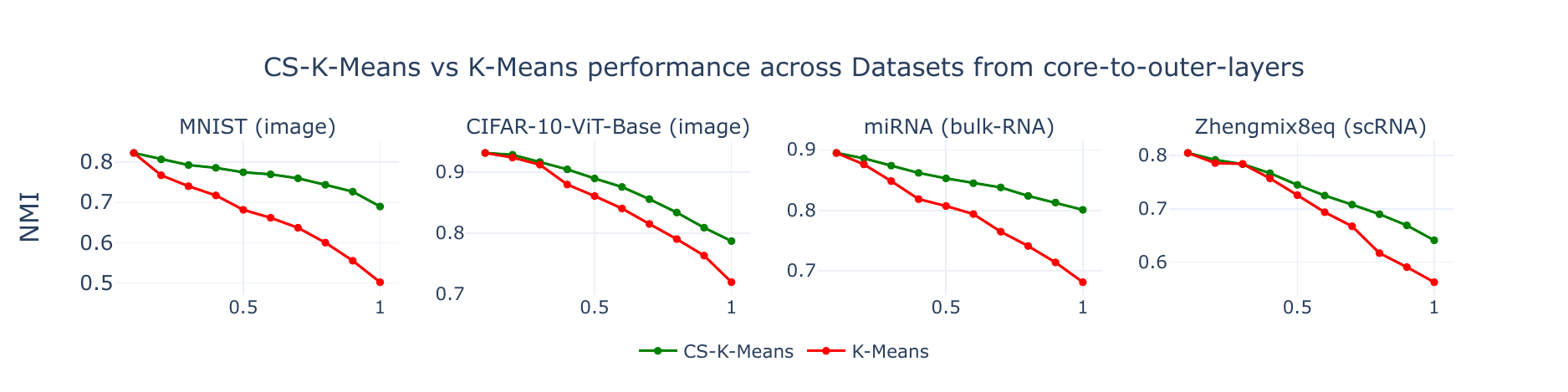}
  \caption{Comparing the accuracy of applying K-Means on the top $x$-fraction of the points (according to FlowRank) vs. applying K-Means to top $10\%$ and then applying Layer-wise expansion (Algorithm~\ref{alg:prop}) up to top $x$-fraction, for $x \in [0.1, 0.2, \ldots, 1]$ for some pairs of clusters. For $x=0.1$ (the cores), K-Means has very good performance, but as we apply K-Means on the outer layers, its performance deteriorates. In contrast, the layer-wise propagation leads to significantly lower decay, leading to improvement in the overall performance.} 
  \label{fig:side-by-side}
\end{figure*}

\begin{comment}
\begin{figure*}[t]
  \centering
    \subfigure[Fashion-MNIST (image)]{
    \includegraphics[width=0.305\textwidth]{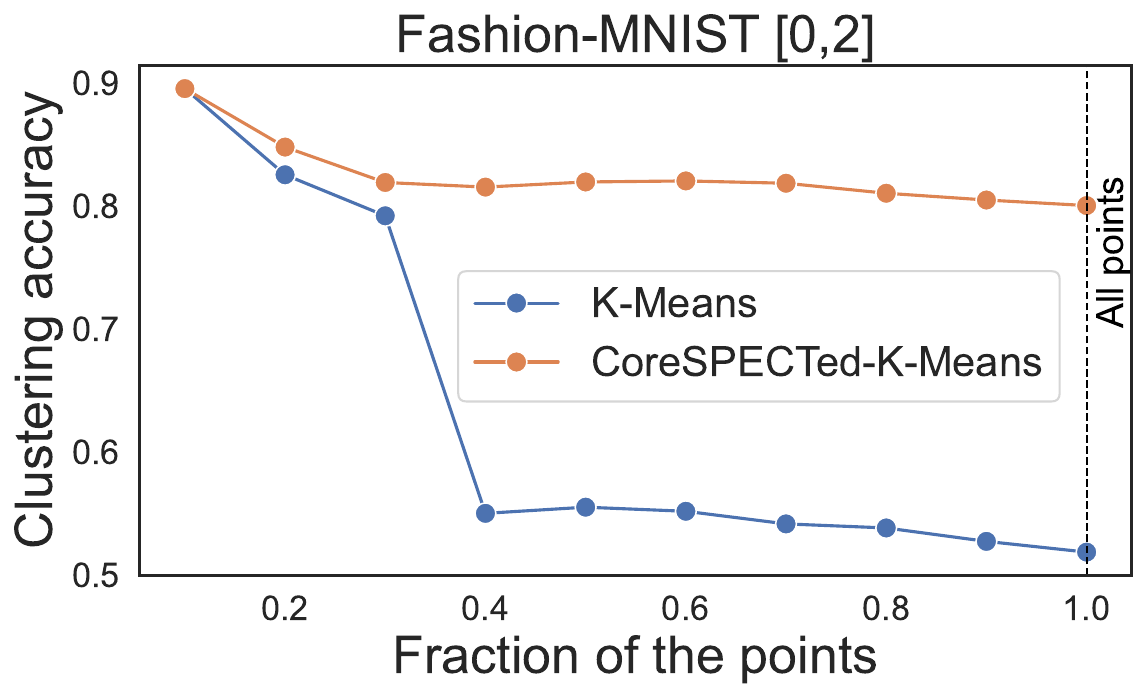}
    \label{fig:layered-fm}
    }
    %\caption{}
  %\end{subfigure}
  \subfigure[mRNA (TCGA) (bulkRNA)]{
    \includegraphics[width=0.305\textwidth]{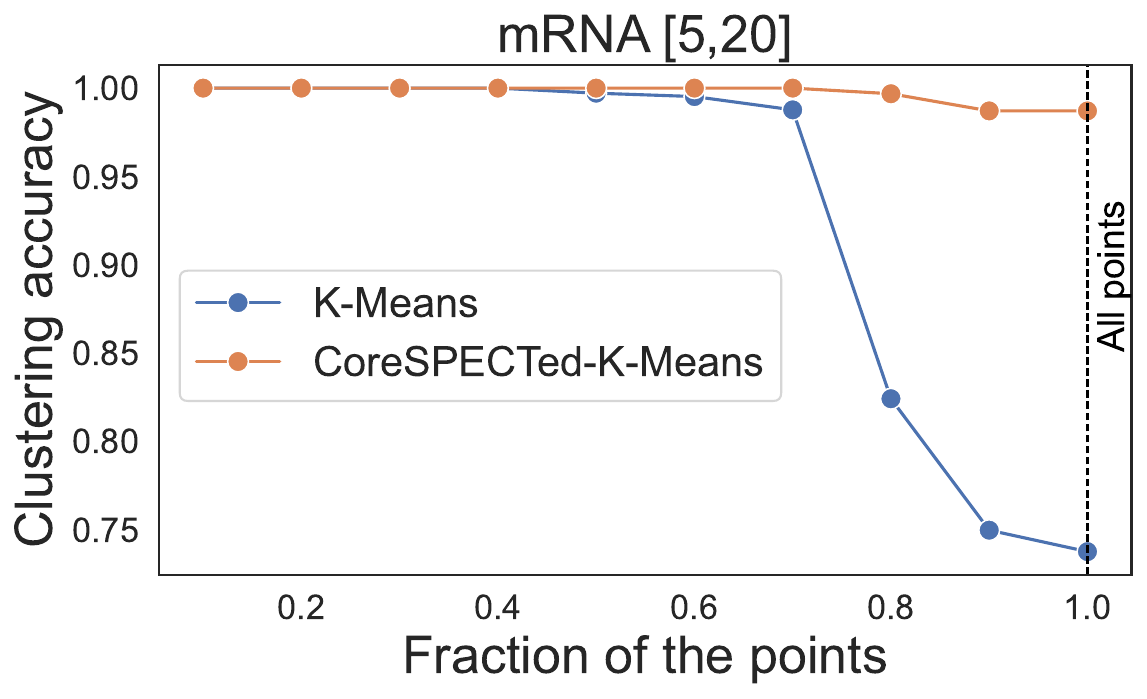}
    \label{fig:layered-mrna}
  }
  \subfigure[Zhengmix8eq (scRNA)]{
    \includegraphics[width=0.305\textwidth]{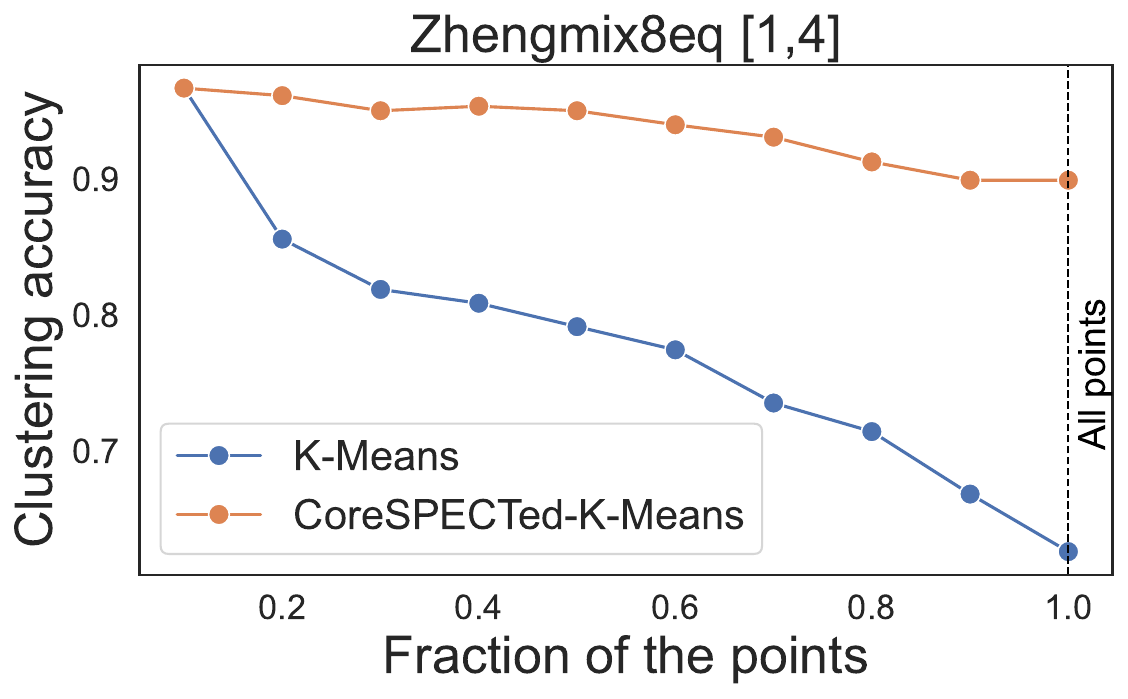}
    \label{fig:layered-zh}}
  %\hfill
  %\hfill

  \caption{Comparing the accuracy of applying K-Means on the top $x$-fraction of the points (according to FlowRank) vs. applying K-Means to top $10\%$ and then applying Layer-wise expansion (Algorithm~\ref{alg:prop}) up to top $x$-fraction, for $x \in [0.1, 0.2, \ldots, 1]$ for some pairs of clusters. For $x=0.1$ (the cores), K-Means has very good performance, but as we apply K-Means on the outer layers, its performance deteriorates. In contrast, the layer-wise propagation gives significantly better response.} 
  \label{fig:side-by-side}
\end{figure*}
    
\end{comment}

\begin{algorithm}[ht]
   \caption{Layer-wise-expansion of clustering: {\sf Expansion}($S,G^+,W,\mathcal{C}$)}
   \label{alg:prop}
\begin{algorithmic}
   \STATE {\bfseries Inputs:} 
    The CDNN graph $G^+_{t,S}$ (with weight matrix $W$), layers $S$, cluster membership vector $\mathcal{C}: S_0 \rightarrow \mathbb{R}^k$.
   
    \STATE Initiate clustering: $V_1,\ldots ,V_k$ : $\forall \mathbf{x} \in S_0, \mathbf{x} \in V_{t}$
    
    \STATE \hspace*{10em} $\text{where } t:= \arg\min_{i} \mathcal{C}[\mathbf{x}]_i$.

    \STATE Define data structure $\hat{C}$:  $\forall \mathbf{x} \in S_0, \hat{C}(\mathbf{x}) \gets \mathcal{C}(\mathbf{x})$
  
    \FOR {$j \in 1:\ell$}

    \FOR {$\mathbf{u} \in S_{j}$}

    \STATE $\hat{C}(\mathbf{u}) \gets \underset{v \in N_{G^+}}{\sum} W(\mathbf{u},\mathbf{v}) \cdot \hat{C}(\mathbf{v}) $;
     \\ $k_u \gets {\sf arg}\min \hat{C}(\mathbf{u})$;
     \quad $V_{k_u} \gets V_{k_u} \cup \mathbf{u}$
    
    \ENDFOR

    \ENDFOR 

    \RETURN Clustering $V_1, \ldots ,V_{k}$
   
\end{algorithmic}
\end{algorithm}

\paragraph{Non-linearity in the outer layers:}
While the cores are separable by K-Means, we  observe that as we include the outer layers, the performance of K-Means degrades. This can be attributed to the fact that the shape of the data becomes more non-linear as we move away from the center. We observe that as we move away from the center, shortest-path-distance on a nearest-neighbor embedding becomes an increasingly better estimate of cluster membership compared to the Euclidean distance. This is shown in Appendix~\ref{sec: stat-1}. However, shortest-path-based approaches can be computationally expensive even with approximations~\cite{annalittle,trillos}.

Instead, we exploit the density structure and build a graph that allows us to \emph{efficiently} cluster the rest of the points building on our third model assumption that cluster membership of points in outer layers are better captured by nearby points in inner layers.

\begin{algorithm}[ht]
   \caption{CoreSPECTed-K-means}
   \label{alg:CS-Kmeans}
\begin{algorithmic}
    \STATE {\bfseries Inputs:} Dataset $X$ and number of clusters $k$, hyperparameters: $r,\ell,t$.

    \STATE 

    \STATE {\bf Step 1:} Obtain $\Pi: X \rightarrow [0,1]$ a random walk based estimate of the density of each point.
    
    \STATE  $F \gets {\sf FlowRank}(X,\Pi,q, r)$ \hfill \COMMENT{Algorithm~\ref{alg: FlowRank}}

    \STATE  $\forall j \in \{0,\ldots, \ell-1\}$, define $S_j$ as the top $(j+1)/\ell$ fraction of points as per $F$.   

    \STATE
    \STATE {\bf Step 2:} Cluster $S_0$ with K-Means, obtaining centroids $\mathbf{c_i}$. 

    \STATE Define cluster membership vector $\mathcal{C}: S_0 \rightarrow \mathbb{R}^k$: $\mathcal{C}(\mathbf{u})=[\|\mathbf{u}-\mathbf{c_i}\|]_{1 \le i \le k}$
    
    \STATE 

    \STATE {\bf Step 3}: Generate the CDNN graph $G^+_{t,S,X}$  and normalized weight matrix $W$\hfill \COMMENT{As per Definition~\ref{def: CDNN}}

    \STATE 
        
    \STATE {\bf Step 4}: $V \gets {\sf Expansion}(S,G^+,W,\mathcal{C})$ \hfill \COMMENT{Algorithm~\ref{alg:prop}}

    \STATE 

    \RETURN Clustering $V_1, \ldots ,V_{k}$
   
\end{algorithmic}
\end{algorithm}

\subsubsection{Step 3: Creating the Core Directed Nearest Neighbor (CDNN) graph}
\label{sec:framework-step-3}

Our model assumption dictates any point in $\hat{\Layer}_{i,j}$ will be close to the points of the neighboring inner layer of the same ground truth cluster, irrespective of other distances. This motivates the following graph embedding.

\begin{definition}[Centrally directed nearest neighbor graph (CDNN) $G^{+}_{(t,S)}$]
\label{def: CDNN}
For every datapoint in $S_j$ (the layers obtained in the previous step), we connect it to some $t$ nearest neighbors in $\cup_{j'=0}^{j-1} S_{j'}$.\footnote{Ideally, we want a graph that connects vertices in $S_{j}$ to vertices in $S_{j-1}$, but we choose the aforementioned formulation to limit propagation of error in the FlowRank outcome.}
\end{definition}

We observe that distance on the CDNN graph is also more reliable than Euclidean distances. However, due to the layer-wise structure of the CDNN graph, we are able to devise a very fast algorithm to cluster the rest of the points.

\subsubsection{Step 4: Expanding the clustering in a layer-wise manner}
\label{sec:framework-step-4}

We instead expand the clustering a layer at a time. That is, given a clustering of the points in $S_0$, we label the points in $S_1$ using the edges going from $S_1$ to $S_0$, continuing this process layer-wise.

For this, we need two ingredients. First, for each point in $S_0$, we define a $k$-dimensional vector $\mathcal{C}$(each entry corresponding to a cluster) that gives us the membership value of the point to the cluster according to the clustering algorithm we apply to the core. For example, it can be the distance to the $k$ centroids for K-Means.

%First, we obtain a clustering of the top layer $S_0$ into some $k$ communities. Consequently, for each vertex $v_0$ we have a notion of the confidence of it belonging to any cluster. For K-Means, it is the distance to the centroid, where closer the distance, the more confidently one can assume that point belongs to the cluster determined by K-Means (low value is better). Similarly for GMM, each point gets a $k$-dimensional vector that determines its likelihood of belonging to each cluster (high value is better). Depending on the notion of confidence in the $k$-dimensional vectors, our algorithm will seek to minimize (or alternatively maximize) a certain object.  

Next, we define a weight function for each edge in $G^+_{t,S}$ that is inversely proportional to the distance. For this paper we use UMAP's weight function~\cite{mcinnes2018umap}, and observe the behavior w.r.t. different choices in the supplementary material. We use these two vectors to cluster the subsequent layers one at a time, which we describe in algorithm~\ref{alg:prop}. The performance of this propagation step is captured in Figure~\ref{fig:side-by-side}. Finally, we record the computational efficiency of this step. 

\begin{theorem}
\label{prop: time}
Given the CDNN graph $G^+_{t,S}$ and a clustering of $S_0$, the rest of the points can be clustered in $\mathcal{O}(n \cdot k \cdot t)$ time, which is linear in the number of the edges and the number of clusters in $G^+_{t,S}$. 
\end{theorem}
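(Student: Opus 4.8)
The plan is to analyze Algorithm~\ref{alg:prop} directly and count the elementary operations layer by layer. Recall the structure: the CDNN graph $G^+_{t,S}$ connects each point in $S_j$ to at most $t$ nearest neighbors in $\cup_{j'<j} S_{j'}$, and the layer-wise expansion processes layers $S_1, S_2, \dots, S_{\ell-1}$ in order, producing for each newly processed point a $k$-dimensional membership vector $\mathcal{C}$ by aggregating the (edge-weighted) membership vectors of its at most $t$ in-graph neighbors in earlier layers, then assigning it to the cluster of maximum aggregated value.

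The key steps are as follows. First, I would observe that when a point $\mathbf{x} \in S_j$ is processed, its membership vector is formed by iterating over its (at most) $t$ outgoing edges; for each such edge we read off the neighbor's already-computed $k$-dimensional membership vector, scale it by the precomputed UMAP edge weight, and add it into an accumulator. This costs $\mathcal{O}(t \cdot k)$ arithmetic operations per point. Second, taking the argmax over the resulting $k$-vector to assign the cluster label costs an additional $\mathcal{O}(k)$. Third, summing over all points across all layers $S_1,\dots,S_{\ell-1}$, and noting that the total number of such points is at most $n$, the total cost is $\sum_{\mathbf{x}} \mathcal{O}(t\cdot k) = \mathcal{O}(n \cdot k \cdot t)$. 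Finally, since the number of edges in $G^+_{t,S}$ is at most $n \cdot t$ by construction (each point contributes at most $t$ edges), this bound is indeed linear in the number of edges times the number of clusters, as claimed. One should also note that the edge weights and the core membership vectors $\mathcal{C}$ for $S_0$ are computed once beforehand; the weight of each edge via UMAP's formula is $\mathcal{O}(1)$ given the stored distance, so this preprocessing is also absorbed into $\mathcal{O}(n\cdot t)$ (plus the $\mathcal{O}(k)$-per-core-point cost of initializing $\mathcal{C}$, i.e.\ $\mathcal{O}(|S_0|\cdot k)$).

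The main subtlety — rather than a true obstacle — is bookkeeping about what is assumed \emph{given}. The statement explicitly takes the CDNN graph and a clustering of $S_0$ as input, so the (potentially expensive) $t$-nearest-neighbor search used to build $G^+_{t,S}$ and the clustering of the core are outside the scope of this bound; I would state this carefully so the linear-time claim is not misread as covering graph construction. A secondary point to address is that the layers must be processed in increasing order of $j$ so that, when $\mathbf{x}\in S_j$ is handled, all its in-graph neighbors (which lie in $\cup_{j'<j}S_{j'}$) already have finalized membership vectors; this guarantees each edge is traversed exactly once and no recomputation occurs, which is what pins the total at $\mathcal{O}(n\cdot k\cdot t)$ rather than something larger.
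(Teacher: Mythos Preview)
Your proposal is correct and follows essentially the same approach as the paper: both argue that processing each non-core point requires aggregating $t$ neighbors' $k$-dimensional membership vectors at cost $\mathcal{O}(t\cdot k)$, and summing over at most $n$ points yields $\mathcal{O}(n\cdot k\cdot t)$. Your write-up is in fact more careful than the paper's, explicitly noting that graph construction is outside the bound and that the layer ordering guarantees each edge is touched once.
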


The proof can be found at the restated theorem 2 in Appendix ~\ref{sec:theo_two}

%\begin{minipage}{0.5\linewidth}

%\end{minipage}
% \begin{minipage}{0.5\linewidth}
% \begin{figure}[H]
%     \centering
%     \includegraphics[width=\linewidth]{images/layer-wise.pdf}
%     \caption{We compare Layer-wise expansion applied to K-Means with K-Means applied to the inner-most layers. As the number of layers increase, the accuracy due to K-Means continue to go down. In comparison, layer-wise expansion ensures a significantly higher quality clustering.}
%     \label{fig:enter-label}
% \end{figure}
% \end{minipage}

\begin{comment}
\begin{minipage}{0.6\linewidth}
\begin{algorithm}[H]
   \caption{Single-layer-propagation}
   \label{alg:prop}
\begin{algorithmic}
   \STATE {\bfseries Input:} Dataset $X$, two layer hierarchy $S_0 \subset S_1=X$ 
   \STATE and K-Means clustering assignment on $S_0$.
   \STATE 
    \STATE   Build $\hat{G} \gets G^{Q+}_{(X,S)}$    
    
    \FOR{ $v$ in $S_1 \setminus S_0$}

    \FOR {u in $N_{G^{Q+}}(v)$}

    \STATE $W[v,u] \gets e^{\|u-v\|/\epsilon}$
    
    \ENDFOR

    \STATE $W[v,:] \gets \frac{W[v,:]}{\sum_{u \in N_{G^{Q+}}(v)}}$

    \ENDFOR
    \STATE $v^+ \gets v$
    \WHILE{ $P(v) < \max_{u \in N_G(v) P(u)}$}
    \STATE $N_G^{+}(v) \gets u \in N_G(v) : P(u)>P(v)$ 
    \STATE Randomly select an item $v^+$ from $N_G^{+}(v)$
    \STATE $v \gets v^+$
    
    \ENDWHILE

    \RETURN $P(v^+)$
\end{algorithmic}
\end{algorithm}
\end{minipage}
\end{comment}

\begin{figure*}
    \centering
    \includegraphics[width=0.95\linewidth]{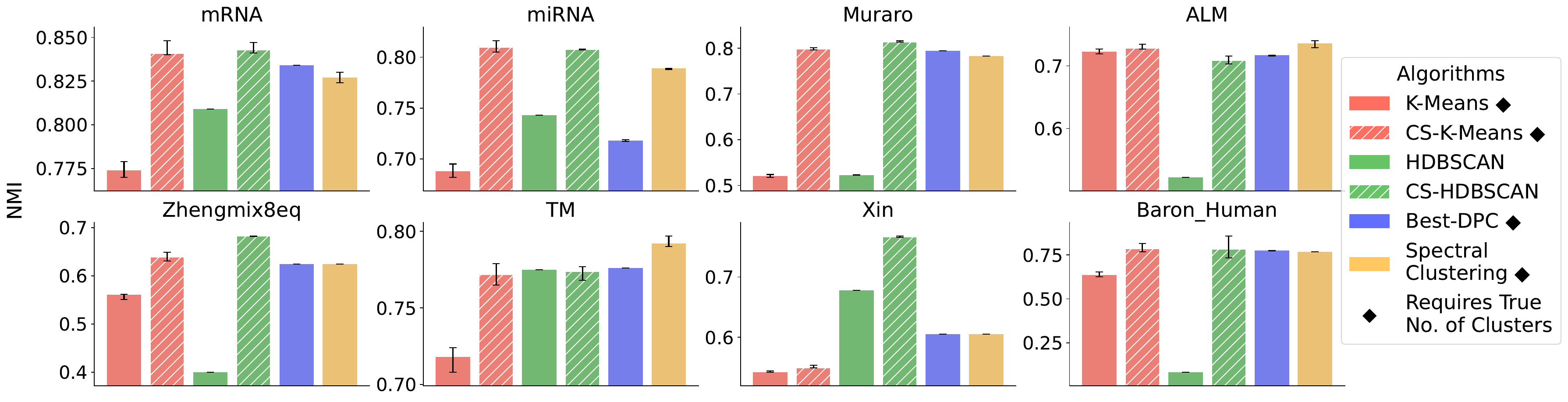}
    \caption{\small Improvement of NMI on K-Means and HDBSCAN due to CoreSPECT, compared to best density-peak-based clustering as well as spectral clustering. Impressively, CS-HDBSCAN performs on par (sometimes even being the best) compared to popular algorithms that need the true number of clusters.}
    \label{fig:finla-ari}
\end{figure*}

\subsection{Combining the framework explicitly for K-Means}

Here, we write down the different steps of our framework when applied to K-Means as Algorithm~\ref{alg:CS-Kmeans}.

%\begin{minipage}{0.6\linewidth}
%\end{minipage}

\paragraph{Theoretical guarantee of CoreSPECTed-K-Means.}

Finally, we note that as long as each of our steps is approximately correct, CoreSPECTed-K-Means almost correctly recovers the underlying clusters in data that follows our model, irrespective of the proximity of points in the outer layers. 

\begin{theorem}[Clustering in the $\sf LCPDM$ model]
\label{thm:main-sep}
Let $X$ be $n$ datapoints generated from the ${\sf LCPDM}(2,\ell)$ model. Let us have an estimate of the density layers, given as $S_0, \ldots ,S_{\ell-1}$ such that $|S_j \cap \hL_j|=(1-f)|\hL_j|,j>0$. for a sufficiently small function $f=o(1)$ that is layer preserving. Then, applying a variant of K-Means to $S_0$ and expanding the clustering using a CDNN graph (with correctly chosen parameters) results in clustering with o(1) misclassification error rate on expectation.
\end{theorem}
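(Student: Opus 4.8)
The plan is to chain together the guarantees of the four steps, tracking how the $f = o(1)$ corruption of the density-layer estimates propagates through the expansion. First I would fix notation: write $\hL_0 = \hL_{0,0} \cup \hL_{1,0}$ for the true union of cores and $S_0$ for its estimate; by the layer-preserving hypothesis and $|S_j \cap \hL_j| = (1-f)|\hL_j|$, each $S_j$ is $\hL_j$ up to an $f$-fraction of points that have "leaked" to an adjacent layer. The base case is Step 2: I invoke Proposition~\ref{prop:k-means-separate} to conclude that the modified K-Means applied to $S_0$ separates the two cores correctly with probability $1-o(1)$, provided the balance condition $\mu \cdot n_0 \le n_1$ on the core sample sizes holds (I would note this is inherited from the density assumption of ${\sf LCPDM}$, or add it as a hypothesis). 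Since $S_0$ differs from $\hL_0$ by only an $o(1)$ fraction, the centroids $\mathbf{c}_0, \mathbf{c}_1$ and hence the membership vectors $\mathcal{C}$ are only perturbed by $o(1)$, so the initial clustering $V_1, V_2$ restricted to $S_0$ misclassifies at most $o(1)|\hL_0|$ points.

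Next I would run the inductive step of the layer-wise expansion (Algorithm~\ref{alg:prop}). The induction hypothesis is: after processing layers $0, \dots, j-1$, the fraction of misclassified points among $\cup_{j' \le j-1} S_{j'}$ is $o(1)$. For a point $\mathbf{u} \in S_j$, Definition~\ref{def: CDNN} connects it to its $t$ nearest neighbors in $\cup_{j' < j} S_{j'}$. By model assumption~3 (layer-wise clustering membership alignment), for $\mathbf{u} \in \Layer_{i,j}$ the distance to the nearest point of $\Layer_{i,j-1}$ is at most $\delta$ times the distance to any point of $\Layer_{i',j-1}$; the key geometric lemma to establish is that, after passing to the sampled points and accounting for the $f$-fraction of layer-leakage and the minimum/maximum layer-depth assumptions, with $t$ chosen appropriately (a constant depending on $\delta$, the sampling density ratio $C$, and the manifold dimension $m$) all $t$ CDNN-neighbors of $\mathbf{u}$ lie in $\Layer_{i,\cdot}$ — i.e. in the correct cluster — with probability $1 - o(1)$ over the sampling. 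Given that, the weighted aggregate $\hat C(\mathbf{u}) = \sum_v W(\mathbf{u},\mathbf{v}) \hat C(\mathbf{v})$ is a convex combination of membership vectors of correctly-classified same-cluster points (those are the overwhelming majority, by the induction hypothesis), so $\arg\min \hat C(\mathbf{u})$ gives the right label for all but an $o(1)$ fraction of $S_j$. Summing the $o(1)$ error contributions over the constant number $\ell$ of layers keeps the total misclassification rate $o(1)$; since the per-point errors are governed by the sampling randomness, taking expectations gives the stated $o(1)$ expected misclassification error.

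The main obstacle I anticipate is the geometric lemma in the inductive step: showing that the $t$ nearest neighbors of a point in layer $j$ genuinely fall in the inner layer of the \emph{same} cluster. Assumption~3 is stated for the continuous sets $\Layer_{i,j}$ and controls only the \emph{single} nearest point to $\Layer_{i,j-1}$ versus the nearest point to $\Layer_{i',j-1}$; to handle $t$ neighbors I need a quantitative margin — that the $\delta$-gap, combined with the minimum-depth of $\Layer_{i,j-1}$ and the density lower bound from assumption~1, forces at least $t$ sampled points of the same cluster to be strictly closer than any point of a different cluster — and I must absorb the $f$-fraction of mislabeled/leaked points without destroying this margin. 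This is where the curvature assumptions on the inner layers (deferred to the supplementary material) and a careful choice $t = \Theta(1)$ relative to $\delta, C, m$ enter. A secondary, more routine obstacle is controlling error \emph{accumulation}: I should verify that the misclassification fractions add rather than compound multiplicatively across the $\ell$ layers, which holds because at layer $j$ the neighbors are drawn from inner layers whose error is already $o(1)$ and the weights $W$ form a sub-probability vector, so no amplification occurs; a union bound over the (finitely many) layers and a concentration argument for the sampling then close the proof.
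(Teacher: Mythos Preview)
Your overall scaffold---prove the core step via Proposition~\ref{prop:k-means-separate}, then induct layer by layer through the expansion---matches the paper's argument. Two points of divergence are worth noting.

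First, the paper sidesteps what you call the ``main obstacle'' entirely by fixing $t=1$. With a single nearest neighbor in the inner layers, Assumption~3 (layer-wise membership alignment) together with the sampling bound of Proposition~\ref{prop-1-delta} directly gives that the unique CDNN neighbor of a correctly-layered point lies in the same cluster; there is no need to argue that \emph{all} $t$ neighbors do. Your proposed route through a quantitative $\delta$-margin for $t>1$ is plausible but harder, and the paper simply does not attempt it.

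Second, your error-accumulation reasoning is the wrong mechanism. You argue that because the weights $W$ form a sub-probability vector and the inner-layer error is already $o(1)$, ``no amplification occurs'' and errors add across layers. But misclassified points are not spread uniformly---they can be spatially concentrated---so a point in layer $j$ whose neighbors happen to land near a bad region can be misclassified even if the global bad fraction is tiny. The paper instead bounds the \emph{spread} of contamination: each misclassified (or leaked) point can corrupt at most $C_m$ points in the next layer, where $C_m$ is the maximum number of samples in a ball of radius $1.1\Delta$ (a constant depending on $m$, derived from the density assumptions). This yields the recursion $|E_{j+1}|\le C_m\bigl(|E_j|+|S_{j+1}\setminus\hL_{j+1}|+|\hL_j\setminus S_j|\bigr)$, i.e.\ multiplicative compounding by a constant factor per layer. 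Since $\ell$ is fixed, $C_m^{\ell}\cdot f(n)=O(f(n))=o(n)$, which is what closes the proof. Your additive claim reaches the same conclusion but for the wrong reason; you would need the local density bound $C_m$ (or an equivalent covering argument) to make the step rigorous.
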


The proof can be found at the restated theorem 3 in Appendix ~\ref{sec:theo_three}

%While we obtain theoretical justification for our framework, obtaining these results under more generic conditions is an important next step. Additionally, studying the behavior of our framework in settings like Gaussian distribution supported on special kinds of manifolds is an interesting problem.

\vspace{-0.5em}
\paragraph{Computational efficiency}
We note that our framework is efficient. The main computation involves generating nearest neighbor graphs (both for initial density estimation as well as the CDNN graph generation), for which we used the ultra-fast HNSW~\cite{hnswlib} library for approximating the nearest neighbors. Our framework is also linearly dependent on $k$, the target number of clusters.  For example, for CoreSPECTed-K-Means, our framework terminates in a few seconds for most of the datasets, taking around a minute for the TM dataset, which is a $\approx 50$-dimensional dataset consisting of $\approx 50,000$ points and $54$ underlying clusters. We provide a detailed study of the run time (along with asymptotic runtime) of individual steps and scope for improvement in the Appendix ~\ref{sec:runtime_analysis}.

\vspace{-0.5em}
\paragraph{Ablation studies}
Here we have provided a run-through as well as initial theoretical support of our framework w.r.t. K-Means. In Appendix~\ref{app:ablation}, we provide several ablation studies to further test and support several of our algorithmic design choices. Specifically,

i) We test the relative usefulness of FlowRank compared to that of relative centrality coined in \cite{mcpc}. We observe that FlowRank has noticeably better performance.

ii) We also test the utility of our CDNN based clustering expansion compared to that of propagation methods popular in semi-supervised-learning, as it is a natural candidate. Our experiments showed that in many datasets the CDNN-based propagation method outperforms popular propagation methods. 

iii) Finally, we also test the usefulness of the weighting function in our propagation mechanism.

%\begin{comment}
\section{Large scale experiments}
\label{sec:exp}
In this Section we compile our real-world experimental results in detail. We focus on 19 datasets. This includes the 6 image datasets (MNIST, FMNIST, CIFAR10-ViT-Large, CIFAR10-ViT-Base, CIFAR20-ViT-Large and CIFAR100-ViT-Large), 11 single-cell RNA-sequencing datasets, and 2 bulk-RNA sequencing datasets. Appendix~\ref{app:exp} contains a detailed description of the datasets.

\subsection{Improvement in K-Means and HDBSCAN due to CoreSPECT}

\begin{obs*}[Improvement due to CoreSPECT]
Overall, {\it CoreSPECT improves the ARI of K-Means on 18/19 datasets}, and {\it NMI for 19/19 datasets} with all hyperparameters fixed to default. On average, {\bf ARI improves by $\mathbf{40.82\%}$} and {\bf NMI improves by $\mathbf{18.51\%}$}. 

For CS-HDBSCAN the performance improvement is even more substantial. HDBSCAN is known to be sensitive to noise, and as such has poor performance for most of the datasets considered here. However, the performance of HDBSCAN is significantly higher in the core, which again justifies our modeling. We observe an {\bf average ARI improvement of 468\%} and an {\bf average NMI improvement of 132\%}. 
Surprisingly, in the genomics dataset CS-HDBSCAN achieves the best NMI ranking, and is also close to the SOTA manifold clustering algorithm on image datasets, both being significantly faster, and not requiring true number of clusters. 
\end{obs*}

%Here, we note that the improvement due to CoreSPECT is lower for the $4$ smallest datasets (with less than 2500 points). In Appendix~\ref{}, we show that the performance of our framework can be further boosted with principled parameter selection methods including these small datasets.

\subsection{Comparison with other algorithms}
\label{sec:exp-comp}

We have observed that our framework consistently improves the performance of K-Means and HDBSCAN. Next, we want to compare the relevance of the improved performance w.r.t. more complex density-based and geometry-based clustering algorithms. 

For the image datasets, we focused on the k-nn-based spectral clustering, and the recent manifold-based-clustering~\cite{shinde2024geometric} that shown to be near-optimal to even deep-clustering algorithms in image datasets.

For the genomics datasets, we select three recent or/and popular Density Peak Clustering algorithms the original DPC~\cite{DensityPeak}, ADPclust~\cite{ADPClust}, and a fast implementation of another popular DPC algorithm through a fast framework named PECANN~\cite{PECANN}. Additionally we also use HDBSCAN~\cite{hdbscan}. We also attempted to use DBSCAN~\cite{dbscan} and OPTICS~\cite{ankerst1999optics}, but they categorized most points as outliers for most datasets so we omit them. We show the results in Figure~\ref{fig:finla-ari}. We also use the EM-based GMM-fit algorithm ~\cite{gmm-fit}. We also obtain a ranking of the NMI and ARI across the genomics datasets. The results are shown in Table~\ref{tab:rankings1}. Here we again emphasize that our results are obtained by running CoreSPECT with the \emph{same hyperparameters across all datasets}.

\subsection{Parameter selection process for CoreSPECT}
Our framework uses some hyperparameters. Specifically, we have 3 neighborhood selection parameters for different stages $q,r,t$ and thresholds for defining the layers. To ensure a fair comparison, we tested some common neighborhood values on two datasets MNIST and miRNA, and found that $q=40$, $r=t=20$ works well here. Then, we applied the \emph{same hyperparameters} on all the other 17 datasets. The hyperparameters for each of these datasets could be further tuned to obtain better results, but we use these default hyperparameters throughout to design an unsupervised learning algorithm that can be generally applicable.

%We also attempted to run other manifold-based algorithms with theoretical guarantees by \cite{annalittle} and \cite{trillos}. However, we could not get a parameter initialization (after trying default and other suggested parameters) that gave better ARI or NMI than the vanilla Spectral Clustering (SC) on any of the datasets. Note that SC applies K-Means to the normalized spectral embedding of the nearest neighbor graph~\cite{von2007tutorial}. We also use bisecting K-Means~\cite{bisecting-k-means} as a hierarchical clustering algorithm. 

\begin{table*}[ht]
  \centering
  \caption{\small NMI and ARI rank of CoreSPECTed-K-Means and HDBSCAN on 13 genomics datasets. 
  The best value is in bold, and the second best is underlined.}
  \label{tab:rankings1}

  \resizebox{0.95\textwidth}{!}{%
  \begin{tabular}{lcccccccccc}
    \toprule
      & K-Means & CS-K-Means & HDBSCAN & CS-HDBSCAN & GMM 
      & PECANN & SC & Bi-K-Means & DP & ADPClust \\
    \midrule
      ARI 
      & 5.80 & \textbf{2.73} & 6.87 & \underline{3.00} & 3.93
      & 3.47 & 4.20 & 7.07 & 8.67 & 9.00 \\

      NMI
      & 6.00 & \underline{2.80} & 7.27 & \textbf{2.60} & 4.40
      & 3.80 & \textbf{2.60} & 7.73 & 8.53 & 8.93 \\
    \bottomrule
  \end{tabular}
  }
\end{table*}

\section{Conclusion}

We conclude this paper with a small discussion of limitations and future directions. The most fundamental question lies in Figure~\ref{fig:side-by-side}. Our framework improves the performance of K-means (and HDBSCAN) by propagating the labels in a better way compared to simple K-Means. The limit to which we can further improve this propagation, both in a theoretical and in an applied setting is an outstanding problem in our opinion. Beyond this, we aim to make our framework  more efficient with parallelization based implementation. We discuss more limitations in Appendix~\ref{app:limitations}.

% \begin{table}[ht]
%   \centering
%   \resizebox{1\columnwidth}{!}{%
%     \begin{tabular}{lcccccccccc}
%       \toprule
%             & K‑Means & CS‑K‑Means & HDBSCAN & CS-HDBSCAN & GMM  & PECANN & SC    &  Bi‑K‑Means & DP    & ADPClust \\
%             \midrule
%         ARI & 5.80 & \textbf{2.73} & 6.87 & \underline{3.00} & 3.93 & 3.47 & 4.20 & 7.07 & 8.67 & 9.00    \\
%       NMI &6.00 & \underline{2.80} & 7.27 & \textbf{2.60} & 4.40 & 3.80 & \textbf{2.60} & 7.73 & 8.53 & 8.93       \\
 
%       \bottomrule
%     \end{tabular}%
%   }
% \caption{\small NMI and ARI rank of CoreSPECTed-K-Means on 15 datasets. The best value is in bold, and the second best is underlined}
% \label{tab:rankings1}
% \end{table}

%In Figure~\ref{fig:finla-ari}, we have shown the exact improvements in ARI due to CoreSPECTed-K-Means and CoreSPECT-GMM for 10 datasets. Here, we compare the performance over 15 datasets by demonstrating the average ranks for ARI (and NMI). As before, if we slightly tune the parameters for the small datasets, our performance further improves but in the interest of exhibiting robustness we use the same parameters for all datasets. We observe the rank of CoreSPECTed-K-Means compared to the other algorithms in Table~\ref{tab:rankings1}. Similarly, we note the rank the CoreSPECTed-GMM also achieves the best ARI rank, and the second best NMI rank over all of the datasets. Furthermore, with slight parameter tuning, CoreSPECTed-K-Means is able to achieve also the best rank in NMI. We describe all of these results in the supplementary material.

%\end{comment}

\bibliographystyle{alpha}
\bibliography{reference_cs}

\appendix

%\appendix

\section{Statistical Evidence of the LCPDM model in real-world datasets}
\label{app:stat}

So far, we have provided initial theoretical support as well as various ablation study to better demonstrate the structural backgrounds and performance of our framework. Here, we provide some more statistical evidence that supports our model formulation and algorithm design. First we describe the datasets we use for evaluation.

\paragraph{\large Datasets}
We use a total of $15$ datasets. We select the $11$ single-cell datasets used by \cite{mcpc} (obtained from \cite{single-cell-duo,single-cell-7data,single-cell-ALM-VISP}), two popular bulk-RNA dataset from the The Cancer Genome Atlas Program (TCGA)~\cite{TCGA}, and the popular image datasets MNIST~\cite{mnist-dataset} and Fashion-MNIST~\cite{fashion-mnist}. The details of the datasets are provided in the supplementary material.

First, we show that the inner-most layers are indeed more Euclidean  than the outer-most layer in terms of cluster-membership identities.

\begin{figure}[t]
    \centering
    \includegraphics[width=1\linewidth]{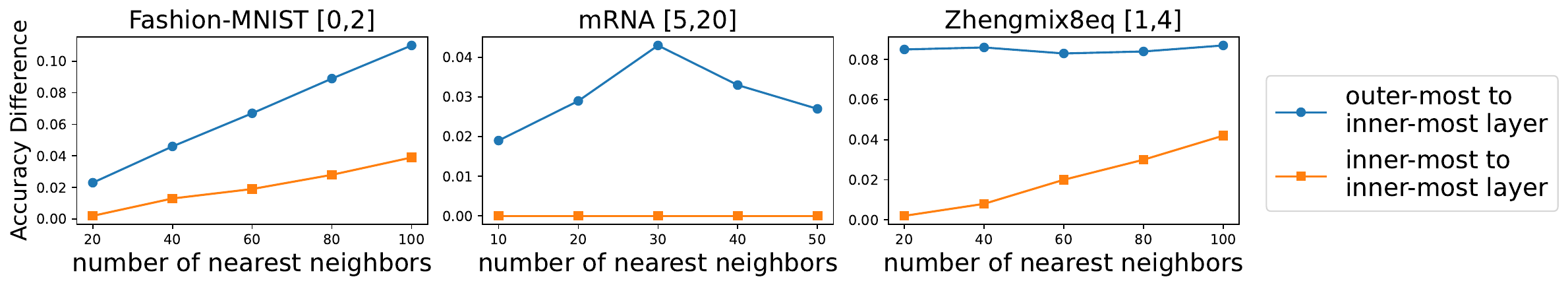}
    \caption{We first sort the nodes by the FlowRank values and select the nodes that are in the top and bottom 20\% which we call the nodes from the inner-most and outer-most layers.Then we find the nearest neighbors using Euclidean distance and the shortest path distance in the CDNN graph generated. We show an empirical evidence that the periphery nodes exhibit more non-linear and the core nodes exihibit more euclidean structure by showing the differences of the nearest neighbor label accuracies.}
    \label{fig:stat_test}
\end{figure}

\subsection{Nonlinearity of outer layers captured with relative accuracy of cluster-membership based on different distance measures}
\label{sec: stat-1}
In figure~\ref{fig:stat_test}, We first show the statistical evidence that further strengthens our claim that "the dense regions that are well-separated, while the surrounding outer
layers exhibit more non-linear structure". 

Specifically, we look at the cluster membership of some $q$ nearest neighbors of points in the inner and outer layers via different notions of distances. To get a high resolution understanding, we focus on the same pairs of clusters that we used to observe the decaying performance of K-Means as points from outer layers were considered. 

First, we look at the q nearest neighbors of core points among other core points. We consider the Euclidean distance and the shortest path on the K-NN graph distance. We record the difference in the fraction of intra-cluster points among nearest points according to different metrics. We observe that the accuracy for these two metrics remain relatively unchanged. 

In comparison, we observe that if we look at the nearest neighbors of the outer-most layer among the cores, the intra-cluster accuracy according to Euclidean distance is relatively lower compared to shortest path on the CDNN graph that we generate. This implies, that the cluster-membership of the core-core points are well captured by Euclidean distance. In contrast, we need a more local-definition of distance (via shortest path on the CDNN graph) to get cluster-membership-preserving notions of distance. 

\subsection{Inversion in the behavior of K-Means objective value}
We conclude this part with an interesting Phenomena on the K-Means objective value of the clustering obtained by K-Means and CoreSPECTed-K-Means. Essentially, we have the following observation.

\begin{enumerate}
    \item  {\bf On the cores:} The K-Means objective value of the clustering obtained by K-Means on the core is lower than the K-Means objective value of the core points when K-Means was applied to the whole dataset. 

    \item {\bf On the entire dataset:}  Here, the K-Means objective value of CoreSPECTed-K-Means is higher than that of K-Means, even though the final clustering by CoreSPECTed-K-Means is of much higher quality, as we have observed. 
\end{enumerate}

We capture this in Figure~\ref{fig:K-means-obj}. This further strengthens the assumptions that the cores obtained by FlowRank are geometrically well-separated, contrasting the more complex geometry of the outer layers.

\begin{figure}[t]
    \centering
    \includegraphics[width=\linewidth]{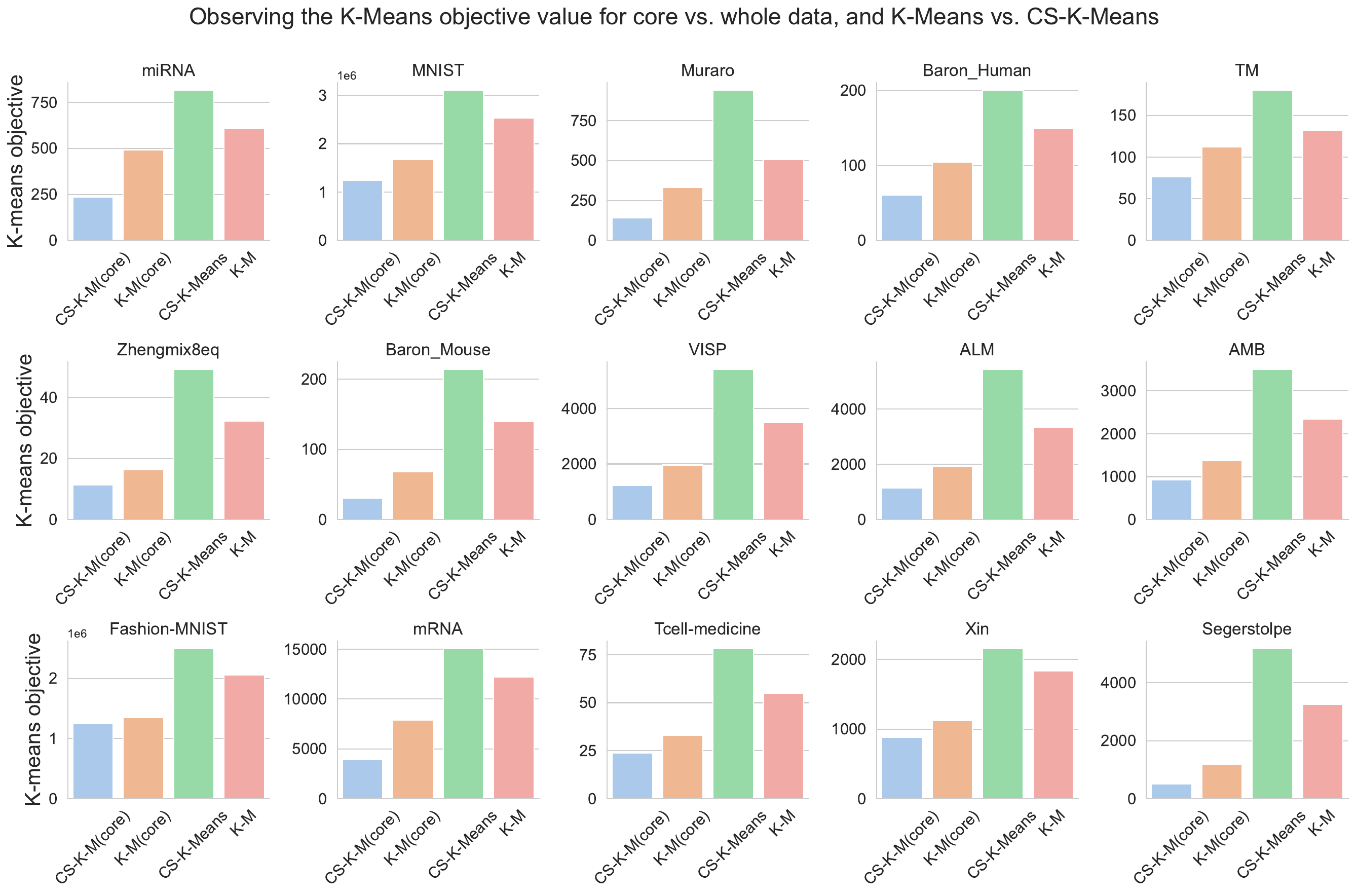}
    \caption{We observe that the K-Means objective of the clustering due to CS-K-Means is higher than that of K-Means when looking at the whole data. This contrasts with CS-K-Means having significantly better accuracy. This implies, for the whole dataset, linearly geometric objectives like that of K-Means is not suitable. However, if we focus only on the cores, a different picture emerges. The K-Means objective of the clusters among the core obtained by directly applying K-Means on core is smaller than that of the core-points when K-Means is applied on the whole dataset. This further strengthens the assumptions that the cores obtained by FlowRank are geometrically well-separated, contrasting the more complex geometry of the outer layers.}
    \label{fig:K-means-obj}
\end{figure}

\section{Ablation Studies}
\label{app:ablation}

In the previous section we have demonstrated that our framework elevates the performance of both K-Means an CS-HDBSCAN across 19 datasets, often outperforming recent density-peak-based and manifold-based clustering algorithms. In this section we provide several ablation studies, directed at understanding the importance of each of the steps in our framework. We start by recollecting the different steps in our framework on a high level in Figure~\ref{fig:app-CoreSPECT}

%We start by recollecting the four steps of our framework. First, we divide the data points into some layers, from most core to most periphery using our algorithm FlowRank. Then, we cluster the cores using the user's algorithm of choice. Next, we cluster the rest of the points by first creating the CDNN graph and then layer-wise expansion. In this section we provide an ablation study for each of the steps, to understand the importance of each of the steps. For the purpose of exposition, we fix K-Means as the user's algorithm of choice Recall that we call the application of our framework to K-Means as CoreSPECTed-K-Means (CS-K-Means in short).

\begin{figure}[ht]
\centering
\noindent\fbox{\begin{minipage}{\textwidth}
\begin{enumerate}
    \item {\bf Step 1: Core (and subsequent density layers) extraction.}

    In this step we want to obtain the density layers in the data, such that the top-most layer contains the most separable cores and the subsequent layers contain more complex and less separable peripheral points. 

    As we have described in the main paper, we design an algorithm that we call FlowRank and then define the deciles as the layers. We aim to find the relatively dense regions of the data to obtain the cores of each cluster. Our method is inspired by the concept of relative centrality in \cite{mcpc}.

    \item {\bf Step 2: Clustering the core.} 
    
    In this step, the cores ($S_0$) are clustered with a simple algorithm of user's choice. We use K-Means for this part.

    \item {\bf Step 3 and 4: Expanding the clustering to the rest of the points}

    We do this in two steps.

    \subitem {Step 3: Core directed nearest neighbor (CDNN) graph construction.}  Here, we design a layer-by-layer nearest neighbor graph structure using the estimated density layers.

    \subitem {\it Step 4: Layer-wise expansion of clustering. } 
    Finally, we use the clustering of the cores and the CDNN graph to cluster the rest of the points.
    \ \\
    
    The key idea here is that our model dictates that the cluster membership of a point in a density layer is best determined to its proximity to points in an inner layer (as opposed to overall proximity to points).

\end{enumerate}
\end{minipage}}
%\begin{center}
   \caption{ The CoreSPECT Framework (on a high level)}
   \label{fig:app-CoreSPECT}
%\end{center} 
\end{figure}

\subsection{Approximating the density layers: Comparing FlowRank with relative-centrality methods.}

\begin{figure}[t]
    \centering
    \includegraphics[width=\linewidth]{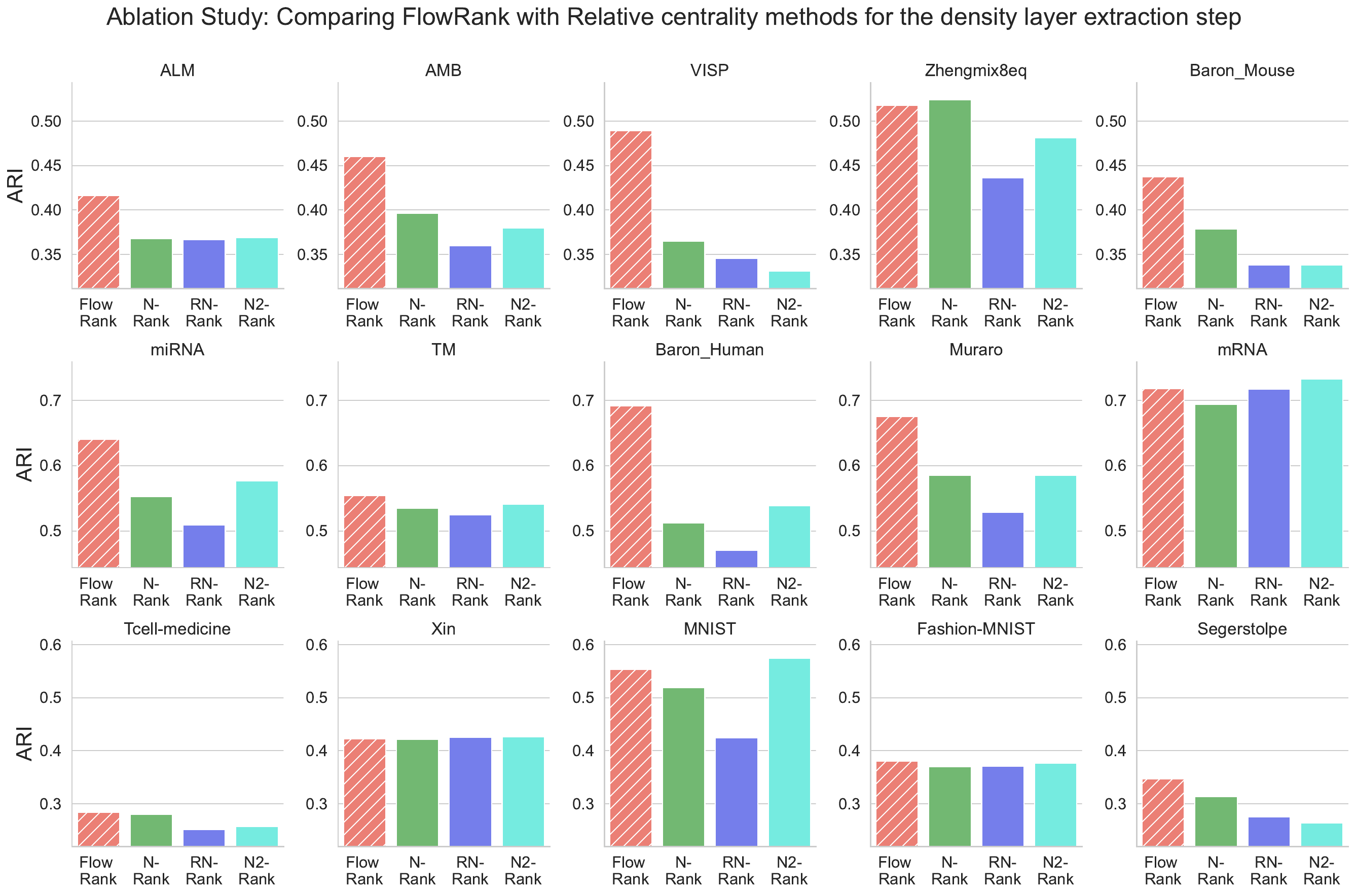}
    \caption{Contrasting the performance (ARI) of FlowRank with that of Relative centrality methods in the density-layer-extraction step of CoreSPECT. In 11 out of 15 datasets, FlowRank produces the best result, and is within 2\% of the best in the other four datasets.}
    \label{fig:ablation-ranking}
\end{figure}

We have discussed in Section~\ref{sec:small-datasets} that for some the small datasets, FlowRank fails to add points from the smallest clusters to the cores even when they are well-separated, and identified it as avenue for improvement. In this direction, we compare the merits of FlowRank w.r.t. the concept \emph{relative-centrality}~\cite{mcpc} from where we drew inspiration. In \cite{mcpc}, the authors focus on K-NN graph embeddings of biological datasets, and then define ranking algorithms such that 

i) Top ranked points are better separable into their ground truth clusters compared to the whole data. They verify separability based on what fraction of edges in the induced subgraph of the top-ranked points are \emph{intra-community}. 

ii) The top ranked points contain points from all clusters

They verify this property for 11 single-cell datasets (that we also use in our paper). The two properties sought by \cite{mcpc} seems to fit our needs exactly. That is, if the top-ranked points are more separable and we have points from all underlying ground truth clusters, we will be able to obtain a high quality clustering of these points. However, crucially, we have a more specific requirement, which is {\bf the core points should be geometrically well separated}. Note that this is not automatically ensured by the relative centrality methods. Indeed, a look at the algorithms in \cite{mcpc} suggests that they aim to obtain locally-high-centrality points. While these points themselves may be better separable, they may be from very different parts of the space, and therefore, the performance of simple Geometric clustering algorithms like K-Means may not be good. To further investigate this hypothesis, we run our experiments by replacing FlowRank with algorithms of \cite{mcpc}. We capture the results in Figure~\ref{fig:ablation-ranking}.

We observe that FlowRank usually leads to the best result, with significant advantage over the relative centrality methods in 10 out of 15 datasets, while being very similar to the best relative centrality method per-dataset for each of the other 5 datasets. 

Essentially, the reason behind this is that the relative centrality methods treat the K-NN graphs to have a single-layer core-periphery structure. However, we find that there are multiple density layers in the real world data, which necessitates our algorithm. However, we believe it is possible further improve our ranking and overall density-layer estimation algorithm, which we consider it as an important future direction.

\begin{figure}[t]
    \centering
    \includegraphics[width=\linewidth]{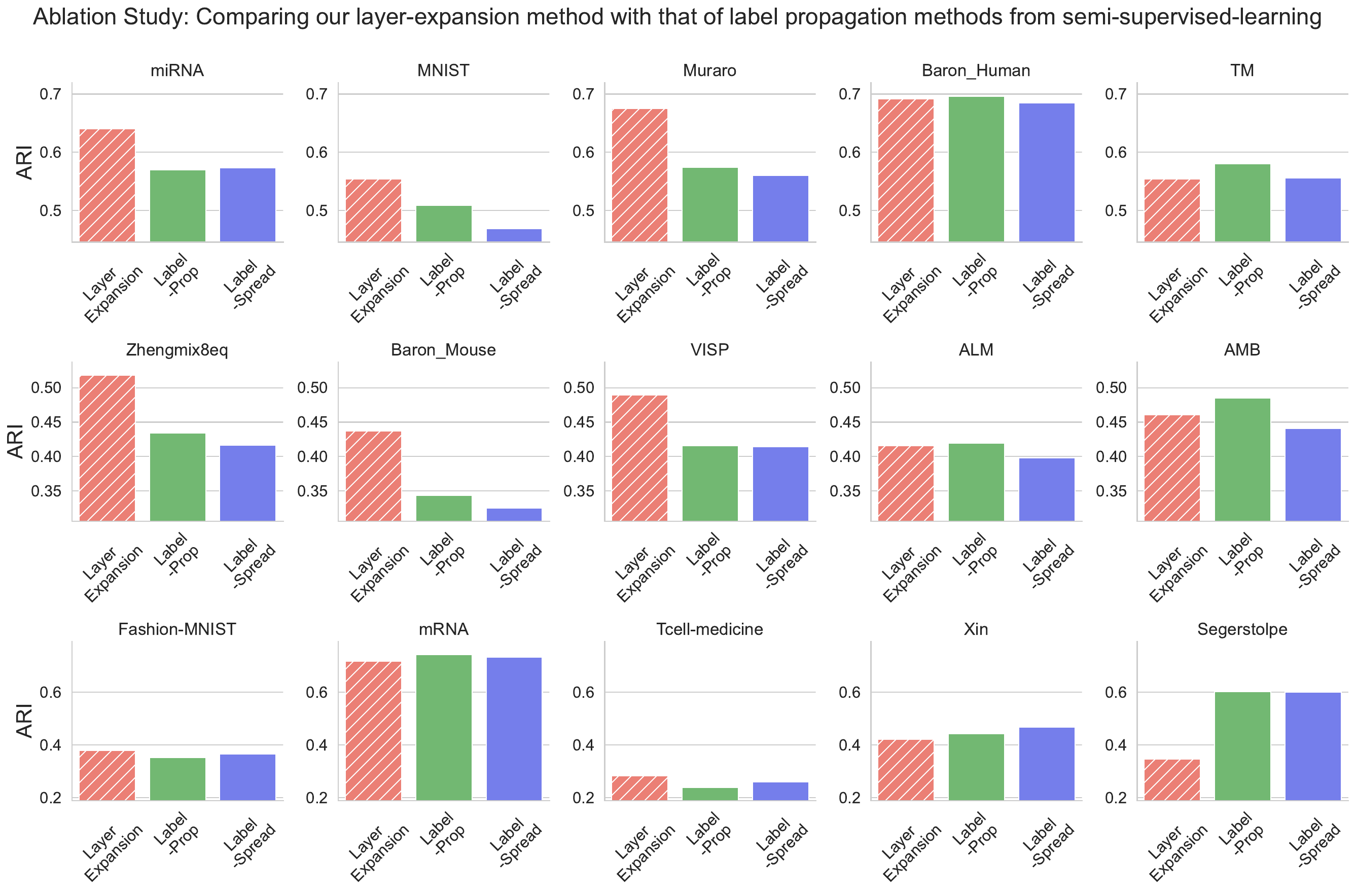}
    \caption{Contrasting the performance of CDNN graph generation+layer expansion with that of popular label propagation algorithms used in Semi supervised learning. Overall, our CDNN-graph based approach has the best rank, with noticeably better performance in 6 out of 15 datasets. We have almost identical performance in 8 more datasets, with doing noticeably worse in only one dataset (Segerstolpe).}
    \label{fig:ablation-propagtaion}
\end{figure}

\subsection{Layer-wise propagation: A connection (and comparisons) of layer-wise-expansion to semi-supervised learning.}

Next, we discuss a very obvious yet interesting connection between our framework and \emph{semi supervised learning}.  

In our framework, after we extract the cores and cluster them (which we expect and observe to have high correctness), we use this clustering to extend this to the rest of the data. This is very similar to the concept of \emph{label propagation}~\cite{label-prop,label-spread}, which is one of the fundamental approaches to semi-supervised learning (SSL). In SSL, given the true labels of some of the points, one wants to label the rest of the points. Therefore, our algorithm can be considered as some sort of a pseudo-semi-supervised-learning framework. This raises the following question. 

\begin{ques}
Can SSL based label-propagation methods extend the clustering of the core as well as our CDNN graph based approach?
\end{ques}

In this direction, we look at popular two scikit-learn method, called Label-propagation~\cite{label-prop} and Label-spreading~\cite{label-spread}. The first is a ``hard-clamping'' method. That is, the initial labels provided by to the algorithm are never altered. In contrast, Label-spreading is a ``soft-clamping'' method, where it may readjust some of the initial labels. In short, both the methods aim to create some affinity matrix from the labeled points to the rest of the data. In this direction, the original papers defined an rbf-based kernel for this purpose, which is also the default setting in the scikit-learn implementation. We first tried these two methods to enhance the clustering of the cores by K-Means. Here, we observe that both for label-propagation and label-spreading methods fail, resulting in significantly NMI and ARI compared to K-Means on the whole dataset. 

\paragraph{K-NN-based SSL label-propagation algorithms}
To further investigate the possibility of using SSL based prorogation, we use the alternative kernel that uses only nearest neighbors to propagate the initial labels, performing this recursively until all the points are labeled. We place the ARI values of using Label-propagation and label-spread instead of our layer-expansion idea in Figure~\ref{fig:ablation-propagtaion}.  

We observe that for this setting, both SSL methods perform well,  resulting in ARI that is higher than K-Means on the whole dataset in most cases. Overall, our CDNN-based approach still performs the best (except in the Segerstolpe dataset, where both the SSl methods have much better performance). This is a reasonable outcome, as for datasets that are good fit for our density-layer-structure, the cores obtained by FlowRank are the points in the inner-most layers. In contrast, in SSL one often wants to labels from \emph{all regions} of the data, which SSL based labale propagation ideas aim to implicitly exploit.

Ignoring the Segerstolpe dataset (which as we have discussed, does not contain a strong density-layer structure), our approach leads to roughly 7\% higher ARI than label-prop and 8\% higher than label-spread. On one hand, this further justifies our model formulation and algorithmic framework. On the other hand, this opens up the possibility of further improving our framework by incorporating ideas from more advanced SSL ideas.

We believe better understanding the applicability of such label propagation ideas and appropriating for our framework may lead to even better performance, which is an exciting future direction.

\begin{figure}[t]
    \centering
    \includegraphics[width=\linewidth]{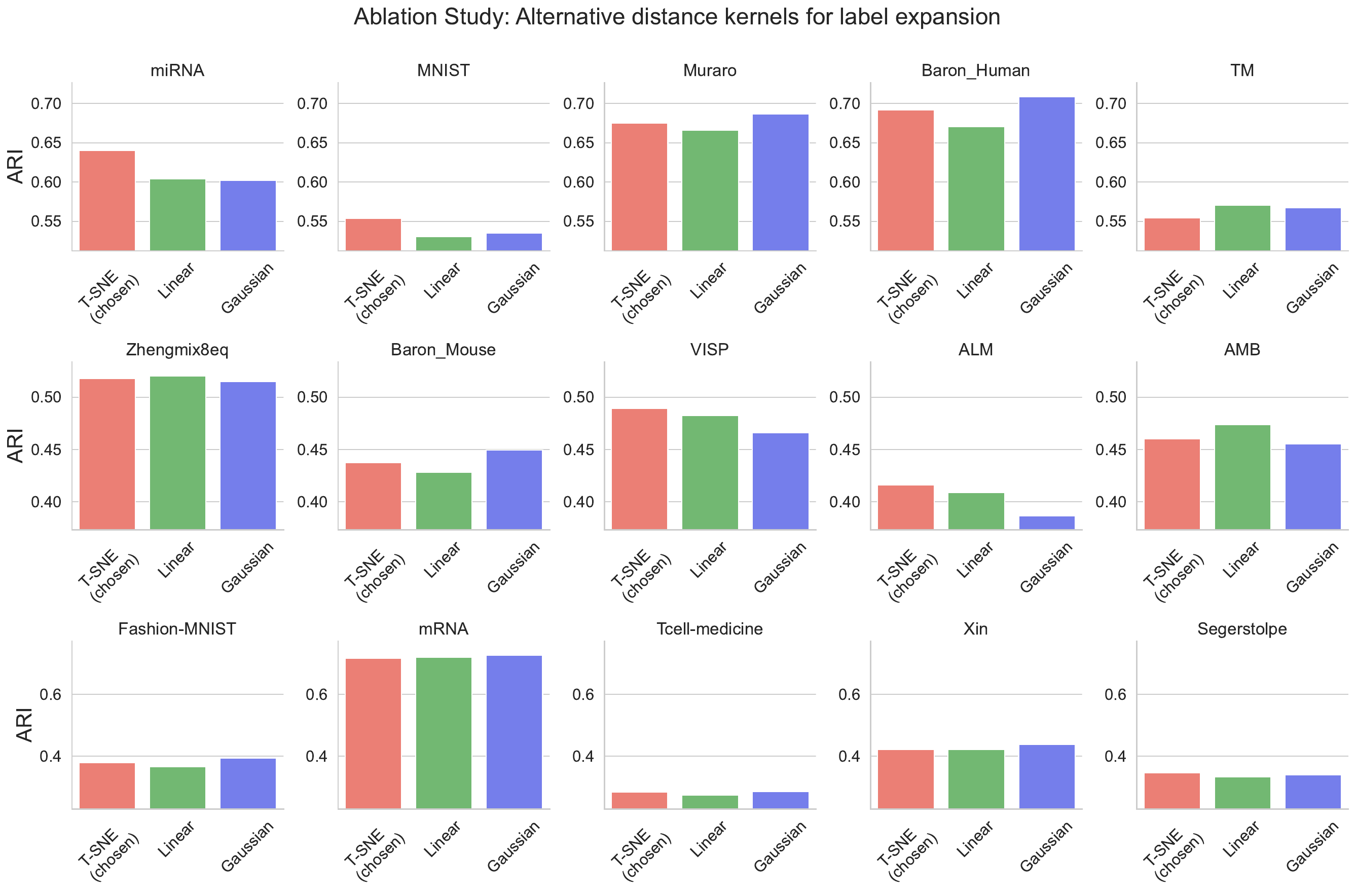}
    \caption{Performance of CoreSPECTed-K-Means for different choice of weight matrix for the layer-expansion}
    \label{fig:ablation-distance-kernel}
\end{figure}

\subsection{Choice of weight matrix in the layer-expansion approach.}

Next, we briefly recall the structure of our layer-expansion procedure. After we have partitioned the points into deciles (using the FlowRank ranking), for any point in the $j$-th layer, we find nearest neighbors among the points upto $j-1$-th layer (which have been already labeled) and then decide a (normalized) weight function on these edges such that the cluster membership vector of the new point is a weighted sum of the cluster membership vectors of its neighbors. In the results in our main paper, we have used the weight function of UMAP~\cite{mcinnes2018umap} (which itself is inspired from t-SNE~\cite{TSNE}) to decide these weights. 

Here we perform an ablation study to understand the variability in the performance of CoreSPECT based on the weight function itself (using K-Means as the core-clustering algorithm). We describe the methods we use below.

\begin{enumerate}
    \item {\bf Chosen setting:} Given a points $\mathbf{x}$ and some $t$ neighbors $Z_t$, the weight of an edge $\mathbf{x} \to \mathbf{x'}$ is decided as
    $W(\mathbf{x},\mathbf{x'})= -exp \left( \left(\|\mathbf{x} - \mathbf{x'}\|- 
    \min_{\mathbf{u} \in Z_t} \| \mathbf{x} - \mathbf{u}\|
     \right)\Big/ \sigma_x  \right)
    $
    such as the sum of all weights is a fixed constant (this is achieved by choosing a different $\sigma_x$ for each $\mathbf{x}$. 
    
    \item {\bf Linear kernel:} 
    In this setting, we simply choose
    $
    W(\mathbf{x},\mathbf{x'})= \frac{1}{1+\|\mathbf{x} - \mathbf{x'} \|}
    $

    \item {\bf Global Gaussian kernel:} This is similar to our chosen setting, without choosing a different $\sigma_x$ for each point. 
    $
    W(\mathbf{x},\mathbf{x'})= -exp \left(\|\mathbf{x} - \mathbf{x'}\|- 
    \min_{\mathbf{u} \in Z_t} \| \mathbf{x} - \mathbf{u}\|  \right)^2
    $
\end{enumerate}

In Figure~\ref{fig:ablation-distance-kernel}, we present the performance of the different distance kernels. We observe that overall, the t-SNE kernel has very slight advantage over the two methods, being better than linear kernel in 10/15 datasets, and Gaussian kernel in 8/15 datasets. However, we note that the overall differences in the three methods are negligible for most datasets.

\section{Detailed Model and Complete Proofs}
\label{app:theory}

In this Section, we will describe our Layered-core-periphery-density-model ${\sf LCPDM}(k,\ell)$ model in further detail and provide initial theoretical support to some of the steps described in the main paper. In doing so, we will also obtain some insights into the different parameters of our algorithm. First we describe the configuration of our model in detail. 

\subsection{The \textsf{LCPDM}(k,\(\ell\)) model}
\label{app:detailed-model}

As we have described, we have some assumptions on the density and some on the geometry. Here, we describe them in greater detail. We assume points of each ground-truth cluster $i$ is being generated from some $\X_i \in \mathbb{R}^d$. We make the following assumptions on the geometry-density interaction. For the rest of the discussion we focus on the case of $k=2$. First, we formally define the reach of a manifold, which we use to derive different results.

\begin{definition}[\cite{federer1959curvature}]
\label{def: reach}
The reach of a set $\mathcal{S}$ is defined as the infimum of distances between points in S and points in its medial axis, the points in
ambient space for which there does not exist a unique closest point in $\mathcal{S}$.    
\end{definition}

\paragraph{Detailed model formulation}

\begin{enumerate}[wide, labelindent=7pt]

\item {\it Globally flat manifold:} 
We assume the sets $\X_0$ and $\X_1$ lie on a flat $m$-dimensional smooth manifold $\M$, i.e., $\X_0 \cup \X_1 \subset \M$ with a reach of $\tau$ for a fixed $m$.

\item {\it Concentric sub-manifolds (with well-defined width):} The fundamental assumption we make is that $\X_i$ can be expressed as a hierarchy of concentric sub-manifolds $\X_{i,0} \subset \X_{i,1} \subset \cdots \X_{i,\ell-2} \subset \X_{i,\ell-1} =\X_i$. Recall that we define 
$\Layer_{i,j}=\X_{i,j} \setminus \X_{i,j-1}$.

These concentric sub-manifolds are then defined in the following way.
We start with a $\X_{i,0}$ with reach $\tau_{i,0}$. Then, the super-sets are defined as 
\[
\X_{i,j+1}=\{x: x \in \M, d_{\M}(x,\X_{i,j})<f_{i,j}(x)\}
\]
where $f_{i,j}: {\M} \rightarrow \mathbb{R}$ is a smooth function with $\Delta^{\min}_{i,j} \le f_{i,j}() \le \Delta^{\max}_{i,j}$, defining the minimum and maximum width of the layer $\Layer_{i,j}$. Furthermore, Each layer $\Layer_{i,j}$ has a reach of $\tau_{i,j}$.
\end{enumerate}

{\bf Density assumptions: } Each community has a density hierarchy, with the most densest layer being the central one, and the layers being progressively less dense. This is realized as follows. The data is generated by sampling $n_{i,j}$ points from each layer $\Layer_{i,j}$ uniformly at random, such that there exists a  constant $1<C<2$ satisfying 
    \[\frac{n_{i,j}}{Vol(\Layer_{i,j})} = C \cdot \frac{n_{i,j+1}}{Vol(\Layer_{i,j+1})}
    \] 
We denote the points generated from $\Layer_{i,j}$ as $\hL_{i,j}$ and together let the points be called $\hat{X}$.

{\bf Geometric assumptions w.r.t. ground truth cluster}
\begin{assumption}
\label{assume:cluster-distance}
\begin{enumerate}[wide, labelindent=5pt]
    \item {\it Cores are well-separated.} 
    The first assumption on the geometry dictates that the core-layers of the clusters are well separated in an Euclidean sense. For any clusters $i,i'$
    \[ \exists \mu<0.5 \quad \text{ s.t. } \underset{\x \in \Layer_{i,0}, \xp \in \Layer_{i,0}}{\max} \|\x-\xp \| \le \mu \cdot \underset{\x \in \Layer_{i,0}, \xp \in \Layer_{i',0}}{\min} \| \x-\xp \| \]
    
    Furthermore, we assume $\underset{\x \in \Layer_{i,0}, \xp \in \Layer_{i',0}}{\min} \| \x-\xp \| \le 0.5\tau$. This is to enforce the flatness of the underlying manifold. 
    However, note that the individual layers themselves may have smaller reach, and thus be harder to cluster using simple algorithms like K-Means.

    \item {\it Layer-wise clustering membership alignment.}
    There exists $\delta<1$ such that for any $\mathbf{x} \in \Layer_{i,j}$ and any $i' \ne i$,
    $
    \min_{\mathbf{x'} \in \Layer_{i,j-1}} \|\mathbf{x}-\mathbf{x'}\| \le \delta \cdot \min_{\mathbf{x''} \in \Layer_{i',j-1} } \|\mathbf{x}-\mathbf{x''}\|
    $.
\end{enumerate}
\end{assumption}

In this direction, we prove three results to provide initial theoretical support to our algorithm. The consistent performance of our framework warrants a more in-depth analysis and understanding and refinement of the model to further understand and exploit these underlying structures, and and we end the Section with a discussion on several such directions. 

Against this backdrop, we prove three results. First, we prove that the cores can be clustered easily.

\vspace{1em}
\subsection{Clustering the cores}
\label{sec:proof-K-Means}

Here we formally define and prove Proposition 1 of Main paper. In fact, we show that for our definition, we do not need a multi-step K-Means++. Rather, we run \emph{one-step} K-Means++ $\Theta(\log n)$ times, and select the outcome with the least K-Means objective score.

First we define the K-Means++ method of \cite{k-means-plus-plus} in brief. In the case of $k=2$, the centers are chosen as follows. The initial centroid is chosen randomly. Then, the second center is chosen as follows. 
Any datapoint $u$ is chosen to be the second center with probability
$\frac{\|u-c_1\|^2}{\sum_{v \in Y}\|v-c_1\|^2}$. That is, points are chosen with probability proportional to the squared distance to the center. Once the centers are chosen, the usual iterative K-Means algorithm is applied up to some step (or convergence).

\begin{prop*}[Restated Proposition~\ref{prop:k-means-separate}]
\label{sec:prop_one}
Let $n_0$ and $n_1$ points ($n_0>n_1$ WLOG) be sampled from  $\X_{0,0}$ and $\X_{1,0}$ respectively such that $\mu n_0 \le n_1$. These points are denoted $\hL_{0,0}$ and $\hL_{1,0}$. Consider the slightly modified K-Means algorithm. Obtain two centers using the K-Means++ method and run one-step K-Means. Repeat this process some $2 \log n$ times and accept the result with minimum K-Means objective value. This algorithm separates the two cores correctly with probability $1-o(1)$.
\end{prop*}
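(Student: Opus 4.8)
The plan is to establish three facts and combine them. \textbf{(i)} A single run of the modified algorithm initialises with one center in each core with probability at least an absolute constant $c_{0}$; hence over $2\log n$ independent runs at least one run has such an initialisation with probability $1-(1-c_{0})^{2\log n}=1-n^{-\Omega(1)}=1-o(1)$. \textbf{(ii)} Whenever a run is initialised with one center per core, a single Lloyd step returns exactly the partition $(\hL_{0,0},\hL_{1,0})$, whose K-Means objective I denote $\Phi^{\star}$. \textbf{(iii)} Every partition other than $(\hL_{0,0},\hL_{1,0})$ --- in particular the output of any run initialised with both centers in one core --- has K-Means objective strictly larger than $\Phi^{\star}$. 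Granting these, the run attaining the minimum objective is necessarily a ``one center per core'' run and it returns the correct partition, which proves the statement with probability $1-o(1)$.

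For \textbf{(i)}, put $D_{i}=\max_{\x,\xp\in\hL_{i,0}}\|\x-\xp\|$ and $d_{\min}=\min_{\x\in\hL_{0,0},\xp\in\hL_{1,0}}\|\x-\xp\|$, so $D_{i}\le\mu d_{\min}$ and every cross-core distance is at most $d_{\max}\le(1+2\mu)d_{\min}$. The first center is uniform over the $n:=n_{0}+n_{1}$ points; conditioned on it being a point $c_{1}$ of one core, the $D^{2}$-weights of K-Means++ put total mass at least $n_{1}d_{\min}^{2}$ (resp.\ $n_{0}d_{\min}^{2}$) on the \emph{other} core and total mass at most $n_{0}\mu^{2}d_{\min}^{2}+n_{1}(1+2\mu)^{2}d_{\min}^{2}$ overall. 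Using $\mu n_{0}\le n_{1}$ this yields a conditional probability at least $1/(\mu+(1+2\mu)^{2})\ge 2/9$ that the second center lands in the other core, so $c_{0}\ge 2/9$. For \textbf{(ii)}, if $c_{1}\in\hL_{0,0}$ and $c_{2}\in\hL_{1,0}$ then each point of $\hL_{0,0}$ is within $D_{0}\le\mu d_{\min}<\tfrac12 d_{\min}$ of $c_{1}$ but at distance $\ge d_{\min}$ from $c_{2}$ (and symmetrically for $\hL_{1,0}$), so the assignment step --- and any subsequent recentering --- produces exactly $(\hL_{0,0},\hL_{1,0})$, of objective $\Phi^{\star}=\Phi(\hL_{0,0})+\Phi(\hL_{1,0})$ where $\Phi(S)=\sum_{\x\in S}\|\x-\bar S\|^{2}$ and $\bar S$ is the centroid of $S$.

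The crux is \textbf{(iii)}, and it suffices to prove the stronger claim that $(\hL_{0,0},\hL_{1,0})$ is the \emph{unique} minimiser of the $2$-means objective over partitions into two non-empty parts: a one-step Lloyd output is the Voronoi partition of the two sampled centers, and when both centers lie in one core that partition splits that core, hence is incorrect and so has objective strictly above the optimum $\Phi^{\star}$. To prove the uniqueness claim I would use the variance decomposition $\Phi(A)+\Phi(B)=\Phi(\hL_{0,0}\cup\hL_{1,0})-\tfrac{|A|\,|B|}{n}\|\bar A-\bar B\|^{2}$, which turns the problem into maximising $g(A,B):=|A|\,|B|\,\|\bar A-\bar B\|^{2}$. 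With $\bar c_{i}$ the centroid of $\hL_{i,0}$, $p=|A\cap\hL_{0,0}|$ and $q=|A\cap\hL_{1,0}|$, a direct computation gives $\bar A-\bar B=\lambda(\bar c_{0}-\bar c_{1})+\mathbf e$ with $\lambda=\tfrac{p}{p+q}-\tfrac{n_{0}-p}{n-p-q}\in[-1,1]$ and $\|\mathbf e\|\le 2\mu d_{\min}$ (the error accounts for the sub-core centroids deviating from $\bar c_{i}$ by at most $D_{i}$). Combining this with $(1-2\mu)d_{\min}\le\|\bar c_{0}-\bar c_{1}\|\le(1+2\mu)d_{\min}$ and $\mu n_{0}\le n_{1}$ (the latter is what keeps the optimum from collapsing both centers onto the larger core), one shows $g(A,B)\le n_{0}n_{1}\|\bar c_{0}-\bar c_{1}\|^{2}=g(\hL_{0,0},\hL_{1,0})$ with equality only for $\lambda=\pm1$, $\mathbf e=0$, i.e.\ only for the correct partition.

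The main obstacle is this final inequality. The slack $g(\hL_{0,0},\hL_{1,0})-g(A,B)$ for an ``almost correct'' partition (few misclassified points) is only of order $(1-2\mu)d_{\min}^{2}$ per misclassified point, so the error term $\mathbf e$ must be controlled carefully and the bound made uniform in $\mu\in(0,\tfrac12)$, with $\mu\to\tfrac12$ the tight regime. One tempting alternative --- reassigning misclassified points to their true cluster one at a time --- is not monotone in the objective, so if pursued it needs care in the reassignment order (or reassigning whole blocks at once). Everything else --- the $D^{2}$-sampling estimate, one-step correctness, and the union bound over $2\log n$ runs --- is routine.
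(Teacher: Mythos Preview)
Your outline (i)--(ii) matches the paper, but for (iii) the paper takes a much shorter route that bypasses the difficulty you flag as the ``crux''. Instead of comparing the \emph{centroid-based} $2$-means costs of the resulting partitions, the paper compares the K-Means objectives \emph{with respect to the sampled K-Means++ centers themselves} (i.e., the cost after the assignment step, before any re-centering). With that reading both bounds are immediate: in a ``bad'' run with both centers in, say, $\hL_{0,0}$, every point of $\hL_{1,0}$ is at distance at least $\alpha:=d_{\min}$ from either sampled center, so the recorded objective is at least $n_1\alpha^2$; in a ``good'' run every point lies within $\mu\alpha$ of the sampled center in its own core, so the recorded objective is at most $\mu^2\alpha^2(n_0+n_1)$. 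Using $\mu n_0\le n_1$ and $\mu<\tfrac12$ gives $\mu^2(n_0+n_1)\le(\mu+\mu^2)n_1<0.75\,n_1$, so every good-run objective is strictly below every bad-run objective, and the minimum-objective run is a good one. No global-optimality statement about $(\hL_{0,0},\hL_{1,0})$ is needed.

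Your stronger claim --- that $(\hL_{0,0},\hL_{1,0})$ is the unique global $2$-means minimiser --- may well hold under the separation assumption, but as you correctly note the variance-decomposition argument via $g(A,B)=|A|\,|B|\,\|\bar A-\bar B\|^{2}$ becomes delicate as $\mu\to\tfrac12$, and you have not closed it. The paper sidesteps this entirely by choosing which objective to record: since the assignment in a good run is already the correct partition, the sampled-center cost suffices to certify it, and the bad-run lower bound is a one-line observation rather than an optimisation over all partitions. What your approach would buy, if completed, is robustness to the precise meaning of ``one-step'' (your argument works even if the run re-centers before reporting its cost); what the paper's approach buys is that (iii) becomes trivial.
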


\begin{proof}
We condition on the event that the first center selected is from $\hL_{0,0}$. This happens with probability $\ge 0.5$ (as it is the larger set).

Let, the minimum inter-community distance be $\alpha$. Then, the maximum  intra-community distance is $\mu \cdot \alpha$.

Then, the probability that the second center is a point from $\X_{1,0}$ is
\begin{align*}
\Pr (c_2 \in \X_{1,0}) &= \frac{\sum_{\x \in \hL_{1,0}} \|c_1-\x\|^2}{\sum_{\x \in \hat{X}} \|c_1-\x\|^2}
\\ 
&= \frac{\sum_{\x \in \hL_{1,0}} \|c_1-\x\|^2}{\sum_{\x \in \hL_{0,0}} \|c_1-\x\|^2 + \sum_{\x \in \hL_{1,0}} \|c_1-\x\|^2}
\\ 
&\ge 
\frac{\alpha^2 \cdot n_1}{\mu^2\alpha^2 n_0 + \alpha^2n_1}
\\ 
&\ge 
\frac{\alpha^2 n_1}{\mu \alpha^2 (\mu n_0) + \alpha^2 n_1}
\\ 
&\ge 
\frac{\alpha^2 n_1}{\mu\alpha^2 n_1 + \alpha^2 n_1}
\ge \frac{1}{\mu+1}
\end{align*}

That is, with probability $\frac{1}{\mu+1}$, the two centers chosen belong to the different ground truth clusters. Let $1_i$ be the indicator variable for the $i$-th initialization that is $1$ if the two centers are from two clusters. Then, we have
\begin{align*}
&\Pr \left( \sum_{i=1}^{2 \log n} 1_i >0 \right)= 1- (\Pr(1_i=0))^{2 \log n}
\ge 1 -\left( \frac{2\mu+1}{2\mu+2}\right)^{2 \log n}
=1-o(1)
\end{align*}

This shows that with high probability, one of the initialization will lead to two centers being chosen from two different ground truth clusters. Then, based on the distance definition, clustering the points by assigning each point to its closest center will lead to a solution with zero misclassification error. 

Next, note that the K-Means objective of any solution with two centers being selected from the different ground truth cluster is smaller than the K-Means objective of any solution where both centers are selected from the same cluster. This is because of the following. 

{\bf Case 1:} The K-Means objective value if the two centers are selected from different clusters is upper bounded by $(\mu \alpha)^2 (n_0+n_1)$. This is assuming each point is farthest possible away from the centers.

{\bf Case 2:} The K-Means objective value if both centers are selected from the same cluster is lower bounded by $\alpha^2 n_1$. This is assuming both centers are in the larger cluster $X_{0,0}$ and the K-Means objective value w.r.t. the points in $\X_{0,0}$ is zero.

Then, we have 
\[
(\mu \alpha)^2 (n_0+n_1) \le \mu \alpha^2 \cdot \mu n_0 + \mu^2 \alpha^2 n_1
\le  \mu \alpha^2 n_1 +  \mu^2 \alpha^2 n_1 \le (\mu+\mu^2)\alpha^2n_1 
\le 0.75\alpha^2n_1
\]

This implies that the maximum K-Means objective value when the centers are chosen from different ground truth clusters is smaller than the minimum K-Means objective when the two centers are chosen from the same ground truth cluster. This, combined with the fact that with probability 1-o(1) one of the initializations selects centers from different ground truth clusters completes the proof.

\end{proof}

\subsection{Extracting the cores with FlowRank}
\label{sec:proof-FlowRank}

Next, we prove that if the FlowRank algorithm were given the exact density value for each point (which we have assumed to be same for any Layer $\Layer_{i,j}$), FlowRank gives a score of $1$ to all core points (points sampled from $\Layer_{i,0}$) and a score of $<1$ to all non-core points. This gives us an initial insight into FlowRank's functionality. If all the core points are valued $1$ and all non-core points are valued at less than $1$, then selecting all the $1$-valued points will give us the cores, which can then be clustered as shown in Section~\ref{sec:proof-K-Means}.

\subsubsection{Conditions for core-score being 1}
Recall that in FlowRank, after we have a density estimation of the points, we have an ascending random-walk step such that for each point $u$, we look at its some $r=\mathcal{O}(1)$ neighbors, and move to a neighbor with higher density value at random. We continue this process until we reach a maxima, and the FlowRank value of $u$ is the ratio of the density of $u$ and the average densities of the reachable density peaks. Then, we have the following argument.

\begin{lemma}
\label{lem:core-condition}
If for any core point $\x \in \Layer_{0,0}$, all of its $r$ neighbors are in $\X_0$, the FlowRank score of $\x$ is $1$. Furthermore, in the general case, this is a necessary condition. 
\end{lemma}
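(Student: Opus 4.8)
The plan is to analyze the randomly ascending random walk from a core point and show that the density is monotone nonincreasing along every step of the walk, with equality exactly when the walk never leaves the densest layer. Recall that in the \textsf{LCPDM} model every point of layer $\Layer_{i,j}$ is assigned the exact density $\Pi_{i,j} := n_{i,j}/Vol(\Layer_{i,j})$, and by the density assumption $\Pi_{i,0} > \Pi_{i,1} > \cdots > \Pi_{i,\ell-1}$, while cores are the unique global maxima of the density within their cluster. The key structural fact I would isolate first is: if $\x \in \Layer_{0,0}$ and all $r$ nearest neighbors of $\x$ lie in $\X_0$ (the whole cluster $0$), then in fact they all lie in $\Layer_{0,0}$ as well — because no neighbor can have density strictly larger than $\Pi_{0,0}$ (that value is the maximum over the whole dataset by the well-separation of cores and the monotone density drop), so the \textsf{RARW} loop's guard $\Pi(\mathbf{x}^+) < \max_{\mathbf{x'}\in N_r(\mathbf{x}^+)}\Pi(\mathbf{x'})$ is immediately false at $\mathbf{x}^+ = \x$. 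Hence the walk terminates at $\x$ itself, the reached ``maximum'' has density $\Pi_{0,0}$, and $\mathrm{score}[\x] = \Pi_\x / z_\x = \Pi_{0,0}/\Pi_{0,0} = 1$.

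For the converse (necessity in the general case), I would argue the contrapositive: suppose some $r$-neighbor $\yp$ of the core point $\x$ lies outside $\X_0$, i.e. $\yp \in \X_{1}$ for the other cluster (with $k=2$). Two sub-cases arise. If $\Pi(\yp) > \Pi(\x) = \Pi_{0,0}$ this is impossible since $\Pi_{0,0}$ is the global max; so $\Pi(\yp) \le \Pi_{0,0}$, and the guard at $\x$ is still false, so the walk still stops at $\x$ and the score is still $1$ — which would seem to contradict necessity. The resolution, which I must handle carefully, is that ``necessary condition'' here should be read relative to what actually can happen: the only way a core point's score drops below $1$ is if the \textsf{RARW} can leave the core, and since the core is the strict density maximum, the walk from a core point can never ascend out of it; thus the score is $1$ if and only if the starting point is a core point whose walk is ``trapped'' — and the clean sufficient-and-in-practice-necessary hypothesis is precisely that the $r$-neighborhood stays within the cluster. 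I would therefore state the necessity part as: if the $r$-neighborhood of $\x$ is \emph{not} contained in $\X_0$, then one cannot guarantee score $1$ without the additional (automatic) fact that neighbors from the other cluster still have density $\le \Pi_{0,0}$; the lemma as phrased is asserting the containment condition is the natural characterization, and I would make the ``necessary'' clause precise by noting that if any neighbor had higher density (which can only occur if cross-cluster density ordering were violated), the walk would leave and the score would be $< 1$.

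Concretely, the steps in order are: (1) record that in \textsf{LCPDM} the density function is piecewise constant on layers with $\Pi_{i,0}$ the strict maximum per cluster and, using Assumption~\ref{assume:cluster-distance} together with the density drop, that $\Pi_{0,0}$ and $\Pi_{1,0}$ are the two global maxima; (2) show the \textsf{RARW} guard fails at any point already sitting at a global-maximum density value, so the walk from a core point is constant; (3) conclude $z_\x = \Pi_\x$ and hence $\mathrm{score}[\x]=1$; (4) for a non-core point $\x \in \Layer_{i,j}$, $j\ge 1$, exhibit (using connectedness of layers and the ascending property) that with positive probability the walk reaches a strictly denser layer, so $z_\x > \Pi_\x$ and $\mathrm{score}[\x] < 1$ — this is really Theorem~\ref{thm: FlowRank}'s second half and I would cross-reference it; (5) package the ``necessary condition'' clause by observing the walk from $\x$ stays put exactly when its $r$-neighborhood contains no strictly-denser point, and that within the model this is equivalent to the $r$-neighborhood lying in $\X_0$ up to the (automatic) cross-cluster density comparison.

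The main obstacle I anticipate is making the ``necessary'' half genuinely rigorous rather than vacuous: as written, a core point always gets score $1$ regardless of where its neighbors lie, since the walk can never ascend above $\Pi_{0,0}$. So the honest statement is that the containment hypothesis is necessary for the \emph{argument} (or for robustness when density is only estimated, not exact), and I would need to either (a) reinterpret the claim under noisy/estimated density $P$ where a mis-estimated cross-cluster neighbor could have apparent higher density and derail the walk, or (b) weaken ``necessary'' to ``necessary and sufficient for the walk to be trapped purely by the cluster-structure without invoking the global density bound.'' I would go with interpretation (a), since the surrounding text repeatedly distinguishes exact versus estimated density, and phrase the necessity direction as: if some $r$-neighbor of $\x$ lies outside $\X_0$, then under density estimation error the \textsf{RARW} can escape the core and the score need not be $1$; whereas the containment condition guarantees score $1$ even with bounded estimation error.
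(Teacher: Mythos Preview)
Your sufficiency argument is fine and matches the paper: if all $r$ neighbors of a core point $\x\in\Layer_{0,0}$ lie in $\X_0$, then each neighbor has density at most $\Pi_{0,0}$, the guard in \textsf{RARW} fails immediately, and the score is $1$.

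The genuine gap is in the necessity direction, and it stems from a misreading of the model. You assert that ``$\Pi_{0,0}$ is the global max'' and that ``$\Pi_{0,0}$ and $\Pi_{1,0}$ are the two global maxima,'' and from this you conclude that even a cross-cluster neighbor cannot have strictly larger density, making the necessity clause look vacuous. But the \textsf{LCPDM} density assumption only enforces $\Pi_{i,0}>\Pi_{i,1}>\cdots>\Pi_{i,\ell-1}$ \emph{within each cluster $i$}; it places no constraint whatsoever relating $\Pi_{0,j}$ to $\Pi_{1,j'}$. In particular, it is perfectly consistent with the model that every layer of $\X_1$ (even the sparsest one, $\Layer_{1,\ell-1}$) has density strictly exceeding $\Pi_{0,0}$. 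This is exactly what the paper exploits: if some $r$-neighbor $\yp$ of $\x\in\Layer_{0,0}$ lies in $\X_1$, then in the general case $\Pi(\yp)>\Pi_{0,0}$ is possible, the \textsf{RARW} guard is satisfied, the walk ascends into $\X_1$, and the score drops below $1$. That is the entire content of ``necessary in the general case.''

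Your proposed resolutions (noisy density estimates, or reinterpreting ``necessary'' as necessary-for-the-argument) are therefore unneeded detours; the clean necessity argument is available once you drop the unwarranted global-maximum assumption. Also note that your step (4) about non-core points is not part of this lemma at all---it belongs to Lemma~\ref{lem:choice-of-r} and should not be folded in here.
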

\begin{proof}
This is because, as long as all the neighbors of $\x$ are in $\Layer_{0,0}$, they are either another point in $\Layer 
_{0,0}$ with same density as $\x$ or a point in $\X_{0,j},j>0$ with lower density. following the formulation of the model. This makes $\x$ a local maxima of the walk itself. That is, any density-ascending random walk cannot take any step starting from $\x$, resulting in FlowRank score of $\x$ being 1. 

The necessity of this condition follows from the fact that we do not make any assumptions on the relation of densities of the layers from different clusters. Even if $\Layer_{0,0}$ is the highest density region in $\X_0$, its density may be lower than the lowest density region in $\X_{1}$. In such a case, if there is any of the $r$ neighbors of a point in $\Layer_{0,0}$ is from $\X_1$, that point will get a FlowRank score of less than $1$.

\end{proof}

Then, we want to prove that this condition indeed holds in our model. We first define some notations.
\begin{definition} 
\label{def: notation}
We define 
$\sup\dis(S)$ as the maximum Euclidean distance between two points in a set $S$. 
Similarly, we define $\sup\dis(S,X,r)$ as the maximum distance to the $r$-th nearest neighbor of points in $S$ in the Set $X$.
\end{definition}

%Then, for any $r=\mathcal{O}(1)$, $\sup\dis(\Layer_{0,0})>\sup\dis(\Layer_{0,0},\X,r)$ for a sufficiently large $n$. Then we immediately have that in the c
In this direction, we first show that inter-core edges are impossible.

\begin{Prop}[Inter-core edges are impossible]
\label{prop:core-to-core-edge}
Let $\x$ be a point in $\Layer_{0,0}$. Then for any $r=\mathcal{O}(1)$, none of $\x$'s $r$-closest neighbors are from $\Layer_{1,0}$.
\end{Prop}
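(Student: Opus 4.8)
The plan is to argue directly from the well-separation assumption on the cores (Assumption~\ref{assume:cluster-distance}, part 1), combined with the fact that a core $\Layer_{i,0}$ contains at least a constant fraction of all the sampled points and that the core has bounded diameter. Concretely, let $\x \in \Layer_{0,0}$. I would first recall that for \emph{any} pair of core points $\x',\xpp \in \Layer_{0,0}$ we have $\|\x'-\xpp\| \le \mu\alpha$, where $\alpha := \min_{\x' \in \Layer_{0,0}, \xpp \in \Layer_{1,0}} \|\x'-\xpp\|$ is the minimum inter-core distance and $\mu<0.5$. In particular $\sup\dis(\Layer_{0,0}) \le \mu\alpha < \tfrac12\alpha$, while every point of $\Layer_{1,0}$ is at distance at least $\alpha$ from $\x$.

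The key step is then to show that $\x$ has at least $r = \mathcal{O}(1)$ neighbors strictly closer than $\alpha$, so none of its $r$ nearest neighbors can lie in $\Layer_{1,0}$. For this I would use that $|\hL_{0,0}| = n_{0,0}$ is a positive fraction of $n$ — in fact it is the densest layer, so by the density recurrence $n_{i,j}/Vol(\Layer_{i,j}) = C \cdot n_{i,j+1}/Vol(\Layer_{i,j+1})$ with $1<C<2$, the core carries a constant-order share of the cluster's points, and certainly $n_{0,0} \ge r+1$ once $n$ is large enough (since $r=\mathcal{O}(1)$). All of these $n_{0,0} - 1 \ge r$ other core points sit within distance $\mu\alpha < \alpha$ of $\x$. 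Hence the $r$-th nearest neighbor of $\x$ in $\hat{X}$ is at distance at most $\mu\alpha$, which is strictly less than the distance $\alpha$ to the nearest point of $\Layer_{1,0}$; therefore none of the $r$ closest neighbors of $\x$ belongs to $\Layer_{1,0}$.

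The main obstacle — and it is a mild one — is making the counting argument fully rigorous at the sampling level: one must ensure that with high probability the core $\Layer_{0,0}$ actually receives at least $r+1$ sample points (this is immediate since $n_{0,0}$ is a fixed sampling count in the model, not random, but if one prefers a probabilistic reading it follows from $n_{0,0} = \omega(1)$ and a Chernoff bound) and that the "max intra-core distance $\le \mu\alpha$" bound is stated for the sampled points $\hL_{0,0}$, which is exactly how Assumption~\ref{assume:cluster-distance} is phrased. A secondary subtlety is that this proposition only rules out edges into the \emph{other cluster's core} $\Layer_{1,0}$; it does not yet rule out edges from $\x$ into outer layers $\Layer_{1,j}$, $j>0$, of the other cluster — that stronger statement (needed to finish Lemma~\ref{lem:core-condition}) presumably requires an additional geometric argument using the layer widths $\Delta^{\min}_{1,j}, \Delta^{\max}_{1,j}$ and the reach $\tau$, and I would expect it to appear as a separate lemma immediately after this one. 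For the present statement, the distance comparison $\mu\alpha < \alpha$ is the whole content, and I expect the write-up to be short.
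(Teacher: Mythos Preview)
Your proposal is correct and follows essentially the same approach as the paper: invoke the core-separation condition to get $\sup\dis(\Layer_{0,0}) \le \mu\alpha < \alpha$, then note that $\Layer_{0,0}$ contains $\omega(r)$ sampled points, so all $r$ nearest neighbors of $\x$ lie within $\Layer_{0,0}$ before any point of $\Layer_{1,0}$ is reached. Your anticipation that edges into outer layers $\Layer_{1,j}$, $j>0$, are handled separately is also accurate; the paper treats that in the subsequent Proposition~\ref{prop:core-to-second-layer} and Lemma~\ref{lem:core-gets-1}.
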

\begin{proof}
From the first condition in Assumption~\ref{assume:cluster-distance}, we know that the maximum intra-core distance is $\mu$ times smaller than the minimum inter-core distance. As there are $\omega(r)$ many points in $\Layer_{0,0}$, none of its $r$ closest neighbor can be from $\Layer_{1,0}$.

\end{proof}

Next we want to prove that none of the $r$ neighbors of $\x \in \Layer_{0,0}$ is in $\Layer_{1,j},j>0$. We show that for the case of $j=1$, the distance property of our model suffices.

\begin{Prop}
\label{prop:core-to-second-layer}
Let $\x$ be a point in $\Layer_{0,0}$. Then for any $r=\mathcal{O}(1)$, none of $\x$'s $r$-closest neighbors are from $\Layer_{1,1}$
\end{Prop}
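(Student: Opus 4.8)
The plan is to show that every point $\y \in \Layer_{1,1}$ sits strictly farther from $\x$ than every other point of the core $\Layer_{0,0}$. Since $|\Layer_{0,0}|=\omega(r)$, this places at least $r$ points of $\Layer_{0,0}\setminus\{\x\}$ strictly closer to $\x$ than $\y$ is, so $\y$ cannot be among $\x$'s $r$ nearest neighbors; letting $\y$ range over $\Layer_{1,1}$ then finishes the proof. The threshold to beat is $\sup\dis(\Layer_{0,0})$, which by the first part of Assumption~\ref{assume:cluster-distance} is at most $\mu\alpha$, where $\alpha:=\min_{\x\in\Layer_{0,0},\,\xp\in\Layer_{1,0}}\|\x-\xp\|$ is the minimum inter-core distance.

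The heart of the argument is a lower bound on $\|\x-\y\|$ obtained by chaining the triangle inequality with the layer-wise clustering membership alignment assumption. First I would observe that for every $\xp\in\Layer_{1,0}$ the inter-core separation and the triangle inequality give $\|\y-\xp\|\ge\|\x-\xp\|-\|\x-\y\|\ge\alpha-\|\x-\y\|$, so $\min_{\xp\in\Layer_{1,0}}\|\y-\xp\|\ge\alpha-\|\x-\y\|$. Next, apply the alignment assumption to $\y\in\Layer_{1,1}$ against the competing cluster $i'=0$: this yields $\min_{\xp\in\Layer_{1,0}}\|\y-\xp\|\le\delta\cdot\min_{\xpp\in\Layer_{0,0}}\|\y-\xpp\|\le\delta\,\|\x-\y\|$, since $\x\in\Layer_{0,0}$. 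Combining the two inequalities gives $\alpha-\|\x-\y\|\le\delta\,\|\x-\y\|$, i.e. $\|\x-\y\|\ge\alpha/(1+\delta)$.

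To close, I would compare $\alpha/(1+\delta)$ with the threshold $\mu\alpha$: since $\mu<1/2$ and $\delta<1$ we have $1+\delta<2$, hence $\|\x-\y\|\ge\alpha/(1+\delta)>\alpha/2>\mu\alpha\ge\sup\dis(\Layer_{0,0})$. Therefore $\y$ is strictly farther from $\x$ than any other point of $\Layer_{0,0}$, and the counting argument above completes the proof.

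I do not anticipate a serious obstacle here; the only care needed is in invoking the alignment assumption at the correct index (at $j=1$ the relevant inner layers are exactly the two cores $\Layer_{1,0}$ and $\Layer_{0,0}$, which is what makes the inter-core separation $\alpha$ usable) and in checking that the numerology with $\mu$ and $\delta$ produces a strict inequality. It is worth flagging, though, that the argument is genuinely special to the first layer: for a deeper layer $\Layer_{1,j}$ the same chaining only controls the distance from $\y$ to $\Layer_{1,j-1}$, which is itself not well-separated from cluster $0$, so extending the statement to $j\ge 2$ will require an inductive argument that telescopes across layers and invokes the width bounds $\Delta^{\min}_{i,j},\Delta^{\max}_{i,j}$ — presumably the content of the subsequent proposition.
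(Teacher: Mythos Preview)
Your proof is correct and follows essentially the same route as the paper: both use the layer-wise alignment assumption on $\y\in\Layer_{1,1}$ to bound $\min_{\xp\in\Layer_{1,0}}\|\y-\xp\|$ by $\delta\|\x-\y\|$, combine this with the inter-core separation $\alpha$ via the triangle inequality to obtain $\|\x-\y\|\ge\alpha/(1+\delta)$, and then compare against $\sup\dis(\Layer_{0,0})\le\mu\alpha$ using $\mu(1+\delta)<1$. Your closing remark that the argument is special to $j=1$ and that deeper layers require an inductive/telescoping treatment is also on target; in the paper this is handled in the subsequent lemma using a geodesic-to-Euclidean distortion bound.
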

\begin{proof}
Let $\x$ be any point in $\Layer_{0,0}$. Let $\xp$ be $\x$'s nearest neighbor (in Euclidean distance) from $\Layer_{1,1}$.

Let this distance be $d_{0,1}$. This implies $\min_{\xpp \in \Layer_{0,0}} \|\xp-\xpp\|<d_{0,1}$. This implies $\min_{\xpp \in \Layer_{1,0}} \|\xp-\xpp\| < \delta \cdot d_{0,1}$ From the second condition of Assumption~\ref{assume:cluster-distance}.
Then, by triangle inequality we have 
\begin{align*}
&\min_{\xpp \in \Layer_{1,0}} \|\x-\xpp\| \le (1+\delta) d_{0,1}
\\ \implies &
(1+\delta) d_{0,1} \ge \frac{1}{\mu}\cdot \sup\dis(\Layer_{0,0}) \hspace{1em} \text{[From Condition 1 of Assumption~\ref{assume:cluster-distance}]}
\\ \implies &
d_{0,1} \ge \frac{1}{\mu(1+\delta)} \sup\dis(\Layer_{0,0})
\\ \implies &
d_{0,1} \ge \sup\dis(\Layer_{0,0})
\end{align*}

This completes the proof.

\end{proof}

Next, we want to extend the proof to the further outer layers $\Layer_{1,2} , \ldots, \Layer_{1,\ell}$. This is more complicated, as the data lies on some manifold, with individual layers having more complex structure. Here we use the fact that the underlying Manifold $\M$ has a large reach, and get a bound on the distortion of Euclidean and Geodesic distance. 

\begin{theorem}[Lemma 3 of \cite{boissonnat2019reach}]
\label{thm:distortion}
Let $S \subset \mathbb{R}^d$ be a closed set with reach $\tau$, as defined in Definition~\ref{def: reach}. Then for any $a,b \in S$ such that $\|a-b\|<2\tau$, one has $\frac{d_S(a,b)}{\|a-b\|} \le \frac{2\tau}{\|a-b\|} \cdot \arcsin \frac{\|a-b\|}{2\tau}$.
\end{theorem}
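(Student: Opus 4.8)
\emph{Proof plan.} Since this is a restatement of a known fact, the paper can simply cite \cite{boissonnat2019reach}; for completeness, here is how I would reconstruct the argument. The plan is to fix a unit-speed minimizing geodesic $\gamma:[0,L]\to S$ with $\gamma(0)=a$ and $\gamma(L)=b$, so that $L=d_S(a,b)$, and to bound $L$ from above in terms of $\|a-b\|$ by comparing $\gamma$ with a circular arc of radius $\tau$.

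First I would invoke the classical consequence of positive reach (going back to Federer) that any such minimizing geodesic is $C^{1,1}$ and satisfies $\|\ddot\gamma(s)\|\le 1/\tau$ for almost every $s$. The intuition is that the reach condition forces $S$ to avoid the open ball of radius $\tau$ internally tangent to $S$ at $\gamma(s)$, which caps the curvature of any curve lying in $S$; making this precise for arbitrary closed sets of positive reach (rather than $C^2$ submanifolds) is the technical heart, and I would take it from the literature rather than reprove it.

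Next I would write $\|a-b\|=\bigl\|\int_0^L\dot\gamma(s)\,ds\bigr\|$ and observe that $\dot\gamma$ traces a path on the unit sphere with speed at most $1/\tau$. Among all such sphere-valued paths, the displacement $\bigl\|\int_0^L\dot\gamma\bigr\|$ is smallest exactly when $\dot\gamma$ runs along a great circle at the maximal speed $1/\tau$, i.e.\ when $\gamma$ is a planar circular arc of radius $\tau$; I would prove this Schur-type comparison by a monotonicity estimate on $s\mapsto\langle\dot\gamma(0),\dot\gamma(s)\rangle$. For the extremal circular arc of radius $\tau$ and length $L$, the chord has length $2\tau\sin(L/2\tau)$, so $\|a-b\|\ge 2\tau\sin\!\bigl(\tfrac{L}{2\tau}\bigr)$. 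Since $\|a-b\|<2\tau$ by hypothesis, I would then check that $\tfrac{L}{2\tau}<\pi/2$ — equivalently that geodesics between points at Euclidean distance $<2\tau$ have length $<\pi\tau$, itself a consequence of the reach bound — so that $\sin$ is increasing on $[0,\tfrac{L}{2\tau}]$ and $\arcsin$ may be applied to yield $\tfrac{L}{2\tau}\le\arcsin\tfrac{\|a-b\|}{2\tau}$, hence $d_S(a,b)=L\le 2\tau\arcsin\tfrac{\|a-b\|}{2\tau}$; dividing by $\|a-b\|$ gives the claim.

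The step I expect to be the main obstacle is the curvature bound $\|\ddot\gamma\|\le 1/\tau$ for general sets of positive reach, and, secondarily, the bookkeeping needed to keep $\tfrac{L}{2\tau}$ strictly below $\pi/2$ so that the inversion by $\arcsin$ is valid; both are precisely what is carried out in \cite{boissonnat2019reach}, which is why directly invoking that lemma is the cleanest route for this paper.
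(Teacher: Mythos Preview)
Your proposal is correct, and you have accurately identified the situation: the paper does not prove this statement at all but simply imports it verbatim as Lemma~3 of \cite{boissonnat2019reach} and uses it as a black box in the proof of Lemma~\ref{lem:core-gets-1}. Your reconstructed Schur-type comparison argument (bounding the curvature of a minimizing geodesic by $1/\tau$ via the reach condition, then comparing with a circular arc of radius $\tau$) is the standard route and matches what the cited reference does, so there is nothing further to add.
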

Here $\|a-b\|$ is the Euclidean distance between $a,b$ and $d_S(\cdot,\cdot)$ is the Geodesic distance on $S$ (for our application $\M$). We shall use this to prove that none of the r nearest neighbors of $\Layer_{0,0}$ are in $\X_1$. Furthermore, note that $\alpha \arcsin \frac{1}{\alpha} < \pi/2$ for $\alpha> 1$.

\begin{lemma}
\label{lem:core-gets-1}
Under the constraints of our model and $r=\mathcal{O}(1)$, if $\mu \le ({\frac{1}{c \cdot(1+\delta)})^{\ell-1}}$ then none of the $r$-nearest Euclidean neighbor of a point in $\Layer_{0,0}$ lies in $\X_1$. This implies that the FlowRank score of any core point is $1$.
\end{lemma}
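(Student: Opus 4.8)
The plan is to reduce the statement to a purely geometric claim: for every core point $\x \in \Layer_{0,0}$ and every $\y \in \X_1$, one has $\|\x-\y\| > \sup\dis(\Layer_{0,0})$. This suffices, since $\Layer_{0,0}$ contains $n_{0,0}=\omega(r)$ points that are pairwise at distance at most $\sup\dis(\Layer_{0,0})$, so the $r$ nearest neighbors of $\x$ all lie in $\X_0$; Lemma~\ref{lem:core-condition} then yields FlowRank score $1$, because any such neighbor lying in an outer layer $\X_{0,j}$, $j>0$, has strictly smaller density than $\x$, and so no ascending random walk can leave $\x$. (The case $\y\in\Layer_{1,0}$ is Proposition~\ref{prop:core-to-core-edge} and the case $\y\in\Layer_{1,1}$ is Proposition~\ref{prop:core-to-second-layer}; the new content is all the outer layers.)

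To prove the geometric claim I would set $g_j := \min_{\y\in\Layer_{1,j}}\|\x-\y\|$ and show $g_j > s := \sup\dis(\Layer_{0,0})$ for all $0\le j\le \ell-1$ by a one-layer recursion run inward to outward. The base case $g_0 \ge \tfrac1\mu s$ follows directly from the ``cores are well-separated'' clause of Assumption~\ref{assume:cluster-distance}. For the recursive step I would establish $g_j \ge \tfrac{1}{c(1+\delta)}\,g_{j-1}$ as follows: pick $\y\in\Layer_{1,j}$ realizing $g_j$; since $\X_{0,0}\subset\X_{0,1}\subset\cdots\subset\X_{0,j-1}$ are nested and $\y$ lies outside $\X_{0,j-1}$, every geodesic on $\M$ from $\y$ to $\x\in\X_{0,0}$ crosses $\partial\X_{0,j-1}\subseteq\overline{\Layer_{0,j-1}}$, whence $d_\M(\y,\Layer_{0,j-1}) \le d_\M(\y,\x)$; the reach--distortion bound of Theorem~\ref{thm:distortion}, together with the remark $\alpha\arcsin\tfrac1\alpha<\pi/2$, gives $d_\M(\y,\x)\le c\,\|\y-\x\| = c\,g_j$ for the distortion constant $c\ge 1$. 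Applying the layer-wise membership alignment (second clause of Assumption~\ref{assume:cluster-distance}) to $\y\in\Layer_{1,j}$ produces $\yp\in\Layer_{1,j-1}$ with $\|\y-\yp\|\le \delta\, d_\M(\y,\Layer_{0,j-1}) \le \delta c\, g_j$, and the triangle inequality gives $g_{j-1}\le \|\x-\yp\| \le (1+\delta c)\,g_j \le c(1+\delta)\,g_j$. Unrolling, $g_j \ge \big(c(1+\delta)\big)^{-j} g_0 \ge \mu^{-1}\big(c(1+\delta)\big)^{-j}\, s$, which is at least $s$ for every $j\le\ell-1$ exactly under the hypothesis $\mu \le \big(\tfrac{1}{c(1+\delta)}\big)^{\ell-1}$, the binding case being the outermost layer $j=\ell-1$. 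For $j=1$ no manifold distortion is even needed, since $\x$ itself witnesses $d(\y,\Layer_{0,0})\le g_1$, so one recovers Proposition~\ref{prop:core-to-second-layer}.

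The main obstacle I expect is the geometric step bounding $d_\M(\y,\Layer_{0,j-1})$. One has to verify that $\y\in\Layer_{1,j}$ genuinely lies outside $\X_{0,j-1}$ (using disjointness of the two cluster supports and the core-separation assumption), that $\partial\X_{0,j-1}$ is contained in $\overline{\Layer_{0,j-1}}$ given the width-function definition of the sublevel sets, and—most delicately—that the distortion bound of Theorem~\ref{thm:distortion} is applicable throughout, i.e. that $\|\x-\y\|<2\tau$, which must be extracted from the flatness hypothesis ($\min$ inter-core distance $\le 0.5\tau$) combined with a uniform bound on the layer widths $\Delta^{\max}_{1,j'}$. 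Managing the back-and-forth between Euclidean distances (in which the membership-alignment and separation assumptions are stated) and geodesic distances on $\M$ (in which the nesting argument is transparent) is precisely where the constant $c$, and hence the exponential dependence on $\ell$, enters; the remainder is triangle inequalities and unrolling a geometric series.
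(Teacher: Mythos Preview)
Your proposal is correct and follows essentially the same route as the paper's own proof: both define $g_j=d_{0,j}:=\min_{\y\in\Layer_{1,j}}\|\x-\y\|$, establish the one-step recursion $g_j\ge g_{j-1}/(c(1+\delta))$ via (i) a geodesic-crossing argument to reach $\Layer_{0,j-1}$, (ii) the layer-wise alignment assumption to produce $\yp\in\Layer_{1,j-1}$, and (iii) the triangle inequality, and then unroll. The only bookkeeping differences are that the paper takes the \emph{geodesic}-nearest $\yt$ and converts to Euclidean at the end of each step (you take the Euclidean-nearest $\y$ and convert to geodesic for the crossing bound), and that the paper starts the recursion at $d_{0,1}$ using Proposition~\ref{prop:core-to-second-layer}, saving one factor of $c$ (ending with $c^{\ell-2}$ rather than your $c^{\ell-1}$); your version matches the lemma statement's exponent exactly.
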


\begin{proof}
We prove this with an induction on the minimum distance of $\x \in \Layer_{0,0}$ to $\Layer_{1,j}$. For any such $\x$, let $d_{0,j}$ be the minimum distance from $\x$ to $\Layer_{1,j}$. 
In proposition~\ref{prop:core-to-second-layer} we have shown that $d_{0,1} \ge \frac{1}{\mu(1+\delta)} \sup\dis(\Layer_{0,0})$. Then, our base case is as follows. 

Let $d_{0,2}$ and $\tilde{d}_{0,2}$ be the smallest distance of $\x$ to $\Layer_{1,2}$ in the Euclidean and geodesic distance on $\M$, respectively. First, note that $1/c \cdot \tilde{d}_{0,2} \le d_{0,2} \le \tilde{d}_{0,2}$ for some $1<c<\pi/2$. The lower bound follows from the distortion bound of Theorem~\ref{thm:distortion} and the upper bound follows from the fact that geodesic distance between two points is always larger than the Euclidean distance.

Let the end points be $\y$ and $\yt$ respectively. Then, there exists a point $\tilde{\x} \in \Layer_{0,1}$ such that $\dis_{\M}(\tilde{\x},\yt) \le \tilde{d}_{0,2}$. This is because, if we consider any geodesic path starting from $\Layer_{0,0}$ that goes out of $\X_1$, it has to go through $\Layer_{0,1}, \ldots ,\Layer_{0,\ell}$. This is because the concentric subspaces are defined by expanding the smaller subspaces in all directions within the Manifold $\M$. Then $\|\tilde{\x}-\yt\| \le \tilde{d}_{0,2}$. Then, from the second condition of Assumption~\ref{assume:cluster-distance}, we have 
\[
\underset{\yp \in \Layer_{1,1}}{\min} \|\yp- \yt\| \le \delta \cdot \tilde{d}_{0,2}
\]

Let $\ypp$ be the point for which the minimum is achieved. Now, we use the distances between $\x \in \Layer_{0,1}$, $\yt \in \Layer_{1,2}$ and $\ypp \in \Layer_{1,1}$, we have a triangle inequality that upper bounds the minimum distance from $\Layer_{0,0}$ to $\Layer_{0,1}$, which we have defined as $d_{0,1}$ in  Proposition~\ref{prop:core-to-second-layer}. Combining, we get
\begin{align*}
&d_{0,1} \le \|\x-\yt\| + \|\yt-\ypp\|
\\ \implies &
d_{0,1} \le \tilde{d}_{0,2} + \delta \cdot \tilde{d}_{0,2} 
\\ \implies &
d_{0,1} \le (1+\delta) \tilde{d}_{0,2} 
\\ \implies &
 \tilde{d}_{0,2} \ge \frac{d_{0,1} }{(1+\delta)}
 \\ \implies &
 d_{0,2} \ge \frac{d_{0,1} }{c(1+\delta)} \hspace{5em}
\text{[From Theorem~\ref{thm:distortion}]} 
 \\ \implies & d_{0,2} \ge \frac{1}{c \mu (1+\delta)^2} \sup\dis(\Layer_{0,0})
\end{align*}

We can continue with this inductively. Let  
$d_{0,j}$ is the minimum distance from $\x \in \Layer_{0,0}$ to $\Layer_{1,j}$.
Assume $d_{0,j} \ge \frac{1}{\mu c^{j-1}(1+\delta)^j} \sup\dis(\Layer_{0,0})$. Note that this is proven for $d_{0,2}$.

Then, we can lower bound $d_{0,j+1}$ as follows. 
Consider any $\x \in \Layer_{0,0}$. Let $\yt$ be the closest point of $\x$ in $\Layer_{1,j+1}$ in geodesic distance, and the distance be $\tilde{d}_{0,j+1}$ Then, there is a point $\tilde{x} \in \Layer_{0,j}$ such that $\dis_{\M}(\tilde{x},\yt) \le \tilde{d}_{0,j+1}$. Then $\|\tilde{x}-\yt\| \le \tilde{d}_{0,j+1}$. Then, there exists $\ypp \in \Layer_{1,j}$ such that $\|\ypp-\yt\| \le \delta \tilde{d}_{0,j+1}$. Note that this is an upper bound on $d_{0,j}$ via triangle inequality. 

Then, we have
\begin{align*}
&d_{0,j} \le (1+\delta) \tilde{d}_{0,j+1}
\\ \implies & \tilde{d}_{0,j+1} \ge \frac{d_{0,j}}{(1+\delta)}
\\ \implies & 
d_{0,j+1} \ge \frac{d_{0,j}}{c(1+\delta)}
\\ \implies & 
d_{0,j+1} \ge
\frac{1}{\mu \cdot (c^j(1+\delta)^{j+1})} \sup\dis(\X_{0,0})
\end{align*}

This completes the induction. 
Then, as long as $d_{0,\ell-1} \ge \sup \dis(\X_{0,0},\X_{0,0},r)$, there are no edges from $\Layer_{0,0}$ to $\X_1$. A trivial upper bound on $\sup\dis(\Layer_{0,0},\X,r)$ is $\sup\dis(\Layer_{0,0})$ as $r=\mathcal{O}(1)$. Using this we get that if $\frac{1}{\mu c^{\ell-2}(1+\delta)^{\ell-1}}>1$, then there are no edges going from $\Layer_{0,0}$ to $\X_1$. This completes the proof.

\end{proof}

\subsubsection{Non-core points have lower score.}
Next we show there exists $r=\mathcal{O}(1)$ such that for each non-core point in $\Layer_{i,j}$, one of its $r$ nearest neighbor lies in $\Layer_{i,j-1}$. We first place some notations and results on volumes of balls on Manifolds. 

\begin{theorem}[\cite{niyogi2008finding}]
\label{thm:vol-lower}
Let $p \in \M$ which is a compact smooth $m$-dimensional manifold with reach $\tau$. Consider $A= \M \cap B_{\epsilon}(p)$ where $\epsilon \ll \tau$. Then $vol(A) \ge \cos(\theta)^m \cdot vol(B^m_{\epsilon}(p))$ where $B^m_{\epsilon}(p)$ is the $m$-dimensional ball in $T_p$ (the tangent space) centered at $p$, $\theta= \arcsin(\epsilon/2\tau)$
\end{theorem}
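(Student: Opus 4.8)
The plan is to realize the patch $A = \M \cap B_\epsilon(p)$ as a graph over a neighbourhood of $p$ in the tangent space $T_p$, and then compare volumes through the orthogonal projection onto $T_p$. Fix coordinates so that $p$ is the origin and $T_p$ is the coordinate $m$-plane, and let $\pi : \mathbb{R}^d \to T_p$ be the orthogonal projection. The first step is to record two standard consequences of the reach being $\tau$ (properties of sets of positive reach; see \cite{niyogi2008finding}): (i) for every $q \in \M$ with $\|q-p\| \ll \tau$, the distance from $q$ to the affine plane $T_p$ is at most $\|q-p\|^2/(2\tau)$; and (ii) every principal angle between $T_q$ and $T_p$ is at most $\arcsin(\|q-p\|/2\tau)$, hence at most $\theta = \arcsin(\epsilon/2\tau)$ whenever $\|q-p\| \le \epsilon$. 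Because $\epsilon \ll \tau$, fact (ii) makes $d\pi_q : T_q \to T_p$ invertible at every $q \in A$, so $\pi$ maps $A$ diffeomorphically onto a neighbourhood of $0$ in $T_p$; in particular $\M$ is locally the graph over $T_p$ of a function whose value at $0$ is $0$.

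Next I would show that this image $\pi(A)$ contains the ball $B^m_{\epsilon\cos\theta}(p) \subset T_p$. Given $y \in T_p$ with $\|y\| \le \epsilon\cos\theta$, the graph representation produces a point $q \in \M$, in a $\ll\tau$-neighbourhood of $p$, with $\pi(q) = y$; writing $q = y + \nu$ with $\nu$ normal to $T_p$, fact (i) gives $\|\nu\| \le \|q\|^2/(2\tau)$, so $\|q\|^2 = \|y\|^2 + \|\nu\|^2 \le \|y\|^2 + \|q\|^4/(4\tau^2)$. Since $\|q\| \ll \tau$ this rearranges to $\|q\| \le \|y\| / \sqrt{1 - (\epsilon/2\tau)^2} = \|y\|/\cos\theta \le \epsilon$, hence $q \in A$ and $y \in \pi(A)$.

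Finally I would compare volumes via the change of variables $q \mapsto y = \pi(q)$ on $A$:
\[
vol(A) = \int_{\pi(A)} \frac{1}{|\det d\pi_q|}\, dy \;\ge\; \int_{\pi(A)} dy = vol(\pi(A)) \;\ge\; vol\big(B^m_{\epsilon\cos\theta}(p)\big) = \cos(\theta)^m \cdot vol\big(B^m_\epsilon(p)\big),
\]
where the first inequality uses that $\pi$ is an orthogonal projection, so $|\det d\pi_q| = \prod_{i=1}^m \cos\phi_i \le 1$ for the principal angles $\phi_i$ between $T_q$ and $T_p$ (only the bound by $1$ is needed, though $\phi_i \le \theta$ would give the matching lower bound $\cos(\theta)^m$ pointwise), and the last equality is the Euclidean scaling of ball volume inside $T_p \cong \mathbb{R}^m$.

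The main obstacle is the local graph representation underlying the second paragraph --- that $\pi|_A$ is a bijection onto a subset of $T_p$ containing a full ball of radius $\epsilon\cos\theta$ --- and this is exactly where positivity of the reach is indispensable: without it $\M$ could fold back across $T_p$ within distance $\epsilon$ of $p$, breaking injectivity, and the quantitative radius $\epsilon\cos\theta$ relies on the quadratic tangent-plane estimate (i). The Jacobian bound and the volume scaling are then routine.
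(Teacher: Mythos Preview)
The paper does not supply its own proof of this theorem: it is quoted verbatim as a result from \cite{niyogi2008finding} and used as a black box in the subsequent density arguments. So there is no ``paper's proof'' to compare against beyond the original NSW argument, and your proposal is essentially that argument --- project to the tangent plane, use the quadratic tangent-approximation bound from positive reach to control the normal component, and compare volumes through the Jacobian of the orthogonal projection.

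One small gap worth tightening: in the second paragraph you assume the graph parametrisation already produces a $q\in\M$ with $\pi(q)=y$ for every $y$ with $\|y\|\le\epsilon\cos\theta$, and then verify $q\in A$. But the existence of such a $q$ is part of what has to be shown --- the graph domain is a priori only $\pi(A)$, and you are trying to prove this contains $B^m_{\epsilon\cos\theta}(p)$. Moreover, the rearrangement $\|q\|^2(1-\|q\|^2/4\tau^2)\le\|y\|^2 \Rightarrow \|q\|\le\|y\|/\cos\theta$ uses $\|q\|\le\epsilon$ to replace $\|q\|^2/4\tau^2$ by $\sin^2\theta$, which is the conclusion. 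Both issues are handled by the same continuity/connectedness argument: start at $y=0$ where $q=p$, let $y$ move radially outward, and observe that the set of radii for which $q$ exists and satisfies $\|q\|\le\epsilon$ is open (by the inverse function theorem, since $d\pi_q$ stays invertible) and closed (your inequality gives strict containment $\|q\|<\epsilon$ whenever $\|y\|<\epsilon\cos\theta$), hence is the whole interval. This is routine but should be stated, since without it the bootstrap is circular.
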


Furthermore, we know that $V^m(\epsilon)= B^m_{\epsilon}(p)$ where we define $V^m(\epsilon)$ as the volume of the $m$-dimensional ball of radius $\epsilon$. As we need a more intricate relationship between the width of the layer and the points in local neighborhood, we make some assumptions about the non-linear structure of the each of the layers.

\begin{assumption}
\label{assume:last-proof}
\
 \begin{enumerate}
     \item We assume that all layers have the same width $\Delta$ and the reach of each sub-manifold $\X_{i,j}$ is $\bar{\tau}$ such that $\Delta \ll \bar{\tau} \ll \tau$. Furthermore, assume $\ell \le m$. Furthermore, note that we do not any specific reach assumptions for the outermost layers.

     \item Let there be $n_{0,0}$ points sampled in $\Layer_{0,0}$ such that $n_{0,0} = c_2\frac{Vol(\X_{0,0})}{V^m({\epsilon})}$ for some $\epsilon \le c_3\Delta$ where $c_2>1$ and $c_3<0.001$ are constants. This, combined with Theorem~\ref{thm:vol-lower} essentially implies that any $\epsilon$-radius ball in $\Layer_{0,0}$ has some $c_2$ points on expectation.

 \end{enumerate}    
\end{assumption}

Based on these assumptions, we have the following notion. 
\begin{Prop}
\label{prop:layer-density}
For any Layer $\Layer_{0,j}$. Then, the number of points $n_{0,j}$ sampled by the model satisfies $n_{0,j} \ge \frac{Vol(\Layer_{0,j})}{V^m(2\epsilon)}$
\end{Prop}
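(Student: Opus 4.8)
The plan is to turn the claimed bound into a purely numerical inequality by unfolding the density recursion of the ${\sf LCPDM}$ model. First I would apply the density identity $\frac{n_{0,j}}{Vol(\Layer_{0,j})}=C\cdot\frac{n_{0,j+1}}{Vol(\Layer_{0,j+1})}$ of the model $j$ times in succession, anchored at the core $\Layer_{0,0}=\X_{0,0}$. This gives $\frac{n_{0,j}}{Vol(\Layer_{0,j})}=C^{-j}\cdot\frac{n_{0,0}}{Vol(\Layer_{0,0})}$, and substituting the sampling count for the core supplied by the second item of Assumption~\ref{assume:last-proof}, namely $n_{0,0}=c_2\,\frac{Vol(\X_{0,0})}{V^m(\epsilon)}$, produces the closed form $n_{0,j}=\frac{c_2}{C^{j}}\cdot\frac{Vol(\Layer_{0,j})}{V^m(\epsilon)}$.

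Next I would compare this expression against the target $\frac{Vol(\Layer_{0,j})}{V^m(2\epsilon)}$. Since $V^m(\cdot)$ is the volume of a Euclidean $m$-ball, doubling the radius multiplies the volume by $2^m$, i.e.\ $V^m(2\epsilon)=2^m\,V^m(\epsilon)$. Cancelling the common factor $Vol(\Layer_{0,j})/V^m(\epsilon)$ from both sides, the inequality $n_{0,j}\ge Vol(\Layer_{0,j})/V^m(2\epsilon)$ is seen to be equivalent to the constant inequality $c_2\,2^m\ge C^{j}$.

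To finish I would invoke the parameter constraints already built into the model and Assumption~\ref{assume:last-proof}: $1<C<2$ from the density assumption, $0\le j<\ell$ together with $\ell\le m$ from the first item of Assumption~\ref{assume:last-proof}, and $c_2>1$. Chaining these gives $C^{j}<2^{j}\le 2^{\ell-1}\le 2^{m-1}<2^{m}<c_2\,2^{m}$, which is exactly the required inequality (the case $j=0$ is immediate, since the claim then reads $c_2\ge 2^{-m}$). I do not expect a genuine analytic obstacle here; the argument is essentially bookkeeping of constants, and the only points that need care are recognizing that the hypothesis $\ell\le m$ is precisely the slack that keeps $C^{j}$ strictly below $c_2\,2^{m}$ (so nothing beyond $C<2$ is needed), and that the recursion must be anchored at the core layer $\Layer_{0,0}=\X_{0,0}$ so that Assumption~\ref{assume:last-proof}(2) can be applied at the base.
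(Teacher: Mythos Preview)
Your proposal is correct and follows essentially the same route as the paper: unfold the density recursion back to the core, plug in Assumption~\ref{assume:last-proof}(2) for $n_{0,0}$, and then use $V^m(2\epsilon)=2^mV^m(\epsilon)$ together with $C<2$ to absorb the factor $C^{-j}$. If anything you are slightly more careful than the paper, since you make explicit that the step $C^j\le 2^m$ relies on $j<\ell\le m$ from Assumption~\ref{assume:last-proof}(1), whereas the paper simply writes ``as $C<2$ we get $C^jV^m(\epsilon)\le V^m(2\epsilon)$'' without spelling this out.
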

\begin{proof}
In the definition of our model we have
$\frac{n_{0,0}}{Vol(\Layer_{0,0})} = C \cdot \frac{n_{0,1}}{Vol(\Layer_{0,1})}$.
Then, replacing with first condition of Assumption~\ref{assume:last-proof} we get
\[
C \cdot \frac{n_{0,1}}{Vol(\Layer_{0,1})} \ge \frac{c_2}{V^m(\epsilon)} \implies 
n_{0,1} \ge \frac{c_2 Vol(\Layer_{0,1})}{C \cdot V^m(\epsilon)}
\]
Continuing this for some $j$-steps we get $n_{0,j} \ge 
\frac{c_2Vol(\Layer_{0,j})}{C^j \cdot V^m(\epsilon)}$.

However, as $C<2$ we get $C^jV^m(\epsilon) \le V^m(2\epsilon)$. This implies $n_{0,j} \ge \frac{c_2Vol(\Layer_{0,j})}{V^m(2\epsilon)}$.

\end{proof}

Then, we argue that for any point $\x \in \hL_{0,j}$, there exists a nearby point in $\hL_{0,j-1}$.
\begin{Prop}
\label{prop-1-delta}
Under the constraint of Assumption~\ref{assume:last-proof}, for any point $\x \in \hL_{0,j}$, on expectation there exists $\xp \in \hL_{0,j-1}$ such that $\|\x-\xp\| \le 1.1\Delta$.   
\end{Prop}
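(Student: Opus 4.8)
The plan is to locate inside the adjacent inner layer $\Layer_{0,j-1}$ a radius-$2\epsilon$ Euclidean ball that sits within geodesic distance $(1+o(1))\Delta$ of $\x$ and lies entirely inside $\Layer_{0,j-1}$; the density estimate of Proposition~\ref{prop:layer-density} then forces a sampled point of $\hL_{0,j-1}$ into that ball, and this point is the desired $\xp$.

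First I would unpack the concentric-submanifold definition. Since $\x\in\hL_{0,j}\subseteq\Layer_{0,j}=\X_{0,j}\setminus\X_{0,j-1}$ and $f_{0,j-1}\equiv\Delta$ by Assumption~\ref{assume:last-proof}, the defining relation $\X_{0,j}=\{x\in\M:\dis_{\M}(x,\X_{0,j-1})<\Delta\}$ gives $\dis_{\M}(\x,\X_{0,j-1})<\Delta$. Let $\y$ be a point of $\partial\X_{0,j-1}$ realizing this distance (it exists since $\M$ is compact), so $\dis_{\M}(\x,\y)<\Delta$, and, because all widths equal $\Delta$, $\dis_{\M}(\y,\X_{0,j-2})=\Delta$, i.e. $\y$ sits on the outer boundary of $\Layer_{0,j-1}$. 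Now push inward: let $\y'$ be the point at geodesic distance $10\epsilon$ from $\y$ along a minimizing geodesic from $\y$ to $\X_{0,j-2}$, where $\epsilon\le c_3\Delta$ with $c_3<0.001$ is the scale fixed in Assumption~\ref{assume:last-proof}; then $\dis_{\M}(\y',\X_{0,j-2})=\Delta-10\epsilon$, so $\y'\in\Layer_{0,j-1}$.

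Next I would verify the two properties of $B_{2\epsilon}(\y')$. Containment: by the reach distortion bound of Theorem~\ref{thm:distortion} applied to $\M$ (valid since $\epsilon\ll\tau$, giving a factor $1+o(1)$), any $z$ with $\|z-\y'\|<2\epsilon$ has $\dis_{\M}(z,\y')<3\epsilon$, hence $\Delta-13\epsilon<\dis_{\M}(z,\X_{0,j-2})<\Delta-7\epsilon$, which puts $z$ in $\X_{0,j-1}\setminus\X_{0,j-2}=\Layer_{0,j-1}$; thus $B_{2\epsilon}(\y')\cap\M\subseteq\Layer_{0,j-1}$. Proximity: for any $z\in B_{2\epsilon}(\y')\cap\M$, $\|\x-z\|\le\dis_{\M}(\x,z)\le\dis_{\M}(\x,\y)+\dis_{\M}(\y,\y')+\dis_{\M}(\y',z)<\Delta+10\epsilon+3\epsilon<(1+13c_3)\Delta<1.1\Delta$. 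Finally I would count: because $\hL_{0,j-1}$ is a uniform sample of $n_{0,j-1}$ points from $\Layer_{0,j-1}$, the expected number of them lying in $B_{2\epsilon}(\y')\cap\M$ equals $n_{0,j-1}\,Vol\!\left(B_{2\epsilon}(\y')\cap\M\right)/Vol(\Layer_{0,j-1})$; Theorem~\ref{thm:vol-lower} bounds the numerator below by $\cos(\theta)^m V^m(2\epsilon)$ with $\theta=\arcsin(\epsilon/\tau)$ negligible, and the computation in the proof of Proposition~\ref{prop:layer-density} gives $n_{0,j-1}\ge c_2\,Vol(\Layer_{0,j-1})/V^m(2\epsilon)$ with $c_2>1$. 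Multiplying, this expectation is at least $c_2\cos(\theta)^m>1$ (the scale separation $\Delta\ll\bar\tau\ll\tau$ makes the $\cos(\theta)^m$ deficit smaller than $1-1/c_2$ for the relevant $m$), so in expectation at least one sampled point $\xp\in\hL_{0,j-1}$ lies in $B_{2\epsilon}(\y')$, and the proximity bound gives $\|\x-\xp\|\le1.1\Delta$.

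The main obstacle is the boundary geometry of the middle step: guaranteeing that a full radius-$2\epsilon$ Euclidean ball actually fits inside $\Layer_{0,j-1}$ near $\y$ while keeping every Euclidean-to-geodesic conversion honest close to the layer's faces. This is exactly where the generous separation of scales $\epsilon\le c_3\Delta\ll\bar\tau\ll\tau$ from Assumption~\ref{assume:last-proof} does the work, leaving ample slack in all of these estimates. The case $j=1$ is analogous, using $\X_{0,0}$ (which has reach $\bar\tau\gg\epsilon$) in place of $\X_{0,j-1}$, with no inner face to steer away from.
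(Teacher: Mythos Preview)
Your proof is correct and follows essentially the same approach as the paper: take the nearest boundary point of $\Layer_{0,j-1}$, push slightly inward to place a small ball entirely inside the layer, and combine Proposition~\ref{prop:layer-density} with Theorem~\ref{thm:vol-lower} to show the expected sample count in that ball is at least one. The only cosmetic difference is that the paper uses a $3\epsilon$-radius ball (gaining an extra $(3/2)^m$ factor of slack in the count), whereas you use radius $2\epsilon$ and rely directly on $c_2\cos(\theta)^m>1$.
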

\begin{proof}
This follows from our Proposition~\ref{prop:layer-density}. 
Consider the closest point of $\x$ in $\Layer_{0,j-1}$. Let this point be $\yp$. Consider a point $\ypp \in \Layer_{0,j-1}$ such that $\|\yp-\ypp\|=6\epsilon$ and the minimum distance between $\ypp$ and $\Layer_{0,j'}$ for $j' \ne j-1$ is at least $4\epsilon$. Such a point can always be found following the definitions of the layers (which have a width of $\Delta \ge 100\epsilon$. Consider the ball $B_{3\epsilon}(\ypp)$. Find $\ypp$ such that $B_{3\epsilon}(\ypp) \cap (\M \setminus \Layer_{0,j-1}) =\emptyset$. Furthermore, from Theorem~\ref{thm:vol-lower} we have $Vol(B_{3\epsilon}(\ypp) \cap \M) \ge 0.99 V^m(3\epsilon)$.
Then, the expected number of points in $B_{3\epsilon}(\ypp) \cap \hL_{0,j-1}$ can be written as 
\begin{align*}
&\mathbb{E} 
\left[ 
|\hL_{0,j-1} \cap Vol(B_{3\epsilon}(\ypp) |\right]
\ge n_{0,j-1} \cdot \frac{0.99V^m(3\epsilon)}{Vol(\Layer_{0,j-1})} 
\\& \ge 
\frac{c_2Vol(\Layer_{0,j-1})}{V^m(2\epsilon)} \cdot \frac{V^m(3\epsilon)}{Vol(\Layer_{0,j-1})} \quad \quad \quad  \text{From Proposition~\ref{prop:layer-density}}
\\& \ge  c_2 \frac{V^m(3\epsilon)}{V^m(2\epsilon)} \ge 1
\hspace{4em}
\end{align*}

Then, the distance of this point to $\x$ is upper bounded by 
$\Delta+9\epsilon \le 1.1\Delta$.

Here the lower bound of $0.99V^m(3\epsilon)$ follows from the fact that $\cos \Theta$ in Theorem~\ref{thm:vol-lower} gets very close to 1 when radius is much smaller than reach, which is the case here as $\epsilon \le 0.01\Delta \ll \tau_{i,j}$.

\end{proof}

That is, we have proven that in our model, if you consider a radius of $1.1\Delta$, you can always find a neighbor in the inner layer. Then, finally, we are left to calculate what is the maximum possible points in this radius in our configuration.

\begin{lemma}
\label{lem:choice-of-r}
Given our model and the conditions in Assumption~\ref{assume:last-proof}, there exists $r=\mathcal{O}(1)$ such that if we consider the $r$-nearest neighbors of any point in $\hL_{0,j}$, at-least one neighbor lies in $\hL_{0,j-1}$. If our FlowRank algorithm is initialized with such an $r$, then all non-core points get a score of less than $1$.   
\end{lemma}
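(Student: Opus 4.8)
The plan is to take $r$ to be (a uniform high-probability upper bound on) the number of sample points lying in a Euclidean ball of radius $1.1\Delta$, observe that this quantity is a constant in the model parameters, and then combine Proposition~\ref{prop-1-delta} with the fact that such a ball always contains a point of the adjacent inner layer to conclude that the randomly ascending random walk of Algorithm~\ref{alg:RAWR} can take a step from \emph{every} non-core point. The score dichotomy then follows because the walk strictly ascends in density and hence cannot terminate at a non-core point.

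First I would bound $r_0 := \max_{\mathbf{p}} |B_{1.1\Delta}(\mathbf{p}) \cap \hat X|$. By Theorem~\ref{thm:distortion} and $1.1\Delta \ll \tau$, Euclidean and geodesic distances on $\M$ agree up to a $(1+o(1))$ factor at scale $1.1\Delta$, so, since each layer has geodesic width $\Delta$, a ball $B_{1.1\Delta}(\mathbf{p})$ meets only $\mathcal{O}(1)$ of the layers $\Layer_{i',j'}$ (a bounded number within each of the two clusters). For each layer met, the expected count of its samples inside the ball is $n_{i',j'}\cdot \mathrm{Vol}(B_{1.1\Delta}(\mathbf{p})\cap \Layer_{i',j'})/\mathrm{Vol}(\Layer_{i',j'})$; using the positive-reach volume upper bound $\mathrm{Vol}(B_{1.1\Delta}(\mathbf{p})\cap\M)\le (1+o(1))V^m(1.1\Delta)$ (the companion of Theorem~\ref{thm:vol-lower}) together with the density monotonicity $n_{i',j'}/\mathrm{Vol}(\Layer_{i',j'}) \le n_{0,0}/\mathrm{Vol}(\X_{0,0}) = c_2/V^m(\epsilon)$ from Assumption~\ref{assume:last-proof}, each layer contributes at most $2c_2(1.1\Delta/\epsilon)^m$, whence $\mathbb{E}\,r_0 = \mathcal{O}(c_2(\Delta/\epsilon)^m)$ — a constant once $m, \ell, c_2$ and the ratio $\Delta/\epsilon$ are viewed as fixed model parameters. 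Since $|B_{1.1\Delta}(\mathbf{p})\cap\hat X|$ is a sum of independent indicators, a Chernoff bound plus a union bound over the $n$ points upgrades this to a uniform bound $r_0 \le r$ with $r = \mathcal{O}(c_2(\Delta/\epsilon)^m)$ holding with probability $1-o(1)$ (taking $c_2$ a sufficiently large constant so $r = \Omega(\log n)$); alternatively the whole argument can be carried in expectation.

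With this $r$ fixed, take any non-core point $\x \in \hL_{i,j}$ with $j \ge 1$. Proposition~\ref{prop-1-delta} (which applies to every cluster by symmetry) produces a sample $\xp \in \hL_{i,j-1}$ with $\|\x-\xp\|\le 1.1\Delta$; as $B_{1.1\Delta}(\x)$ holds at most $r$ samples, $\xp \in N_r(\x)$, and its density satisfies $\Pi(\xp) = n_{i,j-1}/\mathrm{Vol}(\Layer_{i,j-1}) > C\cdot n_{i,j}/\mathrm{Vol}(\Layer_{i,j}) = \Pi(\x)$, strictly larger. Hence the while-condition of Algorithm~\ref{alg:RAWR} is met at $\x$ and the walk takes at least one step, which is the first assertion of the lemma. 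For the score bound, observe that along any realisation the walk strictly increases the density value at each step, so it cannot repeat a vertex and (finitely many density levels) must terminate; by the preceding paragraph it can never halt at a non-core point, so it halts at a core point $\x^+$ with $\Pi(\x^+) \ge \Pi(\xp) > \Pi(\x)$ after at least one strict ascent. Since this holds for every realisation, $z_\x = \mathbb{E}[{\sf RARW}(X,\Pi,r,\x)] > \Pi(\x)$, so the FlowRank score $\Pi(\x)/z_\x$ is strictly below $1$; with Lemma~\ref{lem:core-gets-1} (score $1$ on the cores) this also yields Theorem~\ref{thm: FlowRank}.

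I expect the main obstacle to be the counting estimate of the second paragraph: pinning down $r_0$ as a genuine constant requires a clean upper-bound volume estimate for $B_{1.1\Delta}(\mathbf{p})\cap\M$ to pair with the lower bound of Theorem~\ref{thm:vol-lower}, and one must be careful that $(\Delta/\epsilon)^m$ is to be read as a fixed model constant (it is pinned down by $n_{0,0}$ through Assumption~\ref{assume:last-proof}, so this is consistent only if the model parameters rather than $n_{0,0}$ are held fixed). A lesser subtlety is the concentration plus union-bound step needed for a high-probability — rather than merely in-expectation — conclusion, and verifying that Proposition~\ref{prop-1-delta} can indeed be invoked uniformly across all layers and both clusters.
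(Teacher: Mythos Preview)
Your proposal is correct and follows essentially the same approach as the paper: both arguments invoke Proposition~\ref{prop-1-delta} to guarantee a sample from the adjacent inner layer within radius $1.1\Delta$, then upper-bound the total number of samples in such a ball by $n_{0,0}\cdot V^m(1.1\Delta)/\mathrm{Vol}(\X_{0,0}) = \mathcal{O}\bigl((\Delta/\epsilon)^m\bigr)$ via the maximal (core) density, which is a constant since $m$ is fixed. Your version is more thorough---you spell out the companion volume \emph{upper} bound, the fact that only $\mathcal{O}(1)$ layers are intersected, and you give an explicit argument for the score-$<1$ claim via strict density ascent---whereas the paper's proof is a three-line in-expectation calculation that leaves the second assertion implicit; your caveats about the concentration step (needing $r=\Omega(\log n)$ for the union bound to succeed, versus falling back to the in-expectation statement the paper actually proves) are well-placed.
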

\begin{proof}
In Proposition~\ref{prop-1-delta} we have obtained an upper bound on the radius that ensures this behavior. We finally calculate the maximum expected number of points in this radius. Consider the densest region of $\X_0$, that is $\X_{0,0}$. Then, the number of points in a radius of $1.1\Delta$ can be upper bounded as
\[
\mathbb{E} 
\left[ 
| \hat{X} \cap B_{1.1\Delta}(\x)| \le 
n_{0,0} \cdot \frac{Vol( B_{1.1\Delta}(\x))}{Vol(\X_{0,0})}
\le c_3 \frac{V^m(1.1\Delta)}{V^m(2\epsilon)}
\right]
\]

Note that this value is exponential only in $m$, but as $m$ is a constant, we get a constant upper bound on $r$.

\end{proof}

Finally, we recall the Theorem from the main paper and combine the two results we obtained formally.

\begin{theorem*}[Restated Theorem~\ref{thm: FlowRank}]
\label{sec:theo_one}
Let data be generated from the ${\sf LCPDM}(2,\ell)$ model.
Assume we get the exact density of the points and expected value of randomly ascending random walk in the FlowRank algorithm. Let this be called the ideal FlowRank outcome. Then all the core points ($\hL_0$) get a score of $1$. Additionally, all non-core points get a score $<1$.   
\end{theorem*}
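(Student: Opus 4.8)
The plan is to obtain this statement as an immediate consequence of the two lemmas already proved, namely Lemma~\ref{lem:core-gets-1} (core points get score $1$) and Lemma~\ref{lem:choice-of-r} (non-core points get score $<1$), once these are glued together with a single, consistent choice of the neighborhood radius $r$. The first step I would take is to translate the quantity computed by FlowRank into a purely combinatorial condition on the density-ascending walk. In the ideal outcome the score of $\x$ is $\Pi(\x)/z_\x$, where $z_\x=\mathbb{E}[{\sf RARW}(X,\Pi,r,\x)]$ is the expected density of the local maximum reached by the randomly ascending random walk from $\x$. Since ${\sf RARW}$ only ever moves to a strictly denser neighbor, every trajectory terminates at a density $\ge\Pi(\x)$, hence $z_\x\ge\Pi(\x)$ and the score is always $\le 1$; moreover the score equals $1$ exactly when $\x$ admits no neighbor in $N_r(\x)$ of strictly larger density, and the score is $<1$ as soon as some $\x'\in N_r(\x)$ has $\Pi(\x')>\Pi(\x)$. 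So the theorem reduces to two local claims about $N_r(\x)$.

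Next I would fix $r=\mathcal{O}(1)$ to be the constant furnished by Lemma~\ref{lem:choice-of-r}. For the core half: Lemma~\ref{lem:core-gets-1} is established for every constant radius (its bound only uses $\sup\dis(\Layer_{0,0},\X,r)\le\sup\dis(\Layer_{0,0})$, which holds for any $r=\mathcal{O}(1)$), so it applies with this particular $r$ and tells us that no point of $N_r(\x)$ lies in $\X_1$ when $\x\in\Layer_{0,0}$. Since $k=2$, all of $N_r(\x)$ lies in $\X_0$, and by the density hierarchy of ${\sf LCPDM}$ each such neighbor is either in $\Layer_{0,0}$, with density equal to $\Pi(\x)$, or in some outer layer $\Layer_{0,j'}$ with $j'>0$, with strictly smaller density. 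Hence $\max_{\x'\in N_r(\x)}\Pi(\x')=\Pi(\x)$, the walk cannot move, $z_\x=\Pi(\x)$, and the score is $1$; by the symmetric argument the same holds for $\Layer_{1,0}$, so every point of $\hL_0$ gets score $1$. For the non-core half: Lemma~\ref{lem:choice-of-r} guarantees that with this $r$, any $\x\in\hL_{i,j}$ with $j>0$ has at least one neighbor in $\hL_{i,j-1}$, and the model's density constant $C>1$ forces that neighbor to be strictly denser than $\x$. Thus a density-ascending step is available with positive probability, every trajectory ends at density $>\Pi(\x)$, so $z_\x>\Pi(\x)$ and the score is $<1$.

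The step I expect to be the main obstacle, and the one I would write out carefully, is precisely the reconciliation of the two radius requirements: Lemma~\ref{lem:choice-of-r} only asserts the existence of some constant $r$ making an inner-layer neighbor available, whereas the core argument needs that same $r$ to still be small enough to exclude cross-cluster neighbors. This is exactly why it matters that Lemma~\ref{lem:core-gets-1} is valid for all $r=\mathcal{O}(1)$ rather than for one specific value; instantiating it at the constant produced by Lemma~\ref{lem:choice-of-r} closes the loop, and I would make this instantiation explicit rather than leaving it implicit. A final remark worth including is that the whole argument lives in the ideal model---exact densities, exact walk expectations---so, unlike the practical FlowRank, no sampling-concentration estimate is needed; the statement is purely a structural consequence of the layer geometry and the monotone density hierarchy.
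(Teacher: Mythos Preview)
Your proposal is correct and follows essentially the same approach as the paper: the paper's own proof is little more than an invocation of Lemma~\ref{lem:core-gets-1} for the core half and Lemma~\ref{lem:choice-of-r} for the non-core half, exactly as you do. If anything, your write-up is more careful than the original, since you explicitly reconcile the two radius requirements by instantiating Lemma~\ref{lem:core-gets-1} at the constant $r$ produced by Lemma~\ref{lem:choice-of-r}, a point the paper leaves implicit.
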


\begin{proof}
First we recall that $\Layer_{i,0}$ and $\X_{i,0}$ are synonymous. In Lemma~\ref{lem:core-gets-1} we have shown that for any point in $\Layer_{i,0}$ if the cores are sufficiently separated then all points in it (the core points) get a score of $1$.

Next in Lemma~\ref{lem:choice-of-r} we show that under the additional density conditions of Assumption~\ref{assume:last-proof}, there exists an $r$ (that depends exponentially on the dimension of the underlying manifold $\M$, which is a constant) such that for each non-core point in $\Layer_{i,j}$, one of its $r$-nearest neighbor lies in $\Layer_{i,j-1}$ in expectation, and therefore the FlowRank score is less than $1$. This completes our proof.

\end{proof}

\subsection{Correctness of CoreSPECT framework given the correct layers.}

\begin{theorem*}[Clustering in the $\sf LCPDM$ model (Restated Theorem ~\ref{thm:main-sep}]
\label{sec:theo_three}
Let $X$ be $n$ datapoints generated from the ${\sf LCPDM}(2,\ell)$ model. Let us have an estimate of the density layers, given as $S_0, \ldots ,S_{\ell-1}$ such that $|S_j \cap \hL_j|=(1-f)|\hL_j|,j>0$. for a sufficiently small function $f=o(1)$ that is layer preserving. Then, applying a variant of K-Means to $S_0$ and expanding the clustering using a CDNN graph (with correctly chosen parameters) results in clustering with o(1) misclassification error rate on expectation.
\end{theorem*}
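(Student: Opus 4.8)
The plan is to induct on the layer index $j$, showing that Algorithm~\ref{alg:prop}, run on the estimated layers $S_0,\dots,S_{\ell-1}$, agrees with the ground-truth partition on all but an $o(1)$ fraction of the points it has processed through layer $j$; taking $j=\ell-1$ yields the theorem. We work with $k=2$ and write $i$ for the true cluster of a point. The object carried through the induction is an invariant on the membership vectors $\hat C$ maintained by Algorithm~\ref{alg:prop}: \emph{for every correctly processed $\mathbf v$ that truly lies in cluster $i$, the $i$-th coordinate of $\hat C(\mathbf v)$ is its strict minimum}. For the base case $j=0$ we have $\hat C=\mathcal C$, the vector of distances to the two K-Means centroids, so the invariant is exactly that each core point is closest to its own centroid; since the cores are Euclidean-separated (Assumption~\ref{assume:cluster-distance}(1)) and $\mu n_0\le n_1$, Proposition~\ref{prop:k-means-separate} gives, with probability $1-o(1)$, that the chosen centroids fall one in each core, after which $\mu<1/2$ makes every core point strictly closer to its own centroid, and the invariant holds on $S_0$.

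For the inductive step, fix $j\ge 1$ and a point $\mathbf x\in S_j$ truly in $\hL_{i,j}$, and set $\rho=\dis(\mathbf x,\Layer_{i,j-1})\le\Delta$, the common layer width (Assumption~\ref{assume:last-proof}). The crux is the claim that for a suitable constant $t=\mathcal O(1)$ and for all such $\mathbf x$ with $\rho\ge K\epsilon$ ($K=K(t,m)$ a constant), every one of $\mathbf x$'s $t$ CDNN-neighbours (its $t$ nearest points in $\cup_{j'<j}S_{j'}$) lies in cluster $i$. On one hand, the volume estimates behind Propositions~\ref{prop:layer-density} and~\ref{prop-1-delta} together with the reach-based ball-volume bound of Theorem~\ref{thm:vol-lower} show that a Euclidean ball of radius $\mathcal O(\rho)$ about $\mathbf x$ already contains far more than $t$ sampled points of $\hL_{i,j-1}$, so at least $t$ of them survive the $(1-f)$-retention into $S_{j-1}$. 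On the other hand, Assumption~\ref{assume:cluster-distance}(2) forces every point of $\Layer_{i',j-1}$ to lie at distance $\ge\rho/\delta$ from $\mathbf x$; and by the concentric sub-manifold structure any deeper layer $\Layer_{i',j'}$ with $j'<j-1$ is geodesically shielded behind $\Layer_{i',j-1},\dots,\Layer_{i',j'+1}$, so transporting from geodesic to Euclidean distance on the flat ambient manifold $\M$ via the distortion bound of Theorem~\ref{thm:distortion} places every sampled point of $\cup_{j'<j}\Layer_{i',j'}$ at Euclidean distance $\Omega(\rho/\delta)$ from $\mathbf x$. For $\delta$ small enough — a quantitative condition in the same spirit as the one in Lemma~\ref{lem:core-gets-1} — the cluster-$i$ radius $\mathcal O(\rho)$ is strictly smaller than the cluster-$i'$ radius $\Omega(\rho/\delta)$, proving the claim. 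Given the claim, Algorithm~\ref{alg:prop} computes $\hat C(\mathbf x)=\sum_{\mathbf v}W(\mathbf x,\mathbf v)\,\hat C(\mathbf v)$ as a nonnegative combination of vectors all having their strict minimum at index $i$ (inductive invariant), hence so does $\hat C(\mathbf x)$, $\mathbf x$ is assigned to cluster $i$, and the invariant is preserved.

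It remains to account for the exceptional points and for error propagation. At layer $j$ the points left uncovered by the claim are (a) those within $K\epsilon$ of the inner boundary of their own layer, occupying only an $\mathcal O(K\epsilon/\Delta)$ fraction of each layer's volume — which is $o(1)$ since $\epsilon=\Theta(n^{-1/m})\to 0$ while $\Delta$ is fixed — and (b) the $f\,|\hL_j|=o(1)\,|\hL_j|$ points the (layer-preserving) estimate misplaces into a wrong block. For propagation, $\M$ has large reach and is therefore locally doubling with doubling dimension $\Theta(m)=\mathcal O(1)$, so each point of $\cup_{j'<j}S_{j'}$ is a CDNN-neighbour of only $\mathcal O(t)$ later points; hence, with $\epsilon_j$ the expected misclassified fraction among points processed through layer $j$, one gets a recursion $\epsilon_j\le \mathcal O(K\epsilon/\Delta)+f+\mathcal O(t)\,\epsilon_{j-1}$ with $\epsilon_0=o(1)$ from the K-Means step. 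Since $\ell\le m=\mathcal O(1)$ and $t=\mathcal O(1)$, unrolling gives $\epsilon_{\ell-1}=\mathcal O(t)^{\ell}\cdot o(1)=o(1)$, the asserted bound.

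The main obstacle I anticipate is the geometric claim in the second paragraph: ruling out as near neighbours \emph{all} of the other cluster's inner layers $\Layer_{i',j'}$, $j'<j$ — not just $\Layer_{i',j-1}$ — from the single consecutive-layer comparison in Assumption~\ref{assume:cluster-distance}(2). This is where the concentric structure and the reach/distortion machinery must be pushed hardest, and where the constants ($t$, $K$, and how small $\delta$ and $\epsilon/\Delta$ need to be) must be tracked with care; certifying that the near-boundary sliver and the $f$-fraction of misplaced points are provably $o(1)$, and that the $(1-f)$-retention never removes all of a point's would-be cluster-$i$ neighbours, is a second, more routine but still delicate, matter.
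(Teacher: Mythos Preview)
Your proposal follows the same skeleton as the paper's proof: induct on the layer index, use the K-Means guarantee (Proposition~\ref{prop:k-means-separate}) for the base case, appeal to the layer-wise alignment assumption for the inductive step, and bound error propagation by a constant per-layer blow-up factor that, over $\ell=\mathcal O(1)$ layers, keeps the total error at $o(1)$. The paper makes one simplification you do not: it fixes $t=1$, which is legitimate since the theorem permits ``correctly chosen parameters.'' With $t=1$ there is no need to carry an invariant on the full vector $\hat C$ (the label is just inherited from the single neighbour), no need to argue that \emph{all} $t$ neighbours lie in the correct cluster, and the near-boundary sliver you isolate becomes a non-issue the paper simply does not discuss. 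For propagation the paper uses the direct ball-volume bound $C_m=c_2 V^m(1.1\Delta)/V^m(\epsilon)$ rather than your doubling-dimension argument, but these are interchangeable.

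On the point you flag as the main obstacle --- ruling out the other cluster's \emph{deeper} layers $\Layer_{i',j'}$, $j'<j-1$, as near neighbours --- you are more explicit than the paper. The paper's argument here is terse: it invokes Proposition~\ref{prop-1-delta} to get a same-cluster sampled neighbour within $1.1\Delta$ and then bounds the misclassified set at layer $j+1$ by the points within $1.1\Delta$ of an already-bad point, without separately verifying that the \emph{nearest} neighbour is in the correct cluster when it happens to fall in $\Layer_{i',j'}$ for $j'<j-1$. Your geodesic-shielding plus Theorem~\ref{thm:distortion} route is a reasonable way to close that gap; just be aware that the paper does not spell it out, so if you want to match the paper's level of rigour the $t=1$ shortcut together with the $1.1\Delta$-ball accounting already suffices.
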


We prove this with a minimal adjustment. We have shown in Lemma~\ref{lem:core-gets-1} that all core points get a score of $1$ and all non-core points get less than $1$ by FlowRank. Therefore we assume $S_0=\hL_0$. Furthermore, we simply set $t=1$ for the proof. The proof is then as follows. 

\begin{proof}
First from Proposition~\ref{prop:k-means-separate} we know that the $\log n$-attempt single-round K-Means separates the core with probability $1-o(1)$. We continue conditioned on this scenario.

Now, consider $S_1$. We know $|S_1 \cap \hL_1|=(1-f)|\hL_1|$. Furthermore, as $S$ is layer-preserving, we assume $|S_1 \cap \hL_3|= 0$. Otherwise, $S_1$ contains all points in $\hL_1$ and $\hL_2$ and we can treat them as a single-layer in our initial model definition (albeit with a more complex analysis). 

Then, for all points in $\hL_1$, we know the following. Let $\x \in \hL_1: \x \in \X_0$. 
Then $\min_{\xp \in \Layer_{0,0}} \|\x-\xp\| \le \delta \cdot \min_{\xp \in \Layer_{1,0}} \|\x-\xp\|$. This implies if we apply our layer-expansion with one neighbor, all such points will be correctly classified. 

Assume that in the worst case scenario we misclassify each $\x \in S_1 \cap \M \setminus \hL_1$, which is $\le f(n)$ many points, with $n$ being the total number of points. Lets call this set $E_1$.

Then, when considering the points in $S_2$, let's consider what is the criteria for this point to be correctly clustered. Consider $\x \in S_2 \cap \hL_2$. We know that on expectation there exists a point $\xp \in \hL_1$ such that $\|\x-\xp\| \le 1.1\Delta$ (from Proposition~\ref{prop-1-delta}). If $\xp$ is either correctly classified or is present in $S_1$, then $\x$ is correctly clustered. 

Therefore, the set of points incorrectly clustered from $S_2$ is upper bounded by the set of points that is within $1.1\Delta$ distance of a misclassified point or a point in $\hL_1$ that is not in $S_1$. Additionally, any point in $S_2$ that is not from $\hL_2$ may get incorrectly clustered.

Here, we note that the maximum number of points in the $1.1\Delta$-radius of any point is upper bounded by $\frac{ n_{0,0} V^m(1.1\Delta)}{Vol(\Layer_{0,0})}$ assuming they are all from maximum density points.
This can be simplified as 
\[
\frac{ n_{0,0} V^m(1.1\Delta)}{Vol(\Layer_{0,0})} \le 
\frac{c_2V^m(1.1\Delta)}{V^m(1.1\epsilon)} \quad \quad \text{[From Condition 2 of Assumption~\ref{assume:last-proof}]}
\]
which we upper bound as the constant $C_m$ (as $m$ is fixed).  Then, the total number of misclassified points in $S_2$ is upper bounded by
\begin{align*}
|E_1| \cdot C_m + |S_2 \setminus \hL_2| \cdot C_m +
|\hL_1 \setminus S_1| \cdot C_m = \mathcal{O}(f(n))
\end{align*}

We continue this for $\ell$ layers. Assume $E_j$ points have been misclassified up to layer $j$. Then the number of misclassified points in $S_{j+1}$ can be upper bounded as 
$|E_j| \cdot C_m + |S_{j+1} \setminus \hL_{j+1}| \cdot C_m +
|\hL_{j} \setminus S_j| \cdot C_m$. Now, if we induct on $|E_j|=\mathcal{O}(f(n))$ then we get $|E_{j+1}|=\mathcal{O}(f(n))$. This implies that 
$|E_{\ell}|=\mathcal{O}(f(n))=o(n)$. 

This implies we finish clustering all the points with $o(n)$ many misclassifications, which leads to an $o(1)$ misclassification error rate, completing our proof.

\begin{comment}
all points in $S_2 \cap \hL_2$ will be correctly clustered that is \emph{not} within a $1.1\Delta$-distance of a misclassified point. This is because, all the other points will have a neighbor of its own inner layer within $1.1\Delta$-distance with high probability. Then, the expected misclassification in this round is that points in $S_2$ that are not in $\hL_2$ (lets call this $E_{2,1}$ and all the points in $1.1\Delta$ radius of a point in $E_1$, Let's call this $E_{2,2}$. Then, from our definition, 
$E_{1,2}=f(n)$ and $|E_{2,2}| \le |E_1| \cdot \frac{ n_{0,0} V^m(1.1\Delta)}{Vol(\L_{0,0})} \le
|E_1| \cdot \frac{c_2V^m(1.1\Delta)}{V^m(1.1\epsilon)}$. 
This is assuming worst case scenario that all the points in $E_1$ are in high density region.

As we have discussed $\frac{c_2V^m(1.1\Delta)}{V^m(1.1\epsilon)}= \mathcal{O}(1)$, so we get $|E_{2,2}|=\mathcal{O}(|E_1|)$. This implies that a total of $|E_{1,2}|+|E_{2,2}|=\mathcal{O}(f(n))$ points are misclassified in the second round. Now, we can continue this for $\ell$ layers as follows. 

Let there be $E_{j}: |E_{j}|=\mathcal{O}(f(n))$ many misclassified points at round $j$. Then, the total number of misclassifications is upper bounded by the number of points 
\end{comment}

\end{proof}

\paragraph{Discussion of runtime.}

Finally, we note that given the CDNN graph, the run time of our layer-expansion method is $n \cdot k \cdot t$.

\begin{theorem*}[Restated Theorem~\ref{prop: time}]
\label{sec:theo_two}
Given the CDNN graph $G^+_{t,S}$ and a clustering of $S_0$, the rest of the points can be clustered in $\mathcal{O}(n \cdot k \cdot t)$ time, which is linear in the number of the edges and the number of clusters in $G^+_{t,S}$. 
\end{theorem*}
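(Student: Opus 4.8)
The plan is a direct accounting of the work performed by Algorithm~\ref{alg:prop}, using the two structural facts that (i) each data point is assigned exactly once, in the layer that contains it, and (ii) the cost of assigning a point is controlled by its out-degree in $G^+_{t,S}$ and by the number of clusters $k$. I assume, as the statement does, that $G^+_{t,S}$ and its weight matrix $W$ are supplied; concretely I take $G^+_{t,S}$ in adjacency-list form, so that enumerating the (at most $t$) out-neighbors of a vertex costs $\mathcal{O}(t)$ and reading each weight $W(\mathbf{u},\mathbf{v})$ costs $\mathcal{O}(1)$.

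First I would bound the initialization: assigning every $\mathbf{x}\in S_0$ to $V_{\arg\min_i \mathcal{C}[\mathbf{x}]_i}$ and copying $\mathcal{C}(\mathbf{x})$ into $\hat{C}(\mathbf{x})$ costs $\mathcal{O}(k)$ per point, hence $\mathcal{O}(|S_0|\cdot k)$ overall. Then I would bound the work for a single non-core point $\mathbf{u}\in S_j$ with $j\ge 1$. By Definition~\ref{def: CDNN}, $\mathbf{u}$ has at most $t$ out-neighbors in $\cup_{j'<j} S_{j'}$, each carrying a $k$-dimensional membership vector $\hat{C}(\mathbf{v})$; forming the weighted sum $\hat{C}(\mathbf{u}) \gets \sum_{\mathbf{v}} W(\mathbf{u},\mathbf{v})\,\hat{C}(\mathbf{v})$ is therefore a combination of at most $t$ vectors in $\mathbb{R}^k$ and costs $\mathcal{O}(tk)$. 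Computing $k_{\mathbf{u}}=\arg\min \hat{C}(\mathbf{u})$ is an $\mathcal{O}(k)$ scan and inserting $\mathbf{u}$ into $V_{k_{\mathbf{u}}}$ is $\mathcal{O}(1)$ amortized, so each non-core point costs $\mathcal{O}(tk)$.

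Finally I would sum over the layers. Since $S_0, S_1, \ldots, S_{\ell-1}$ partition the $n$ points, $\sum_{j\ge 1}|S_j|\le n$, so the main double loop contributes $\sum_{j=1}^{\ell-1}\sum_{\mathbf{u}\in S_j}\mathcal{O}(tk)=\mathcal{O}(ntk)$; note that $\ell$ does not appear as an extra multiplicative factor because it merely indexes a partition of the same $n$ points. Adding the $\mathcal{O}(|S_0|k)=\mathcal{O}(nk)$ initialization gives a total of $\mathcal{O}(ntk)$. To phrase this in terms of the graph, each vertex contributes at most $t$ out-edges, so $|E(G^+_{t,S})|\le nt$ and the bound equals $\mathcal{O}\!\big(k\cdot|E(G^+_{t,S})|\big)$, i.e.\ linear in the number of edges and the number of clusters, as claimed.

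There is no substantial obstacle; the argument is pure bookkeeping. The only points that need care are the computational model (adjacency-list access to $G^+_{t,S}$ and $\mathcal{O}(1)$ access to $W$, both granted by the hypothesis) and the observation that the layer count $\ell$ is absorbed because $\{S_j\}$ partitions rather than multiplies the data. I would also remark that charging for the construction of $W$ from pairwise distances would add a term proportional to $|E(G^+_{t,S})|$ times the ambient dimension, but since the statement takes $W$ as given, that cost falls outside the claim.
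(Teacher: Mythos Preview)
Your proposal is correct and follows essentially the same approach as the paper's own proof: both argue that each of the $n$ points incurs $\mathcal{O}(tk)$ work to form the weighted sum of its at most $t$ neighbors' $k$-dimensional membership vectors and take the $\arg\min$. Your version is in fact more careful than the paper's, explicitly accounting for the initialization on $S_0$, the adjacency-list access model, and why the layer count $\ell$ does not introduce an extra factor.
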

\begin{proof}
All points in $S_0$ are already clustered. Starting with $S_1$, loop through every point $\mathbf{u}$ in $S_1$ and update the cluster membership vector $\hat{C}(\mathbf{u})$ and the cluster label $k_u$ by the following:  $\hat{C}(\mathbf{u}) \gets \underset{v \in N_{G^+}}{\sum} W(\mathbf{u},\mathbf{v}) \cdot \hat{C}(\mathbf{v}) $;
    \quad $k_u \gets {\sf arg}\min \hat{C}(\mathbf{u})$. Repeat the same procedure with $S_{j+1}$ once all points in $S_j$ are clustered. Here, for each point in $S_j$, we look at the $k$-length cluster membership vector 
    $\hat{C}(\mathbf{u})$ for some $t$ many neighbors, which takes $\mathcal{O}(t \cdot k)$ time. This makes the total runtime $\mathcal{O}( n \cdot k \cdot t)$.
    
\end{proof}

\subsection{Runtime Analysis of the Complete Algorithm}
\label{sec:runtime_analysis}
First, we go through $q$-NN generation and FlowRank. Here we generate $q$-NN graph on $n$ many $d$-dimensional points using HNSW. Let’s call this runtime HNSW($n$,$d$,$q$). Then, FlowRank is obtained by running $log \ n$ step random walks and then ascending random walks (that are truncated after a $log \ n$ step). This takes $O(n log^2n)$ including $O(log\ n)$ iteration of ascending random walk from each node to estimate random walk behavior. Then we run any clustering algorithm on the top $c$ fraction of nodes (the core nodes), which takes CLUST($c \cdot n$). Once the cores are extracted and clustered, we apply propagation.  Each propagation step requires HNSW($n,d,t$) $+\ ntk$ run-time, where k is the number of clusters output when clustering the core. Then, the total run-time can be written as $O(n log^2n)\ + \text{CLUST}(c\cdot n) + \ n\cdot t \cdot k \cdot l + l \cdot \text{HNSW} (n, d, max(q,t)) $ where $l$ is the number of layers. It is well known that HNSW$(n,d,max(q,t))$ operates approximately at $O(max(q,t)\cdot n logn)$
, so the total run-time is 
$O(n log^2n\ + \text{CLUST}(c\cdot n) + \ n\cdot t \cdot k \cdot l + l \cdot (q+t)\cdot nlogn)$. This is quite fast in practice, with our framework needing around a minute to cluster CIFAR-20 (50,000 points, 768 dimension, and 20 clusters).

\section{Experiments}
\label{app:exp}

\subsection{Description of datasets and experiments}
\label{sec: exp}

As we have described, we use a total of 15 datasets in this paper. We describe the details of the datasets here. We use 11 single-cell datasets, 2 bulk-RNA datasets, and two image datasets.

\paragraph{Single-cell datasets:}
We use all of the 11 datasets from \cite{mcpc}. The details of the datasets are as follows. 
\begin{table}[H]
    \centering
    \begin{tabular}{ccccc}
    \toprule
       Name &  \# of points & \# of  communities & Source\\ \midrule
       Baron\_Human &  8569 & 14 & \citep{single-cell-7data}\\
       Baron\_Mouse & 1886 & 13 & \citep{single-cell-7data} \\
       Muraro &  2122 & 9 & \citep{single-cell-7data} \\
       Segerstolpe & 2133 &  13 & \citep{single-cell-7data}\\
       Xin &  1449 & 4 & \citep{single-cell-7data}\\
       Zhengmix8eq &  3994 & 8 & \citep{single-cell-duo} \\
       T-cell dataset &  5759 & 10 & \citep{nature-medicine-t-cell} \\
       ALM &  10068 & 136 &  \citep{single-cell-ALM-VISP}\\
       AMB &  12382 & 110 &  \citep{single-cell-7data}\\
       TM &  54865 & 55 & \citep{single-cell-7data}\\
       VISP &  15413 & 135 & \citep{single-cell-ALM-VISP} \\
        \bottomrule
    \end{tabular}
    \caption{Details of the single-cell RNA-seq datasets}
    \label{tab:dataset-details}
\end{table}

\paragraph{Bulk-RNA datasets:}
For the bulk-RNA datasets, we take patient samples from the The Cancer Genome Atlas~\cite{TCGA}. This was part of a comprehensive project launched by the ational Human Genome Research Institute (NHGRI) in 2006 to catalog genetic mutations responsible for cancer using genome sequencing and bioinformatics. The project analyzed samples of around 11,000 tumor samples across 33 cancer types (which we use as the ground-truth cluster identity). We use two modalities of datasets, mRNA and micro-RNA, which form the two bulk-RNA datasets that we use.

For the single-cell and bulk-RNA datasets, we first log-normalize it and then apply PCA dimensionality reduction of dimension $\min\{50,\text{\# of ground truth clusters}\}$, which is a standard pipeline in the single-cell (and genomics) analysis literature~\citep{single-cell-duo}.

\paragraph{Image datasets:}
For image datasets, we use the test-set of the popular MNIST~\cite{mnist-dataset} and Fashion-MNIST~\cite{fashion-mnist} dataset, as well as the popular CIFAR datasets. Following~\cite{shinde2024geometric}, we use the ViT embeddings for CIFAR 10, 20, and 100. Additionally, for CIFAR 10 we use both ViT-Large and ViT-Base embeddings.

\subsection{Comparison algorithms}

{\bf Density-based clustering}
As we have discussed, we focus on density based and manifold based clustering. For density based clustering, we implemented HDBSCAN~\cite{hdbscan}. We also implemented DBSCAN \cite{dbscan} and OPTICS \cite{ankerst1999optics} but were unable to use them as benchmarks, as they marked most points as outliers for most datasets. We also uesd three density peak clustering algorithms~\cite{DensityPeak,ADPClust,PECANN}.

{\bf Manifold-based clustering}
For Manifold clustering, we first looked at the scikit-learn spectral clustering~\cite{von2007tutorial} implementation. We found the default setting of rbf kernel took prohibitively long time and also did not produce good quality cluster, and therefore we did not include it in our benchmarks. In comparison, the K-nearest-neighbor-based kernel had much better performance, proving to be a competent benchmark. We also implemented two recent manifold clustering algorithms \cite{annalittle} \cite{trillos} with theoretical guarantees but the performances were suboptimal to spectral clustering by large margins, and thus were not reported in detail in this paper. It remains as future steps to analyze the underlying cause of their ineffectiveness on real world data.

{\bf Further comparisons}
Finally, we note that design of clustering algorithm is a very extensive field with new clustering algorithms being designed. Especially, both the areas of density-peak clustering and manifold clustering are very active. We aim to test our framework both on and against more such algorithms in future.

\subsection{Performance of CoreSPECT on small noiseless datasets}
\label{sec:small-datasets}

\begin{figure}[t]
    \centering
    \includegraphics[width=\linewidth]{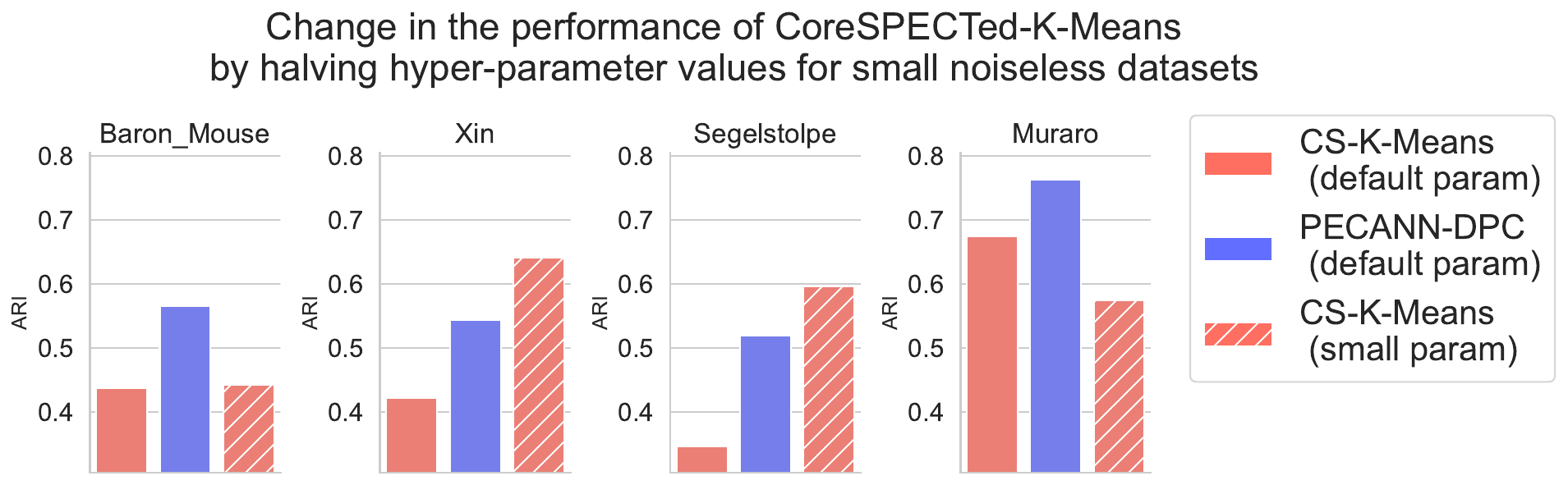}
    \caption{Enhancement in the performance of CoreSPECTed-K-Means by selecting smaller hyper-parameters for small noiseless datasets}
    \label{fig:small}
\end{figure}

In the main paper, we noted that in the four smallest datasets (Baron Mouse, Xin, Segerstolpe, and Muraro) our performance lacked behind that of Density based clustering (especially PECANN-DPC). Here we explore this in more detail. First, we note the ARI of CoreSPECTed-K-Means and PECANN-DPC for the four datasets. 

In fact, PECANN-DPC has noticeably higher ARI than our method for all four datasets. First we note that all of these clusters are well-separated. This can be observed as follows. If we look at any pair of clusters, all the nearest neighbors of any point are from the same cluster. This implies, the main hurdle of ``hard-to-separate-peripheral points'' is not present in these datasets. However, we still want to better understand if the performance of our framework can be improved in such cases. In this direction, we observe two interrelated reasons.

\begin{enumerate}
    \item For all of these datasets, the cores (top 10\% of the points) selected by FlowRank (with default parameter) sometimes does not include any points from the smallest (such as less than 20 points) clusters. 

    \item In our default setting, we choose $q=40,r=20,t=20$. The parameter $r$ decides how many neighbors of a point is looked at for the ascending random walk step. If $r$ is larger than the size of a ground truth cluster, then there is a high chance \emph{all the points} from that cluster will have a low FlowRank value. Similarly, in our layer-expansion step, we created a $t$-regular CDNN graph. Even if the  core has points from a small ground truth cluster that is well separated in the core-clustering step, a large $t$ can incorrectly assign non-core points from these small clusters.  
\end{enumerate}

Improving on the first point falls within the overarching direction of coming up with a better core-extraction algorithm, and we believe this is a very important question. With regards to the second point, we observe that our hyper-parameters can be changed slightly to get better results on these small datasets. As an example, we run CoreSPECTed-K-means with the hyper-parameters ($q=20, r=10,t=10$), that is dividing every value with $2$. The comparative performance of CS-K-Means w.r.t. the changed parameters are shown in Figure~\ref{fig:small}.

As we can observe, for 2 out of 4 datasets, we have considerable improvements (more than 50\%), only 2\% improvement for Baron-Mouse, and the ARI value for Muraro goes down by approximately 20\%. This demonstrates the potential of choosing parameters in a more principled way. 

However, this comparison may seem unfair as we are only using the default parameters for PECANN. In this direction, to understand the potential of the different clustering algorithms as well as our CoreSPECT framework, we report the best clustering performance for each dataset with the best hyper-parameters chosen from a grid search. Note that we do this in a supervised manner for all the clustering algorithms. The goal of this Section is to simply present the highest accuracy that can be achieved with the method, which is more of an indicator of how well the underlying mechanisms of the algorithms relate to/fit the underlying structures of the dataset.

\section{Limitations and Future Directions}
\label{app:limitations}

We conclude with a discussion on our limitations.

\begin{enumerate}[wide, labelindent=1pt]

%\item The main strength of the paper is also its primary limitation. We exploit a correlation in the geometry-density structure of the data, but this implies our framework is only applicable to such datasets. While we observe our framework is consistently applicable on large and poorly separated datasets, a notion for estimating when data fits our model is crucial for real-world application. 

\item Improving the core-extraction step is an important problem, as previously noted. We use the order provided by FlowRank and partition it into ten parts. Ideally, different data will have different numbers of layers for different clusters, and this phenomena needs to be explored further. We believe FlowRank is only a placeholder, and an improved algorithm can be achieved.

\item Next, we note that we have achieved the gains mentioned so far by fixing the other hyperparameters of our framework. However, we believe these hyperparameters can be more informatively selected that can further boost our performance. 

\item We have applied our framework to two algorithms. Applying and understanding the effect of our framework on different algorithms and different kind of embeddings is necessary to better understand the scope of application. 
 
    \item Data-based selection of the optimal hyper-parameter is an important direction. As we have observed, some of the optimal parameters depend on the internal geometry of the data. For example, the correct radius for choosing neighbors for ascending random walk depends on the width of the layers. A better inference of this underlying geometry may further raise confidence in our framework. 

    % \item In this paper, we have looked at four manifold clustering algorithm, but were able to only find one (the K-NN graph based spectral clustering) that provided competitive results. Comparing the performance of other manifold clustering algorithm is important to further understand the scope of the improvement our framework provides.  
    
\end{enumerate}

% Our framework and algorithms also stand to benefit from further theoretical insight. For example, we believe it will be possible to prove that FlowRank provides a layer-preserving ranking under our current assumptions. Similarly, better understanding the scope and limitations of the assumptions we make in the paper is also important. Here we note that the primary focus of this paper is not theoretical completion. Rather, we use theoretical insights of our model to create a generic framework which we apply to many datasets, on multiple methods, which we also compare with many clustering algorithms as well as provide various ablation studies that signify the importance and influence of the individual component of our framework. 

\end{document}